%% file: TMLR.tex
\documentclass[10pt]{article}
\usepackage[preprint]{tmlr}
\usepackage{graphicx}
\graphicspath{{../}}

\usepackage[utf8]{inputenc} 
\usepackage[T1]{fontenc}    
\usepackage{hyperref}       
\hypersetup{hidelinks}
\usepackage{url}            
\usepackage{booktabs}       
\usepackage{amsfonts}       
\usepackage{nicefrac}       
\usepackage{microtype}      
\usepackage{xcolor}         
\usepackage{grffile}        

\usepackage{caption,subcaption}
\usepackage{bbm} 
\usepackage{framed}
\usepackage{amsmath}
\usepackage{amssymb}
\usepackage{mathtools}
\usepackage{amsthm}
\usepackage[capitalize,noabbrev]{cleveref}
\input{math_commands.tex}

\theoremstyle{plain}
\newtheorem{theorem}{Theorem}[section]

\newtheorem{lemma}[theorem]{Lemma}

\theoremstyle{definition}
\newtheorem{definition}[theorem]{Definition}

\theoremstyle{remark}
\newtheorem{remark}[theorem]{Remark}

\newcommand{\norm}[1]{\left\lVert#1\right\rVert}

\newcommand{\one}{\mathbbm{1}}

\renewcommand{\Pr}[1]{\mathbf{Pr}\left[ #1 \right]}

\title{A Spectral Theory of Normalized Corrected GNN Propagation}

\author{\name Qihan Chen  \email hunk.qi.chen@gmail.com\\
      \addr Center for Discrete Mathematics, Fuzhou University
      \AND
      \name Wei Li \thanks{Wei Li is the corresponding author (liwei@fafu.edu.cn).}\email liwei@fafu.edu.cn\\
      \addr School of Computer and Information Science, Fujian Agriculture and Forestry University
      \AND
      \name Meng Qin \email mengqin\_az@foxmail.com \\
       \addr Department of Strategic \& Advanced Interdisciplinary Research,\\
       Pengcheng Laboratory (PCL), Shenzhen, Guangdong, China
      \AND
      \name Jianfeng Hou \email jfhou@fzu.edu.cn\\
      \addr Center for Discrete Mathematics, Fuzhou University}

\def\month{06}
\def\year{2026}
\def\openreview{\url{https://openreview.net/forum?id=XXXX}}

\begin{document}

\maketitle

\begin{abstract}
{
We develop a spectral theory for \emph{normalized corrected GNN propagation}.
The object of study is the symmetric normalized adjacency with its
degree-stationary component removed, matching the normalization used by standard
GCN-style models while isolating the stationary direction most directly tied to
oversmoothing.  The central theoretical question is whether this corrected
normalized operator preserves class-discriminative signal after many propagation
layers.  Our main result is a high-probability exact-recovery theorem for the
binary Contextual Stochastic Block Model after \(k=O(\log n)\) propagation steps
in the dense polylogarithmic regime \(p\ge C\log^B n/n\), for any fixed
\(B>4\), under explicit graph-signal and feature-SNR conditions.  We also establish a multi-class partial recovery theorem showing
contraction toward class centers for most nodes.  Synthetic and real
node-classification experiments are included as empirical checks of the
theory's predicted dependence on depth, graph signal, and feature noise.
}
\end{abstract}

\section{Introduction}
{

Deep GNNs repeatedly apply graph propagation operators to node features
\citep{scarselli:gnn,kipf:gcn,HYL17,wu2019simplifying,gasteiger_predict_2019}.
For GCN-style architectures, the basic linear backbone is the symmetric
normalized adjacency \(D^{-1/2}AD^{-1/2}\)
\citep{kipf:gcn,chen2020simple,wu2019simplifying}.  This normalization is
essential in practice, but it also creates a precise spectral mechanism for
oversmoothing: repeated propagation amplifies the degree-stationary component
while attenuating directions that separate classes
\citep{li2018deeper,oono2020graph,keriven2022not,rusch2023survey,jin2025oversmoothing}.
Thus a central theoretical question is not only whether oversmoothing occurs,
but whether the normalized propagation operator can be corrected in a way that
provably preserves class information at logarithmic depth.

We study this question through the operator
\[
\widehat A
=
D^{-1/2}AD^{-1/2}
-
\frac{1}{\one^\top D\one}D^{1/2}\one\one^\top D^{1/2},
\]
which subtracts the degree-stationary rank-one component before propagation.
This operator-level viewpoint follows the broader spectral and graph-signal
processing perspective on graph filters
\citep{shuman2013emerging,balcilar2021analyzing,chien2021adaptive,goksu2025krawtchouknet}.
This is the normalized counterpart of the corrected convolution analyzed by
\citep{Wang2024Analysis} for unnormalized adjacency matrices.  The normalized setting has three characteristic features: the stationary vector is
degree-dependent and random, each entry contains nonlinear degree factors, and
powers of the residual matrix are tracked entrywise in addition to their
spectral-norm behavior.

This paper develops a spectral theory of this normalized corrected GNN
propagation operator under the Contextual Stochastic Block Model (CSBM)
\citep{DSM18,Lu:2020:contextual,pmlr-v139-baranwal21a,baranwal2023effects,dalle2024optimal}.
The main theorem gives exact recovery by a linear classifier after
\(k=O(\log n)\) propagation steps when \(p\ge C\log^B n/n\) for any fixed
\(B>4\), provided the graph signal and feature signal-to-noise ratio exceed
explicit thresholds.  The theorem characterizes when the stationary correction is
sufficient to preserve separability through logarithmic depth.  The proof
reduces the normalized operator to a corrected rank-one signal plus a residual
and then develops an entrywise Krylov bound for multi-layer residual powers.
The key technical device is a degree-truncated atom expansion combined with
decorated-walk counting, which separates true graph fluctuations from artifacts
introduced by random normalization.

The experiments conducted in this paper examine
the predicted dependencies on depth, graph signal, and feature noise. They
are consistent with reduced depth-dependent degradation on synthetic CSBM
instances and standard node-classification benchmarks. The main contributions are as follows. 
\begin{itemize}
    \item \textbf{A normalized corrected propagation model.}  We formulate the
    corrected GCN backbone by subtracting the degree-stationary component of
    \(D^{-1/2}AD^{-1/2}\), giving an explicit operator-level model of spectral
    correction in normalized GNNs.
    \item \textbf{Binary exact-recovery theorem.}  We prove binary CSBM exact
    recovery after \(k=O(\log n)\) normalized corrected propagation steps in the
    dense polylogarithmic regime \(p\ge C\log^B n/n\), with explicit
    graph-signal and feature-SNR conditions.
    \item \textbf{Entrywise residual-power theory.}  We prove worst-node
    multi-layer bounds for powers of the normalized residual despite random
    degree normalization, using atom expansions and decorated-walk counting.
    \item \textbf{Multi-class recovery theorem and empirical checks.}  We prove
    partial recovery for fixed \(L\)-class CSBMs and include empirical checks of
    the predicted depth dependence on synthetic and real node-classification
    datasets.
      \item \textbf{Empirical validation.}  Experiments on synthetic CSBM data and
    standard node-classification benchmarks support the theory and show that
    normalized spectral correction remains effective in deep GNN regimes. 
\end{itemize}
}

\section{Related Work}
{
\textbf{Message passing and normalized propagation.}
GNNs combine graph structure and node features through repeated local
aggregation \citep{scarselli:gnn,kipf:gcn,HYL17}.  Several influential models can
be viewed, at least in their linear propagation backbone, as applying normalized
graph filters before or between learned feature transformations
\citep{kipf:gcn,wu2019simplifying,gasteiger_predict_2019,chen2020simple}.  This
paper focuses on that propagation backbone rather than on architectural
components such as nonlinear activations, attention, or optimization.  The
specific object of study is the symmetric normalized operator after subtracting
its degree-stationary component.

\textbf{Oversmoothing and its spectral mechanisms.}
Oversmoothing was identified as a depth-dependent loss of class information in
message passing \citep{li2018deeper}; subsequent theory shows that repeated
propagation can drive node representations toward low-dimensional or stationary
subspaces
\citep{oono2020graph,cai2021graph,Hou2020Measuring,keriven2022not,adam2024almost,rusch2023survey}.
Many methods attempt to mitigate this behavior through residual connections,
normalization, edge dropping, diffusion design, or signed/high-frequency
components
\citep{xu2018representation,Zhao2020PairNorm,chen2020simple,rong2019dropedge,chamberlain2021grand,scholkemper2025residual,wang2026a,wang2024mamba}.
Our contribution is complementary: we do not introduce a new trained
architecture, but analyze an explicit correction of the propagation operator and
prove when it preserves separability.

\textbf{Graph spectral filters and correction.}
Graph signal processing interprets message passing as filtering over the graph
spectrum \citep{shuman2013emerging}.  Early spectral GNNs and localized
polynomial filters established this operator viewpoint
\citep{bruna2014spectral,defferrard2016convolutional,balcilar2021analyzing}.
Learnable or adaptive filters, including generalized PageRank and polynomial
constructions, have been proposed for heterophily or oversmoothing
\citep{chien2021adaptive,goksu2025krawtchouknet}.  These works motivate the
operator-level viewpoint, but most are algorithmic or empirical.  Here the
filter is fixed by a correction principle: remove the rank-one stationary
component and analyze the resulting normalized residual through high-probability
spectral and entrywise bounds.

\textbf{CSBM theory and the closest prior work.}
The CSBM provides a controlled model in which graph structure and noisy node
features jointly determine classification performance
\citep{DSM18,Lu:2020:contextual,pmlr-v97-mehta19a,pmlr-v139-baranwal21a,maskey2022generalization,baranwal2023effects,baranwal2023optimality,maskey2024generalization,duranthon2025statistical,dalle2024optimal,wang2025graphattention}.
The closest result to ours is the corrected convolution theory of
\citep{Wang2024Analysis}, which proves recovery guarantees for an
\emph{unnormalized} corrected adjacency operator.  Our setting differs in the
normalization used by standard GCNs: \(D^{-1/2}AD^{-1/2}\) has a random
degree-weighted stationary vector, and its entries contain nonlinear degree
denominators.  These features require the atom expansion and decorated-walk
counting developed below; they are not consequences of the unnormalized
analysis.
}

\section{Preliminaries}
{
We use the following notation throughout this paper.  For \(n\in\mathbb N\), let
\([n]=\{1,\ldots,n\}\).  The all-ones vector is denoted by \(\one\), and
\(e_i\) denotes the \(i\)-th standard basis vector.  For a vector \(x\), let
\(\|x\|\) and \(\|x\|_\infty\) denote its Euclidean and infinity norms.  For a
matrix \(M\), let
\[
\|M\|=\sup_{\|x\|=1}\|Mx\|,
\qquad
\|M\|_F=\left(\sum_{i,j}M_{ij}^2\right)^{1/2}
\]
denote the operator and Frobenius norms. We also use the induced \(\ell_\infty\to\ell_\infty\) matrix norm
\[
\|M\|_{\infty\to\infty}
:=\sup_{\|x\|_\infty=1}\|Mx\|_\infty
=\max_i\sum_j |M_{ij}|,
\]
that is, the maximum absolute row-sum norm.

Let \(G=(V,E)\) be an undirected unweighted graph with \(V=[n]\), adjacency
matrix \(A\), degrees \(d_i\), degree matrix \(D=\operatorname{diag}(d_1,\ldots,d_n)\),
and average degree
\[
d:=\frac{2|E|}{n}.
\]
We use the standard conventions \((D^{-1/2})_{ii}=0\) when \(d_i=0\), and
\(d=(\one^\top D\one)/n\), so \(d^{-1}=n/(\one^\top D\one)\) when
\(\one^\top D\one>0\), and \(d^{-1}=0\) when \(d=0\).
For GCN-style propagation, the symmetric normalized adjacency is
\(D^{-1/2}AD^{-1/2}\).  The dominant stationary component of this operator is
not the uniform vector but the degree-weighted vector \(D^{1/2}\one\).  We
therefore study the corrected normalized propagation matrix
\begin{equation}\label{eq:graph_convolution}
\widehat A
=
D^{-1/2}AD^{-1/2}
-
\frac{1}{\one^\top D\one}
D^{1/2}\one\one^\top D^{1/2}.
\end{equation}
For comparison with the unnormalized theory of \citep{Wang2024Analysis}, we also
write
\[
\widetilde A
=
\frac1d A-\frac1n\one\one^\top .
\]
Both operators remove the leading stationary direction before repeated
propagation.  The normalized operator \(\widehat A\) is the one used in our main
results and experiments.
}

\section{Main Results}

{
We now state the main spectral consequences of normalized corrected
propagation.  The CSBM \citep{duranthon2025statistical} adds Gaussian node features
to a planted block model, making it possible to track three quantities in the
same theorem: graph signal, feature noise, and propagation depth.  Our results
show that, after removing the degree-stationary component, the normalized
operator behaves like a low-rank class-signal operator plus a residual whose
multi-layer action remains controlled.
}

\begin{definition}
\label{def:csbm}
The CSBM, denoted $\mathcal{CSBM}(n, m, p, q, \mu, \nu, \sigma)$, is defined as follows. Let $V = [n]$ be the set of vertices (with $n$ even), partitioned into two disjoint communities $S$ and $T$ of equal size $|S| = |T| = n/2$. 
    Let $y \in \{-1, 1\}^n$ be the community indicator  vector by setting $y_i = 1$ if $i \in S$ 
    and $y_i = -1$ if $i \in T$.
    The model generates a random graph $G=(V, E)$ and a feature matrix $X \in \mathbb{R}^{n \times m}$ 
    according to the following processes:
    \begin{enumerate}
        \item \textbf{Graph Generation:} For every pair of distinct vertices $i, j \in V$, 
        the edge $(i,j)$ is included in $E$ independently with probability:
        \[
            \mathbb{P}((i,j) \in E) = \begin{cases} p & \text{if } y_i = y_j \quad (\text{intra-class}), 
                \\ q & \text{if } y_i \neq y_j \quad (\text{inter-class}). \end{cases}
        \]
        \item \textbf{Feature Generation:} For each vertex $i \in V$, the feature vector 
        $x_i \in \mathbb{R}^m$ (the $i$-th row of $X$) is drawn independently from a Gaussian mixture:
        \[
            x_i \sim \begin{cases} \mathcal{N}(\mu, \sigma^2 I_m) & \text{if } i \in S, 
                \\ \mathcal{N}(\nu, \sigma^2 I_m) & \text{if } i \in T. \end{cases}
        \]
    \end{enumerate}
\end{definition}

For $p>q>0$, set $\gamma(p,q):=\frac{p-q}{p+q}$. In \citep{Wang2024Analysis}, proved that 

\begin{theorem}[Exact Recovery for \(\widetilde A\)~\citep{Wang2024Analysis}]
\label{theorem:exactly-recovery-k-unnormalized}
Suppose we are given a 2-block $m$-dimensional CSBM with parameters $n,p>q, \mu, \nu, \sigma$ satisfying $\gamma(p,q) \geq \Omega\left(k\sqrt{\frac{\log{n}}{np}}\right)$ and $p \geq \frac{\log^3{n}}{n}$. Then after $k = O(\log{n})$ rounds of graph convolution with $\Tilde{A}$, our data is linearly separable with probability $1-n^{-\Omega(1)}$ if
\begin{align*}
    \frac{\norm{\mu - \nu}}{\sigma}\geq \Omega\left(\max\left(\sqrt{\frac{\log{n}}{n}}, \; \Big(\frac{C}{\gamma\sqrt{np}}\Big)^k\sqrt{\log{n}} \right)\right)
\end{align*}
where $C$ is an absolute constant.
\end{theorem}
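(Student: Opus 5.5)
We prove the statement by splitting \(\widetilde A\) into a rank-one class-signal part and a residual, and then tracking how \(k\) powers of the residual act on the feature noise. Write \(A=\mathbb E A+E\) up to the diagonal correction. Here \(\mathbb E A=\frac{p+q}{2}\one\one^\top+\frac{p-q}{2}yy^\top\), minus \(pI\). With \(d\approx n(p+q)/2\) this gives
\[
\widetilde A=\gamma\,\tfrac1n yy^\top+R,\qquad R=\tfrac1d E+\Big(\tfrac{n(p+q)}{2d}-1\Big)\tfrac1n\one\one^\top+\tfrac{\gamma}{n}\Big(\tfrac{n(p+q)}{2d}-1\Big)yy^\top-\tfrac pd I .
\]
The \(\one\one^\top\) term is removed exactly by the correction. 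Since \(d\) concentrates at relative error \(O(\sqrt{\log n/(n^2p)})\), the error terms in \(R\) are negligible. Standard matrix concentration (e.g.\ Feige--Ofek, or Bandeira--van Handel for \(p\ge\log^3 n/n\)) gives \(\|E\|\le C\sqrt{np}\) with probability \(1-n^{-\Omega(1)}\), hence \(\|R\|\le C/\sqrt{np}\).

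Next, write the features as \(X=\tfrac12\one(\mu+\nu)^\top+\tfrac12 y(\mu-\nu)^\top+\sigma Z\), where \(Z\) has i.i.d.\ \(\mathcal N(0,1)\) entries. Applying \(\widetilde A^k\) gives three contributions.
\begin{itemize}
\item The \(\one\) component maps to \(R^k\one\), which is essentially \(0\), since \(\widetilde A\one=\tfrac1dA\one-\one\) is small entrywise.
\item The signal component becomes \(\widetilde A^k y=(\gamma\tfrac1n yy^\top+R)^k y\). Expanding, the leading term is \(\gamma^k y\). The cross terms are controlled using \(\|R\|\le C/\sqrt{np}\) and the hypothesis \(\gamma\ge\Omega(k\sqrt{\log n/(np)})\). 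This makes the relative perturbation \(\sum_j \binom kj (\|R\|/\gamma)^j\) small in the relevant norm.
\item The noise component is \(\sigma\widetilde A^k Z w\) for the classifier direction \(w=(\mu-\nu)/\|\mu-\nu\|\).
\end{itemize}
The classifier is \(x\mapsto \langle x,w\rangle\) minus the midpoint threshold. Separability then reduces to showing, for every node \(i\), that the signal \(\gamma^k\|\mu-\nu\|/2\) exceeds both the entrywise signal error \(|(\widetilde A^ky-\gamma^ky)_i|\,\|\mu-\nu\|/2\) and the noise \(\sigma|(\widetilde A^kZw)_i|\).

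For the noise, \(Zw\sim\mathcal N(0,I_n)\) is independent of the graph. Hence \((\widetilde A^kZw)_i\) is Gaussian with variance \(\|e_i^\top\widetilde A^k\|^2\). The row norm decomposes as follows. The rank-one part contributes \(\gamma^k/\sqrt n\), which yields the \(\sqrt{\log n/n}\) term. The residual part is bounded by \(\|e_i^\top R^k\|\le\|R\|^{k-1}\|e_i^\top R\|\). Since \(\|e_i^\top R\|\approx\sqrt{d_i}/d\approx 1/\sqrt{np}\), this part is at most \((C/\sqrt{np})^k\). A union bound over the \(n\) nodes gives a \(\sqrt{\log n}\) factor. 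Dividing by the signal \(\gamma^k\) produces exactly the threshold \(\max(\sqrt{\log n/n},(C/(\gamma\sqrt{np}))^k\sqrt{\log n})\).

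The main obstacle is the entrywise (\(\ell_\infty\)) control of the signal term \(\widetilde A^k y-\gamma^k y\). Spectral norm bounds give only an \(\ell_2\) error of size \(\sqrt n\cdot k\|R\|\gamma^{k-1}\), which is too weak to hold at every node. To overcome this, expand \((\gamma P+R)^ky\) with \(P=\tfrac1n yy^\top\). Each term has the form \(R^{j_0}P R^{j_1}\cdots\) applied to \(y\). We need \(\|R^jy\|_\infty\lesssim (C\sqrt{\log n/(np)})^j\), uniformly for \(j\le k\). We would first try a leave-one-out or row-by-row argument: \((Ey)_i\) is a sum of independent centered terms, so Bernstein's inequality gives \(|(Ey)_i|\le C\sqrt{np\log n}\), which handles \(j=1\). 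For higher powers, the plan is to bound \(e_i^\top R^j y\) by moment or walk-counting estimates of \(\mathbb E(e_i^\top E^jy)^{2s}\) with \(s\asymp\log n\). The density assumption \(p\ge\log^3n/n\) ensures that the walk-count combinatorics stay within \((C\sqrt{np})^j\) up to the needed logarithmic factors for \(j=O(\log n)\). Summing these bounds over the binomial expansion and using the lower bound on \(\gamma\) keeps the total relative error below \(1/2\).
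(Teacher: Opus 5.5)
Your architecture matches the one the paper uses for the normalized analogue (Theorem~\ref{theorem:exactly-recovery-k}): rank-one signal plus residual, an entrywise Krylov bound on \(R^jy\) from high moments and walk counting, and a word expansion of \((\gamma P+R)^k\). The step that fails as written is the noise bound. Since \(PR\neq 0\) (already \(Ey\neq 0\)), the row \(e_i^\top\widetilde A^k\) is not \(\gamma^k e_i^\top P+e_i^\top R^k\); it also contains every mixed word. For instance, the word \(R(\gamma P)^{k-1}\) has row \(\gamma^{k-1}\tfrac1n(e_i^\top Ry)\,y^\top\). With only the spectral bound \(|e_i^\top Ry|\le\sqrt n\,\|R\|\), its norm is controlled only by \(\gamma^{k-1}\|R\|\), with no factor \(n^{-1/2}\). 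Your union bound would then force \(\|\mu-\nu\|/\sigma\gtrsim\sqrt{\log n}/(\gamma\sqrt{np})\), which is much stronger than the claimed threshold when \(k\ge 2\) and \(p\to0\).

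The missing step is to use the entrywise estimate \(|e_i^\top R^ay|\le\theta^a\), with \(\theta=C\sqrt{\log n/(np)}\), inside the noise term as well. Then any word with at least one projector and \(j\) residual letters has row norm at most \(\gamma^{k-j}\theta^j/\sqrt n\). The mixed words together contribute at most \(\gamma^k\bigl((1+\theta/\gamma)^k-1\bigr)/\sqrt n\), which the \(\sqrt{\log n/n}\) term absorbs once \(k\theta/\gamma\) is small. This is how Lemma~\ref{lem:error-evolution-normalized} handles mixed words: it isolates the rightmost projector, \(Wg=\eta\,Us\,(s^\top R'^a g)\), with \(\|Us\|_\infty\le\eta^{k-j-1}\theta^{j-a}/\sqrt n\). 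So the Krylov estimate you defer to the end is needed for the noise analysis too, not only for the signal.

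Two further corrections. First, the \(\one\)-component is not \(R^k\one\). Since \(P\one=0\) you do get \(\widetilde A\one=R\one\), but \(\widetilde A^2\one=\gamma PR\one+R^2\one\). It is also not negligible. The vector \(\widetilde A\one=(d_i/d-1)_i\) has entries of order \(\sqrt{\log n/(np)}\), and it enters multiplied by \(\tfrac12(\mu+\nu)^\top w\), which \(\|\mu-\nu\|\) does not control. Take \(m=1\), \(\mu=M+\epsilon\), \(\nu=M\), with \(M\) large and \(\sigma\) tiny. Then degree fluctuations swamp the \(\pm\epsilon\gamma/2\) signal already at \(k=1\), and separability fails. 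You must therefore work under the centering \(\mu+\nu=0\) of Lemma~\ref{lemma:csbm-reduction}, which Theorem~\ref{theorem:exactly-recovery-k} assumes explicitly. Under centering this term vanishes identically and the threshold \(0\) works.

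Second, in the moment step, keep the total walk length of order \(\log n\) by taking \(s\asymp\log n/j\), as Lemma~\ref{lem:krylov-infty} does, rather than \(s\asymp\log n\) for every \(j\le k\). The walk-counting bound is only established for total length \(O(\log n)\) under \(p\ge\log^3 n/n\); compare the restriction \(ar\le A_0\log n\) in Lemma~\ref{lem:B7_detailed_further}. This choice also makes Markov's inequality give the per-step rate \(\sqrt{\log n/(np)}\) directly.

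Finally, \(d\) depends on \(E\), so \(E/d\) does not have independent entries. Run the moment computation for \(E/\bar d\) and pull out the scalar \((\bar d/d)^j=1+o(1)\). Treat the rank-one and diagonal pieces of \(R\) as separate letters in the word expansion.
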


The first main theoretical result is the normalized analogue.  It has the same
structure, but the random degree normalization requires a dense
polylogarithmic density assumption and a slightly stronger residual-control
term.

\begin{theorem}[Exact Recovery for \(\widehat A\)]
\label{theorem:exactly-recovery-k}
Fix any constant \(B>4\). There exist constants \(C_d,C_\gamma,C_0,C_*,C_{\rm snr},c_0>0\), 
depending only on \(B\), such that the following holds. Consider a balanced two-block 
\(\mathcal{CSBM}(n,m,p,q,\mu,\nu,\sigma)\) with \(p>q>0\), centered feature means \(\mu+\nu=0\). Let $\theta:=C\sqrt{\frac{\log n}{np}}$ and $\rho:=\frac{\theta}{\gamma}$, 
where $C>0$ is a sufficiently large absolute constant. Assume $p\ge C_d\frac{\log^B n}{n}$ 
and suppose $1\le k\le C_0\log n$ and $\gamma\ge C_\gamma k\sqrt{\frac{\log n}{np}}$ 
(where \(C_0\) is chosen no larger than the constant \(c\) in Lemma~\ref{lem:krylov-infty}), 
which equivalently implies $k\rho\le c_0$. Then, after $k$ rounds of normalized corrected 
graph convolution, the rows of $X^{(k)}=\widehat A^kX$ are linearly separable with probability 
at least $1-n^{-\Omega(1)}$, provided
\begin{align*}
    \frac{\|\mu-\nu\|}{\sigma}\ge C_{\rm snr}\max\left\{\sqrt{\frac{\log n}{n}}, \; 
    \left(\frac{C_*}{\gamma}\sqrt{\frac{\log n}{np}}\right)^k\sqrt{\log n}\right\}.
\end{align*}
\end{theorem}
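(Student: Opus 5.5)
We decompose \(X=\frac{\mu-\nu}{2}\,y^\top\!\!\,{}^{\top}\)-type signal plus noise. Since \(\mu+\nu=0\), the rows are \(x_i=y_i\mu+\sigma g_i\), i.e.\ \(X=y\mu^\top+\sigma G\) with \(G\) a standard Gaussian matrix. Then
\[
\widehat A^kX=(\widehat A^ky)\mu^\top+\sigma\,\widehat A^kG .
\]
Linear separability follows if we can exhibit a direction \(w\) (we take \(w=\mu/\|\mu\|\)) with \(y_i\langle e_i^\top\widehat A^kX,w\rangle>0\) for every \(i\). This reduces the theorem to two entrywise estimates. The first is a \emph{signal bound}: \(y_i(\widehat A^ky)_i\ge \tfrac12\lambda^k\) for all \(i\), where \(\lambda\approx\gamma\) is the class eigenvalue. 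The second is a \emph{noise bound}: \(|e_i^\top\widehat A^kGw|\le C\sqrt{\log n}\,\|e_i^\top\widehat A^k\|\) for all \(i\), by a Gaussian tail and a union bound, since \(Gw\sim\mathcal N(0,I_n)\) is independent of the graph. Comparing the two gives the SNR condition, provided we control \(\|e_i^\top\widehat A^k\|\).

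\textbf{Step 1 (rank-one reduction).} Write \(D^{-1/2}AD^{-1/2}=D^{-1/2}\mathbb E[A]D^{-1/2}+D^{-1/2}(A-\mathbb E A)D^{-1/2}\). Use degree concentration, \(d_i=\bar d(1+O(\sqrt{\log n/(np)}))\) uniformly with high probability when \(p\ge C_d\log^Bn/n\). This lets us show
\[
\widehat A=\gamma\,\frac{yy^\top}{n}+R ,
\]
where the correction term removes the \(\one\)-direction up to degree fluctuations. The residual \(R\) has \(\|R\|\le\theta\) and \(\|R\|_{\infty\to\infty}\)-type control, and the error from \(D^{1/2}\one\ne\sqrt{\bar d}\one\) is absorbed into \(R\). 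Then \(\widehat A^k\) expands as a sum over words in the signal projector \(P=yy^\top/n\) and \(R\). The pure-signal term gives \(\gamma^ky\). Any term containing \(R\) between projections contributes factors \(y^\top R^j y/n\le\theta^j\), so the mixed terms sum to a relative error \(O((1+\rho)^k-1)=O(k\rho)\le O(c_0)\). Terms ending in \(R^jy\) require entrywise bounds \(\|R^jy\|_\infty\le (C\theta)^j\) up to polylog factors. These are exactly what the Krylov lemma (Lemma~\ref{lem:krylov-infty}, valid for \(j\le c\log n\), hence the constraint \(C_0\le c\)) supplies. Summing, the total error is bounded by \(\gamma^k\sum_j\binom kj(C\rho)^j\), which is small, and this yields the signal bound.

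\textbf{Step 2 (row norms).} For the noise bound we need \(\|e_i^\top\widehat A^k\|\). Expanding in the same way, the projector-containing words give \(O(\gamma^k/\sqrt n)\), because each row of \(P\) has norm \(n^{-1/2}\). The pure residual word gives \(\|e_i^\top R^k\|\le (C_*\theta)^k\), again from Lemma~\ref{lem:krylov-infty}. The noise at node \(i\) is therefore at most \(C\sigma\sqrt{\log n}\,(\gamma^k/\sqrt n+(C_*\theta)^k)\). This is below the signal \(\tfrac12\gamma^k\|\mu\|=\tfrac14\gamma^k\|\mu-\nu\|\) precisely when \(\|\mu-\nu\|/\sigma\) exceeds both \(\sqrt{\log n/n}\) and \((C_*\theta/\gamma)^k\sqrt{\log n}\), up to \(C_{\rm snr}\). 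This matches the stated condition. A union bound over \(n\) nodes and the graph events gives probability \(1-n^{-\Omega(1)}\).

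\textbf{Main obstacle.} The hard step is the entrywise residual-power bound for the worst node uniformly over \(j\le C_0\log n\). The residual \(R\) mixes the centered adjacency with nonlinear random factors \(d_i^{-1/2}\) and with the random stationary correction, so neither the spectral norm nor naive moment methods suffice. The plan is to expand each \(d_i^{-1/2}\) around \(\bar d^{-1/2}\) in a truncated series ("atoms") to order \(O(B)\). The truncation error is negligible because \(|d_i/\bar d-1|\lesssim(\log n)^{(1-B)/2}\). Each product of atoms is then a polynomial in centered edge variables, which we bound by high-moment decorated-walk counting. The assumption \(B>4\) is what makes the decoration terms subdominant to \((C\theta)^j\). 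I take this bound as given by Lemma~\ref{lem:krylov-infty}. The remaining work is bookkeeping to check that the stationary-correction error \(D^{1/2}\one\one^\top D^{1/2}/\one^\top D\one-\one\one^\top/n\) also satisfies the same entrywise Krylov bounds, which follows from degree concentration.
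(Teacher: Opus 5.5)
Your proposal is correct and follows essentially the paper's route: project onto the mean-difference direction to reduce to the one-dimensional model, write \(\widehat A\) as a rank-one spike plus a residual of norm at most \(\theta\), expand \(\widehat A^k\) into words, control the residual block to the left of the first projector by Lemma~\ref{lem:krylov-infty} and everything else by spectral norms, and finish with Gaussian tails (your single conditional Gaussian per node, with variance \(\|e_i^\top\widehat A^k\|^2\), is only a repackaging of the word-by-word noise bound in Lemma~\ref{lem:error-evolution-normalized}). A few points should be fixed: the pure-residual row bound \(\|e_i^\top R^k\|\le\theta^k\) comes from \(\|R\|\le\theta\), not from Lemma~\ref{lem:krylov-infty}, which only controls \(R'^a s\); rows of words of the form \(R^aP\cdots\) with \(a\ge1\) do need that lemma, and \(\|e_i^\top P\|=n^{-1/2}\) alone does not give them; the lemma is stated for \(\widehat A-\eta ss^\top\), so take \(\eta\) as the spike or absorb \(|\eta-\gamma|\lesssim\theta/\sqrt n\); and truncating at order \(O(B)\), as in your aside, would not suffice for powers of order \(\log n\) (the paper truncates at depth \(K_0M\asymp\log n\)), though this does not affect your argument since you cite the lemma, which already includes the stationary correction.
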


\begin{remark}
Theorem~\ref{theorem:exactly-recovery-k} gives a normalized exact-recovery
guarantee under the dense polylogarithmic condition \(np\ge C\log^B n\) with
fixed \(B>4\).  This is a slightly stronger density assumption than the sharp
\(np\ge C\log^3 n\) target, but it makes Taylor remainders in the normalized
degree expansion controllable by deterministic row-sum bounds.
\end{remark}

\paragraph{Proof roadmap and key technical estimate.}
We include the proof roadmap here, and the main technical contribution of
the theorem lies in the normalized residual-power analysis. First, the feature model is
reduced to the one-dimensional centered form \(Xw=s+g\), where
\(g\sim N(0,\sigma'^2I_n)\), by Lemma~\ref{lemma:csbm-reduction}.  Second,
degree concentration gives the signal-plus-residual decomposition
\[
\widehat A=\eta ss^\top+R',
\qquad
\eta=\frac{(p-q)n}{2d},
\qquad
\|R'\|\lesssim \sqrt{\frac{\log n}{np}} .
\]
However, this spectral-norm control is not enough for exact
recovery. We need worst-node control after \(k=O(\log n)\) layers.  The key
technical estimate is the following short form of Lemma~\ref{lem:krylov-infty}:
with high probability, uniformly over all vertices \(u\) and all
\(1\le a\le k\),
\[
\bigl|e_u^\top (R')^a s\bigr|
\le
\frac{1}{\sqrt n}
\left(K\sqrt{\frac{\log n}{np}}\right)^a .
\tag{Key}
\]
This bound says that residual powers remain delocalized in the planted signal
direction; the extra \(n^{-1/2}\) factor is what allows a sign-based classifier
to control every node rather than only an average error.

The proof of \((\mathrm{Key})\) is the normalized part of the argument.  Unlike
the unnormalized case, each entry of \(R'\) contains random nonlinear degree
denominators and a random stationary correction.  We handle these terms by
expanding normalized entries into positive-level atoms
(Lemma~\ref{lem:entrywise-atom-expansion}) and then bounding the surviving
high-moment terms by decorated-walk counting
(Lemma~\ref{lem:decorated-pattern-counting}).  This yields the
decoupled decorated-walk expansion and moment bound in
Appendix~\ref{appendix:additional_preliminaries}.  Finally,
Lemma~\ref{lem:error-evolution-normalized} combines \((\mathrm{Key})\), the
spectral norm bound, and Gaussian concentration to show that
\(\eta^{-k}\widehat A^k(s+g)\) stays within \(o(n^{-1/2})\) of \(s\) in
\(\ell_\infty\), which gives linear separability.

The second theoretical result extends the analysis to \(L\) classes, with
equal-sized graph blocks and Gaussian class-conditional features.  The result is
weaker than exact recovery, but it retains the same spectral conclusion: after
normalization correction, most propagated representations contract toward their
class centers when the graph signal dominates perturbation and feature noise.

\begin{definition}[Multi-Class CSBM]
\label{def:multi-class-csbm}
    A multi-class Contextual Stochastic Block Model (multi-class CSBM) with $L$ equal-sized classes, denoted as $\mathcal{CSBM}(n, m, p, q, L, \mu, \sigma)$, is generated as follows:
    \begin{enumerate}
        \item \textbf{Node Labels:} The $n$ nodes are partitioned into $L$ disjoint classes $\mathcal{C}_1, \dots, \mathcal{C}_L$, each of size $n/L$.
        \item \textbf{Graph Topology:} An undirected graph $G=(V, E)$ is generated where edges between nodes $i$ and $j$ are drawn independently. The probability of an edge is $p$ if $i$ and $j$ belong to the same class, and $q$ if they belong to different classes.
        \item \textbf{Node Features:} For a node $i \in \mathcal{C}_l$, its $m$-dimensional feature vector is generated as $x_i \sim \mathcal{N}(\mu_l, \sigma^2 I_m)$, where $\mu_l \in \mathbb{R}^m$ is the specific center for class $l$. The parameter $\mu = \{\mu_1, \dots, \mu_L\}$ denotes the set of all class centers. We assume the centers are zero-mean, i.e., $\sum_{l=1}^L \mu_l = 0$.
    \end{enumerate}
\end{definition}

The expected average degree is
\[
\bar{d} := \frac{pn}{L} + \frac{(L-1)qn}{L}.
\]
Let \(\lambda := (p-q)n/(\bar d L)\) be the graph signal strength and
\[
\delta := C\bar d^{-1}\left(\sqrt{\frac{np(1-p)}{L}}+\sqrt{nq(1-q)}\right)
\]
be the graph-noise scale.  Let \(U=\mathbb E[X]\), assume \(U^\top\one=0\), and
set \(\Delta:=\min_{\ell\ne h}\|\mu_\ell-\mu_h\|\).  We first recall the
corresponding unnormalized result of \citep{Wang2024Analysis}; the normalized
analogue follows as the next theorem.

\begin{theorem}[Multi-class Partial Recovery for \(\widetilde A\)~\citep{Wang2024Analysis}]
\label{theorem:multi-class-partial-unnormalized}
    
Given the CSBM with parameters, $p,q,L,n,m,\sigma$, suppose $\min(p,q) \geq \Omega(\frac{\log^{2}n}{n})$ and $|\lambda| > 4k\delta$. Let $X^{(k)} = \frac{1}{\lambda^k}\Tilde{A}^{k}X$ be the feature matrix after $k$ rounds of convolutions with scalaing factor $1/\lambda^k$. Let $x^{(k)}_i$ be the $i^{th}$ row of the matrix $X^{(k)}$. Then with probability $1-n^{-\Omega(1)}$, at least $n - n_e$ nodes, $i$, satisfy $\norm{x^{(k)}_i - \mu_i}< \Delta/2$ where 
$$n_e =  O\Big((k \delta/|\lambda|)^2\frac{\norm{U}_F^2}{\Delta^2} + (L + n(\delta/|\lambda|)^{2k})\frac{\sigma^2m\log{n}}{\Delta^2}\Big).$$

In particular, the quadratic classifer $x\mapsto \text{softmax}(\norm{x-c_l}^2)_{l=1}^L$ will correctly classify at least $n-n_e$ points, and when $n_e = o(n)$, then we can correctly classify $1-o(1)$ fraction of points.
\end{theorem}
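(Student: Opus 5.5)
The plan follows the proof of the two-class case in \citep{Wang2024Analysis}, which proceeds in three stages: a spectral decomposition of \(\widetilde A\), a bound on the accumulated perturbation after \(k\) rounds, and a counting argument that turns the Frobenius-norm error into a bound on misclassified nodes.

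\textbf{Step 1: Decomposition of \(\widetilde A\).} Write \(\mathbb E A = (p-q)\sum_{l}\one_{\mathcal C_l}\one_{\mathcal C_l}^\top + q\one\one^\top\), up to a diagonal correction. Under \(\min(p,q)\ge\Omega(\log^2 n/n)\), standard concentration gives \(d=\bar d(1+O(\sqrt{\log n/(n\bar d)}))\). Subtracting \(\frac1n\one\one^\top\) removes the \(q\)-component, and the mismatch between \(d\) and \(\bar d\) is absorbed into the noise. This yields
\[
\widetilde A=\lambda P+E, \qquad P=\frac{L}{n}\sum_l \one_{\mathcal C_l}\one_{\mathcal C_l}^\top-\frac1n\one\one^\top, \qquad \|E\|\le\delta
\]
with probability \(1-n^{-\Omega(1)}\). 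The bound on \(\|E\|\) comes from matrix Bernstein, or the Bandeira--van Handel bound, applied to \(A-\mathbb EA\) blockwise, which produces the two terms in the definition of \(\delta\). Note that \(P\) is the orthogonal projection onto the span of class indicators intersected with \(\one^\perp\). Since \(U^\top\one=0\) and \(U\) is constant on classes, we have \(PU=U\).

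\textbf{Step 2: Perturbation of the powers.} Expand \((\lambda P+E)^k-\lambda^kP\) into the \(2^k-1\) words containing at least one factor \(E\), and bound the two parts of \(X=U+Z\) separately, where \(Z\) is the Gaussian noise.

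For the mean part, use the first-order estimate: \(\|\lambda^{-k}\widetilde A^kU-U\|\le((1+\delta/|\lambda|)^k-1)\|U\|\). Under \(|\lambda|>4k\delta\) this is at most \(2k\delta/|\lambda|\,\|U\|\). It then holds in Frobenius norm, so \(\|\lambda^{-k}\widetilde A^kU-U\|_F^2\lesssim(k\delta/\lambda)^2\|U\|_F^2\).

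For the noise part, split \(Z=PZ+(I-P)Z\).
\begin{itemize}
\item The component \(PZ\) lives in a subspace of dimension at most \(L\), so \(\|PZ\|_F^2\lesssim L\sigma^2m\log n\). After propagation it stays of the same order, again using \((1+\delta/\lambda)^k\le e^{1/4}\).
\item The component \((I-P)Z\) needs a sharper argument. Write \(\widetilde A^k(I-P)\) as a sum of words that start with \(E\), since \(P(I-P)=0\). The cleanest route is a perturbation of invariant subspaces: the top-\(L-1\) eigenspace of \(\widetilde A\) is \(O(\delta/\lambda)\)-close to \(\operatorname{range}P\) by Davis--Kahan, and on its complement \(\widetilde A\) has norm at most \(\delta\). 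This gives \(\|\lambda^{-k}\widetilde A^k(I-P)Z\|_F^2\lesssim(\delta/\lambda)^{2k}\|Z\|_F^2+(\delta/\lambda)^2\cdot L\sigma^2 m\log n\), with \(\|Z\|_F^2\lesssim n\sigma^2m\log n\) by Gaussian concentration.
\end{itemize}

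\textbf{Step 3: Counting.} Summing the pieces gives
\[
\|X^{(k)}-U\|_F^2\lesssim (k\delta/\lambda)^2\|U\|_F^2+\bigl(L+n(\delta/\lambda)^{2k}\bigr)\sigma^2m\log n.
\]
Every node with \(\|x_i^{(k)}-\mu_i\|\ge\Delta/2\) contributes at least \(\Delta^2/4\) to this sum, so the number of such nodes is at most \(4\|X^{(k)}-U\|_F^2/\Delta^2\), which equals \(n_e\).

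For the classifier, take \(c_l=\mu_l\), or estimates of the centers. Any node within \(\Delta/2\) of its own center is strictly closer to it than to any other center, so the nearest-center, i.e.\ quadratic, rule classifies it correctly.

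\textbf{Main obstacle.} The delicate step is the noise term on \((I-P)Z\). A naive word expansion loses a factor \(2^k\), and it mixes the projected and unprojected parts. I would first try the eigenspace argument above, using \(\widetilde A\)'s actual invariant subspace rather than \(\operatorname{range}P\). This keeps the \((\delta/\lambda)^{2k}\) decay while charging only an \(L\)-dimensional leakage term. Verifying that the eigengap condition \(|\lambda|>4k\delta\) suffices for this is where care is needed.
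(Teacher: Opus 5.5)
Your proposal is correct, and its skeleton matches the paper's. The paper does not reprove this cited result. Its proof of the normalized analogue (Theorem~\ref{theorem:multi-class-partial}, which reduces to this statement when \(\epsilon=0\)) uses the same three ingredients you do:
the decomposition \(\widetilde A=M+R\) with \(M=\lambda P\) of rank \(L-1\) and \(MU=\lambda U\) (Lemma~\ref{lemma:multi-class-characterization});
the same first-order bound \(\|\lambda^{-k}(\widetilde A^k-M^k)\|\le 2k\delta/|\lambda|\) for the mean term (Lemma~\ref{lemma:A^k}, which your word expansion reproves);
and the same count \(n_e\Delta^2/4\le\|X^{(k)}-U\|_F^2\).

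You differ only in the Gaussian term. The paper never splits \(Z\). Conditional on the graph, it bounds \(\|BZ\|_F^2\lesssim\sigma^2 m\log n\,\operatorname{Tr}(B^\top B)\) for \(B=\lambda^{-k}\widetilde A^k\). It then uses only Weyl's inequality for singular values: at most \(L-1\) singular values of \(\widetilde A\) exceed \(\delta\), and none exceeds \(|\lambda|+\delta\). This gives \(\operatorname{Tr}(B^\top B)\le(1+\delta/|\lambda|)^{2k}(L-1)+n(\delta/|\lambda|)^{2k}\) directly.

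Your route also works: you split \(Z=PZ+(I-P)Z\) and control the leakage of \((I-P)Z\) into the perturbed top eigenspace \(Q\) by Davis--Kahan. Conditionally on the graph, \(Q(I-P)Z\) is Gaussian with \(\|Q(I-P)\|_F^2\lesssim(L-1)(\delta/\lambda)^2\), which gives your leakage term. But it is a detour. If you apply your own split \(Q,\ I-Q\) directly to \(Z\), you get \(\|BZ\|_F^2\le(1+\delta/|\lambda|)^{2k}\|QZ\|_F^2+(\delta/|\lambda|)^{2k}\|Z\|_F^2\) with no angle bound at all. That is just the paper's trace argument written in eigenspace form.

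So the ``main obstacle'' you flag is not really one: no eigengap analysis beyond Weyl is needed. The extra factor \((\delta/\lambda)^2\) your route puts on the leakage piece also buys nothing, since the \(PZ\) (or \(QZ\)) piece already costs \(L\sigma^2 m\log n\).

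One minor caution for Step~1. Plain matrix Bernstein gives \(\|A-\mathbb E A\|\lesssim\sqrt{\bar d\log n}\), which would inflate \(\delta\) by \(\sqrt{\log n}\). To get \(\delta\) as defined, you need the Bandeira--van Handel bound (or simply the cited characterization lemma).
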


The following theorem gives the corresponding partial-recovery guarantee for
the normalized corrected operator \(\widehat A\).

\begin{theorem}[Multi-class Partial Recovery for \(\widehat A\)]
\label{theorem:multi-class-partial}
    Given the multi-class CSBM with parameters $p,q,L,n,m,\sigma$, assume that \(L\) is fixed, \(L\mid n\), and all constants below may depend on \(L\). Suppose the graph is sufficiently dense with $\bar d \ge C\log n$, the feature dimension satisfies \(m\le n^{C_m}\) for a fixed constant \(C_m>0\), and the signal dominates the noise such that $|\lambda| \ge C_0 k(\delta+\epsilon)$ for a sufficiently large absolute constant \(C_0\), where \(\epsilon := C_\epsilon\sqrt{\log n/\bar d}\). 
    Let \(X^{(k)}=\lambda^{-k}\widehat A^kX\) be the scaled feature matrix after \(k\) rounds of propagation. 
    Let \(x^{(k)}_i\) be the \(i\)-th row of \(X^{(k)}\), and let \(\ell(i)\) denote the class of node \(i\). Then with high probability \(1-n^{-\Omega(1)}\), at least \(n-n_e\) nodes satisfy \(\|x^{(k)}_i-\mu_{\ell(i)}\|<\Delta/2\), where \(n_e\) is bounded by
    \begin{gather*}
        n_e =  O\left(\underbrace{\Big[(\frac{\epsilon}{|\lambda|})^{2k}+4k^2(\frac{\delta+\epsilon}{|\lambda|})^2\Big]\frac{\norm{U}_F^2}{\Delta^2}}_{\text{Structural \& Normalization Error}}
        + \underbrace{\Big[(1+\frac{\delta+\epsilon}{|\lambda|})^{2k}(L-1) + n(\frac{\delta+\epsilon}{|\lambda|})^{2k}\Big]\frac{\sigma^2m\log{n}}{\Delta^2}}_{\text{Initial Variance Error}}\right).
    \end{gather*}
In particular, the nearest-center classifier
\(\arg\min_{\ell\in[L]}\|x-\mu_\ell\|\) correctly classifies all nodes outside
this exceptional set.
\end{theorem}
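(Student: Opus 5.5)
The plan follows the multi-class argument of \citep{Wang2024Analysis} for \(\widetilde A\), with the normalization handled by one extra perturbation term of size \(\epsilon\).

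\textbf{Step 1: operator decomposition.} Write \(\widehat A=\lambda P+E\), where \(P\) is the orthogonal projection onto the \((L-1)\)-dimensional space of class-indicator vectors orthogonal to \(\one\). The goal is \(\|E\|\le \delta+\epsilon\) with probability \(1-n^{-\Omega(1)}\). Split \(E\) into three pieces.
\begin{itemize}
\item \(E_1=\bar d^{-1}(A-\mathbb E A)\), minus the corresponding stationary piece. Matrix concentration for \(\bar d\ge C\log n\) gives \(\|E_1\|\le\delta\).
\item \(E_2=D^{-1/2}AD^{-1/2}-\bar d^{-1}A\). The degrees satisfy \(\max_i|d_i/\bar d-1|\le C\sqrt{\log n/\bar d}\) by Chernoff and a union bound. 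Writing \(D^{-1/2}=\bar d^{-1/2}(I+\Xi)\) with \(\|\Xi\|\lesssim\sqrt{\log n/\bar d}\) gives \(\|E_2\|\lesssim\epsilon\,\|\bar d^{-1}A\|\lesssim\epsilon\).
\item \(E_3\), the difference between the random rank-one correction \(D^{1/2}\one\one^\top D^{1/2}/\one^\top D\one\) and \(\frac1n\one\one^\top\). Since \(\|D^{1/2}\one/\sqrt{\one^\top D\one}-\one/\sqrt n\|\lesssim\sqrt{\log n/\bar d}\), this piece is also \(O(\epsilon)\).
\end{itemize}
The diagonal term of \(\mathbb E A\) and the \(O(1/n)\) discrepancy are absorbed into \(\delta\). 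I also check that \(\mathbb E A/\bar d-\frac1n\one\one^\top=\lambda P\) exactly under the definition \(\lambda=(p-q)n/(\bar dL)\).

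\textbf{Step 2: expansion of the power.} Expand \(\lambda^{-k}\widehat A^k=\lambda^{-k}(\lambda P+E)^k\). Since \(P\) is a projection, the leading term is \(P\). The terms containing exactly one factor \(E\) sum to at most \(k\,(\delta+\epsilon)/|\lambda|\) in norm. Bound all remaining terms together using \((1+x)^k-1-kx\le (kx)^2e^{kx}\) with \(x=(\delta+\epsilon)/|\lambda|\le 1/C_0k\). The pure \(E^k\) term is kept separately, with norm at most \(x^{k}\).

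Apply this to \(X=U+Z\), where \(Z\) has i.i.d.\ \(N(0,\sigma^2)\) entries. Because \(U^\top\one=0\) and \(U\) is constant on classes, \(PU=U\). Hence \(\|\lambda^{-k}\widehat A^kU-U\|_F\le (2kx+x^k)\|U\|_F\), which squared gives the \([x^{2k}+4k^2x^2]\|U\|_F^2\) term. The claimed bound has \(\epsilon^{2k}\) there, and I will try to isolate the pure normalization part in \(E^k U\) to obtain it. If that fails, \(x^{2k}\) is dominated by \(k^2x^2\) anyway.

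\textbf{Step 3: noise term.} Split \(\lambda^{-k}\widehat A^kZ\) as \(\lambda^{-k}(\lambda P+E)^kP Z+\lambda^{-k}(\lambda P+E)^k(I-P)Z\).
\begin{itemize}
\item The first part has rank at most \(L-1\), with operator norm \((1+x)^k\). Since \(\|PZ\|_F^2\lesssim (L-1)\sigma^2 m\log n\) with high probability when \(m\le n^{C_m}\), this gives \((1+x)^{2k}(L-1)\sigma^2m\log n\).
\item For the second part, the full-rank contribution comes only from words beginning with \(E\) acting on \((I-P)Z\). Bound it by \(x^{k}\|Z\|\) in the expanded form, which gives \(n\,x^{2k}\sigma^2 m\log n\) in Frobenius norm squared. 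Gaussian concentration of \(\|Z\|_F^2\) and of \(\|Z\|\) supplies the \(\log n\) factor.
\end{itemize}

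\textbf{Step 4: counting.} Let \(M=X^{(k)}-\mathbb E\)-centers. A Markov-type counting argument gives \(\#\{i:\|M_i\|\ge\Delta/2\}\le 4\|M\|_F^2/\Delta^2\), which produces \(n_e\). The nearest-center classifier is then correct on every node with \(\|x_i^{(k)}-\mu_{\ell(i)}\|<\Delta/2\).

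\textbf{Main obstacle.} I expect two difficulties. The first is the bookkeeping in Step 3 that makes the mixed words, with some \(P\) and some \(E\) factors, fall into the low-rank term instead of costing a factor \(n\). The second is proving \(\|E_2\|+\|E_3\|\le\epsilon\) at the weak density \(\bar d\ge C\log n\), where the degree ratios are only bounded by constants. There I would use \(C\) large to force \(\max_i|d_i-\bar d|\le\bar d/2\), and then linearize \(d_i^{-1/2}\) with a deterministic Lipschitz bound.
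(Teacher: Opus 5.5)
Your Steps 1, 2 and 4 are sound, and they differ from the paper's route. The paper cites \(\widetilde A=M+R\) and the operator-norm lemma for \(\widetilde A^k\), then telescopes \(\widehat A^k-\widetilde A^k\). You instead center directly at \(\lambda P\) with the deterministic normalization \(\bar d\) and expand \((\lambda P+E)^k\). Your identity \(\bar d^{-1}\mathbb E A-\frac1n\one\one^\top=\lambda P\), up to the diagonal, is correct. The expansion gives \(\|\lambda^{-k}\widehat A^k-P\|\le(1+x)^k-1\le 2kx\) with \(x=(\delta+\epsilon)/|\lambda|\). Your density worry is moot: \(|\lambda|\le 1\) and \(|\lambda|\ge C_0k\epsilon\) force \(\epsilon\le 1/C_0\).

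The genuine gap is the one you flag in Step 3. Since \(P(I-P)=0\), every word acting on \((I-P)Z\) must have \(E\) as its rightmost letter. Besides \(E^k\), these include the mixed words containing both letters. Their normalized operator norms sum to \(x\bigl[(1+x)^{k-1}-x^{k-1}\bigr]\), which is of order \(x\), not \(x^k\). With only the Gaussian facts you invoke (\(\|Z\|_F\), \(\|Z\|\), \(\|PZ\|_F\)), the mixed sum \(W\) can only be bounded by \(\|W\|\,\|Z\|_F\) or \(\|W\|_F\|Z\|\). These give contributions of order \(x^2\sigma^2 nm\) or \((L-1)x^2\sigma^2(\sqrt n+\sqrt m)^2\) to \(\|X^{(k)}-U\|_F^2\). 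In both, a factor \(n\) multiplies \(x^2\) rather than \(x^{2k}\), so the stated \(n_e\) does not follow for \(k\ge 2\).

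The missing ingredient is a Gaussian quadratic-form (Hanson--Wright type) bound for a matrix independent of the features. For any graph-measurable \(W\), with probability \(1-n^{-\Omega(1)}\),
\[
\|WZ\|_F^2\le C\sigma^2 m\log n\,\|W\|_F^2 .
\]
Every mixed word contains \(P\), so it has rank at most \(L-1\) and Frobenius norm at most \(\sqrt{L-1}\) times its operator norm. Apply the bound once to the sum of all mixed words, so no union bound over \(2^k\) words is needed. By the triangle inequality that sum has Frobenius norm at most \(\sqrt{L-1}\,(1+x)^k\), which places it in the \((1+x)^{2k}(L-1)\) term and closes your argument.

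The paper avoids this bookkeeping entirely. It applies the quadratic-form bound to \(B=\lambda^{-k}\widehat A^k\) as a whole and writes \(\operatorname{Tr}(B^\top B)=\lambda^{-2k}\sum_j\alpha_j^{2k}\), where \(\alpha_j\) are the eigenvalues of \(\widehat A\) (this uses symmetry). Weyl's inequality for \(\widehat A=M+H\) with \(\operatorname{rank}M=L-1\) then shows at most \(L-1\) eigenvalues have modulus up to \(|\lambda|+\delta+\epsilon\), and all others are at most \(\delta+\epsilon\). This gives \((1+x)^{2k}(L-1)+nx^{2k}\) directly, with no need to split \(Z\) into \(PZ\) and \((I-P)Z\).
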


{
Theorem~\ref{theorem:multi-class-partial} gives an explicit partial-recovery
bound for normalized corrected propagation.  When graph perturbation,
normalization error, and feature noise are small relative to \(\Delta\), the
bound on \(n_e^{\rm norm}\) is sublinear.  The two error terms separate
structural error from propagated feature variance:
}
\begin{itemize}
    \item \textbf{Structural error.}  The first term accumulates graph
    perturbation and degree-normalization error relative to class separation.
    \item \textbf{Feature-noise error.}  The second term tracks propagated
    Gaussian noise and its dependence on \(m\), \(\sigma\), and \(\Delta\).
\end{itemize}
Compared with Theorem~\ref{theorem:multi-class-partial-unnormalized}, the
normalized result replaces the graph perturbation scale \(\delta\) by
\(\delta+\epsilon\) and adds the purely normalization-driven term
\((\epsilon/|\lambda|)^{2k}\|U\|_F^2/\Delta^2\).

The proof is deferred to
Appendices~\ref{appendix:additional_preliminaries} and~\ref{proof:theorem:multi-class-partial}.

\section{Experiments}
\label{sec:experiments}
{
The experiments are secondary to the theoretical results and are designed as
empirical checks of the spectral predictions rather than as a full architecture
benchmark.  On synthetic CSBM graphs, we vary feature SNR, graph signal
strength, and propagation depth to test the theorem's predicted dependencies.
On standard node-classification benchmarks, we test whether the same normalized
correction is associated with smaller depth-dependent degradation outside the
idealized model.  We compare against standard normalized propagation and three
recent oversmoothing baselines:
\begin{itemize}
    \item \textbf{DropEdge} \citep{rong2019dropedge}, which randomly removes edges during training.
    \item \textbf{GraphMamba} \citep{wang2024mamba}, which uses state-space modules for long-range propagation.
    \item \textbf{RevGNN} \citep{yang2024mitigating}, which uses reverse diffusion to recover high-frequency information.
\end{itemize}
}

\subsection{Synthetic Data}
{
We generate two-block CSBM graphs with \(n=2000\) and feature dimension
\(m=20\).  After \(k\) propagation steps, a linear classifier is trained on the
resulting embeddings; all curves report average accuracy over 50 trials.
}

We test two regimes:
\begin{enumerate}
    \item \textbf{Feature SNR.}  We fix
    \(\gamma=(p-q)/(p+q)=2/3\) and vary \(\|\mu-\nu\|/\sigma\).  Consistent
    with the theorem, corrected propagation remains more stable at large \(k\)
    when feature SNR is sufficient (Figure~\ref{fig:synthetic-sigma}).
     
    \item \textbf{Graph signal.}  We fix \(\|\mu-\nu\|/\sigma=1\) and vary
    \(\gamma\).  Corrected propagation retains higher accuracy as graph signal
    decreases, and the empirical transition follows the theoretical threshold
    trend (Figure~\ref{fig:synthetic-gamma}).
\end{enumerate}

\begin{figure}[!ht]
    \centering
    \begin{subfigure}[b]{0.32\textwidth}
        \includegraphics[width=\linewidth]{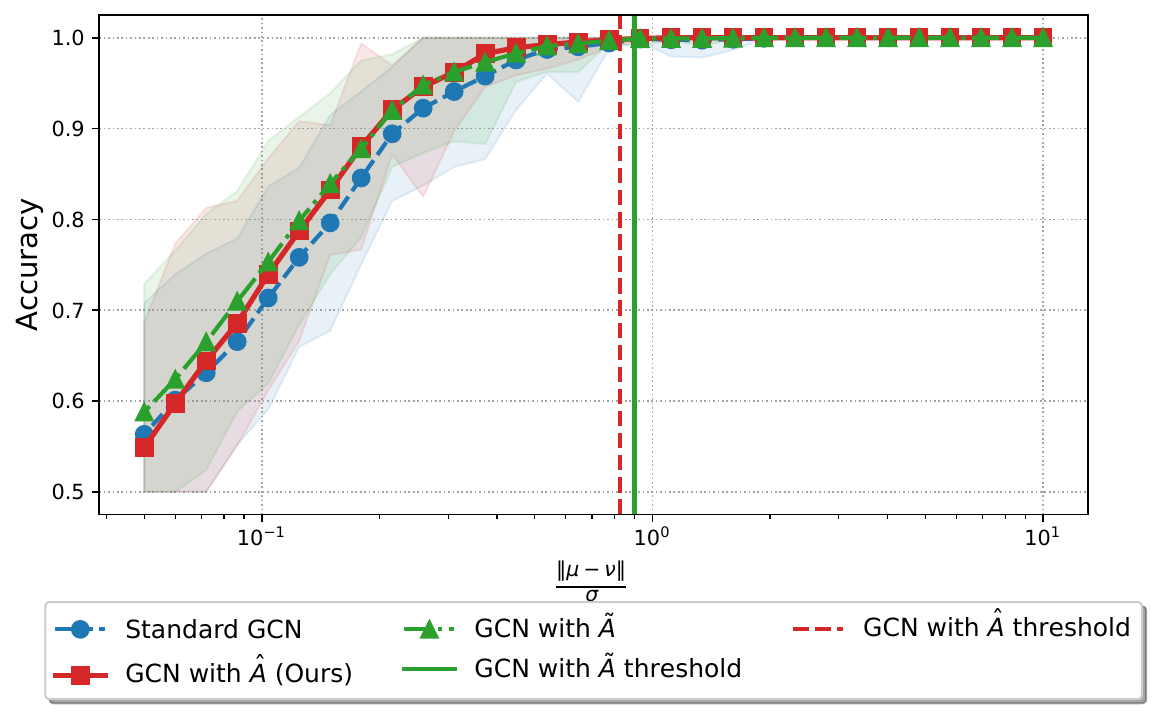}
        \caption{\(k=1\).}
    \end{subfigure}
    \begin{subfigure}[b]{0.32\textwidth}
        \includegraphics[width=\linewidth]{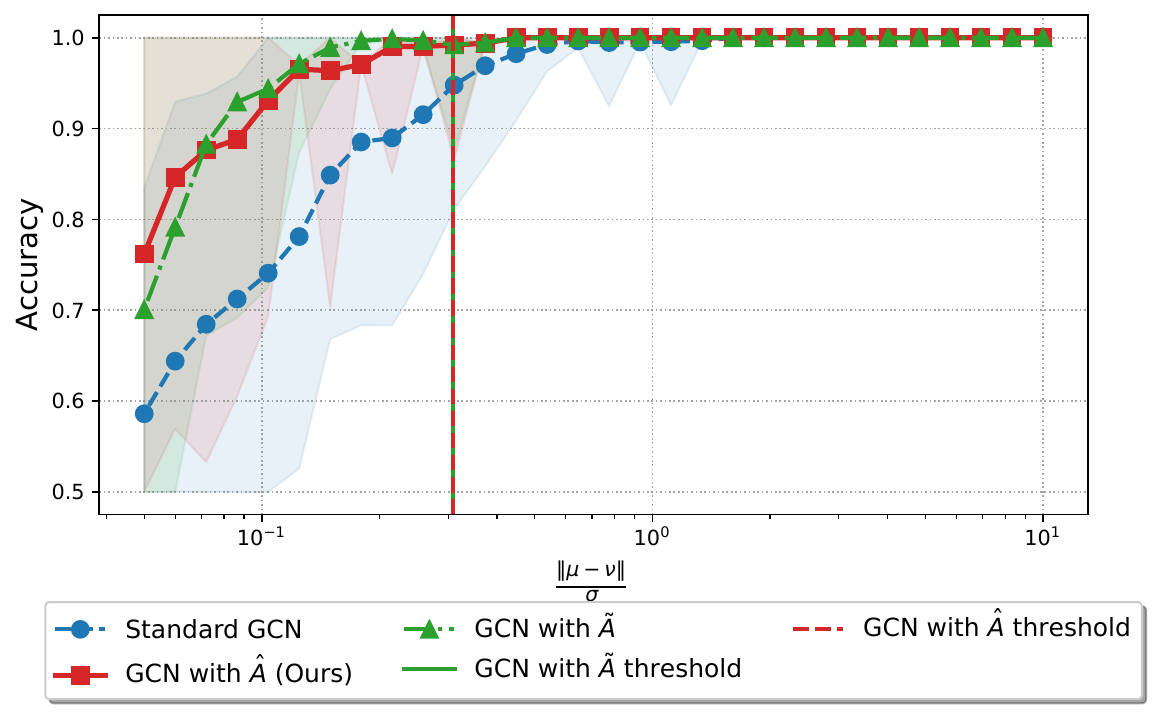}
        \caption{\(k=2\).}
    \end{subfigure}
    \begin{subfigure}[b]{0.32\textwidth}
        \includegraphics[width=\linewidth]{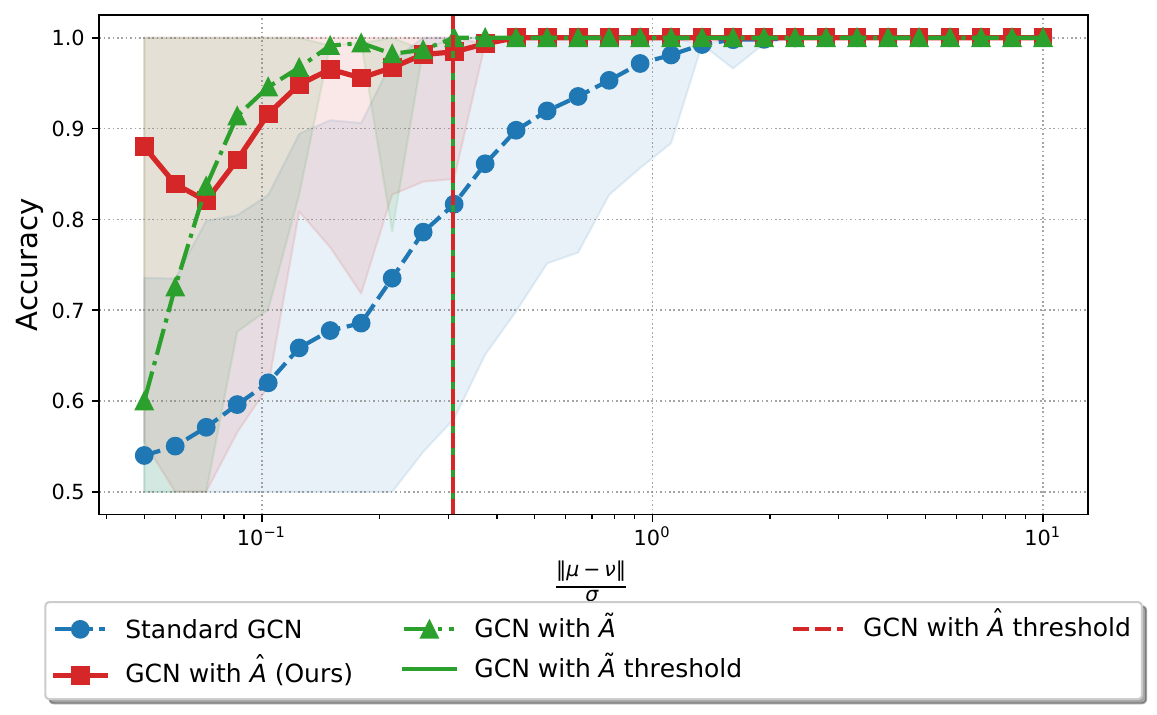}
        \caption{\(k=4\).}
    \end{subfigure}
    \begin{subfigure}[b]{0.32\textwidth}
        \includegraphics[width=\linewidth]{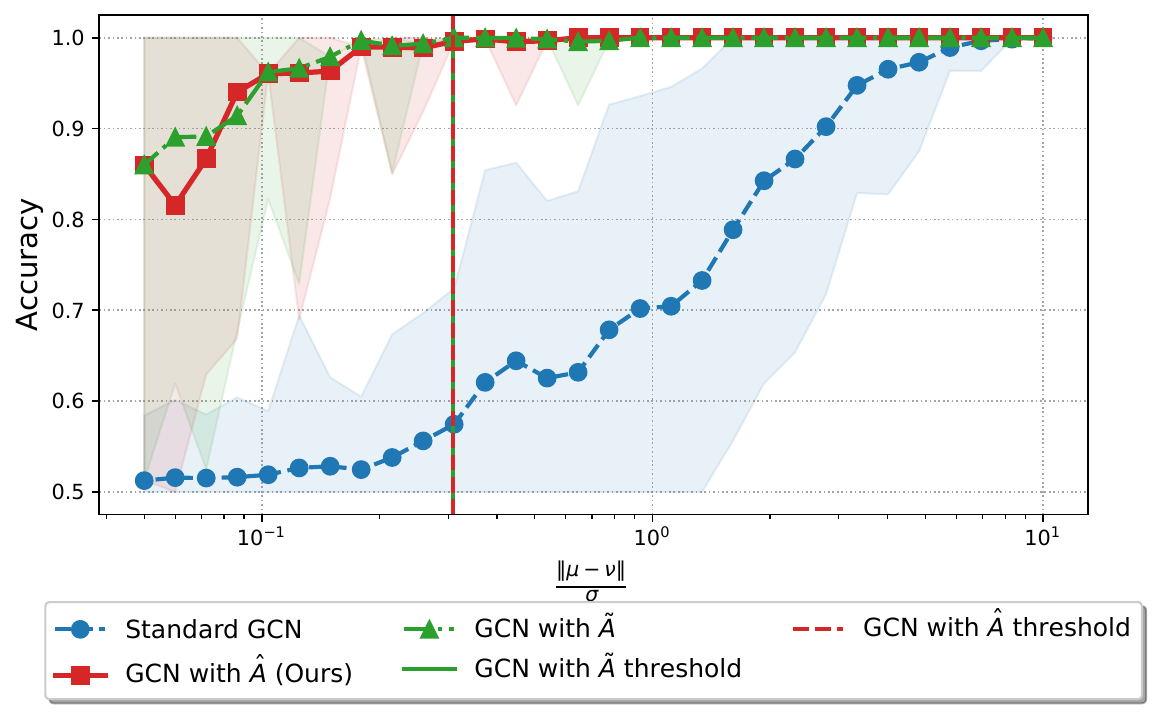}
        \caption{\(k=8\).}
    \end{subfigure}
    \begin{subfigure}[b]{0.32\textwidth}
        \includegraphics[width=\linewidth]{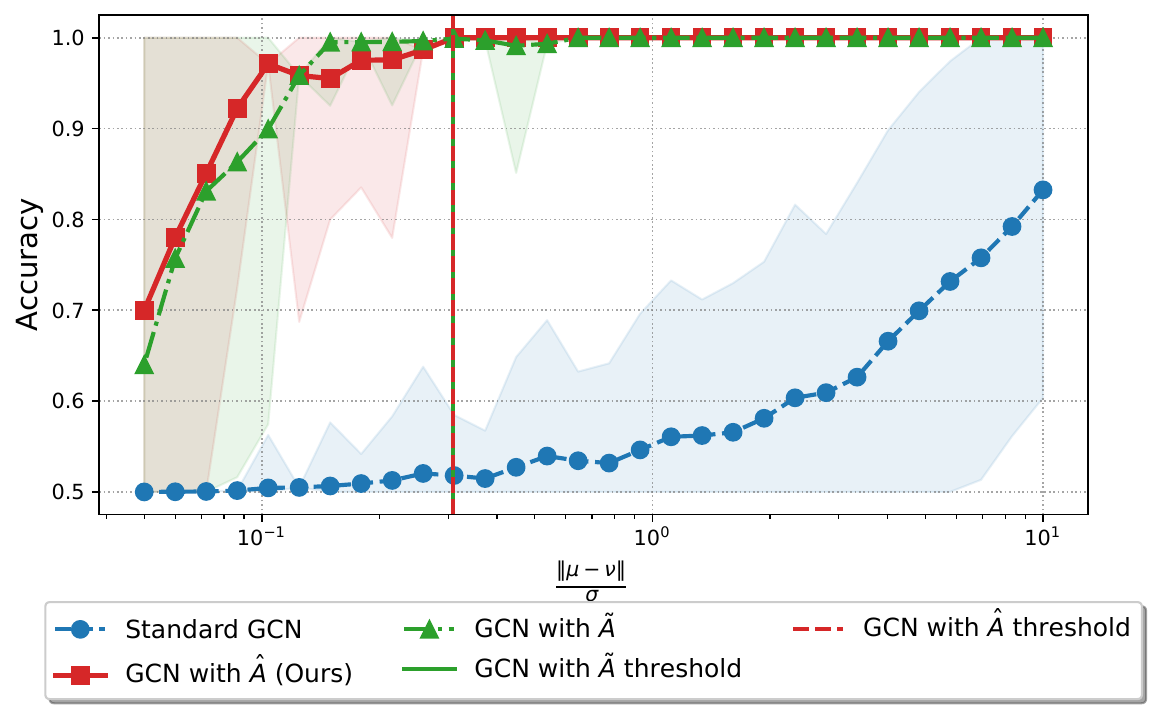}
        \caption{\(k=12\).}
    \end{subfigure}
    \begin{subfigure}[b]{0.32\textwidth}
        \includegraphics[width=\linewidth]{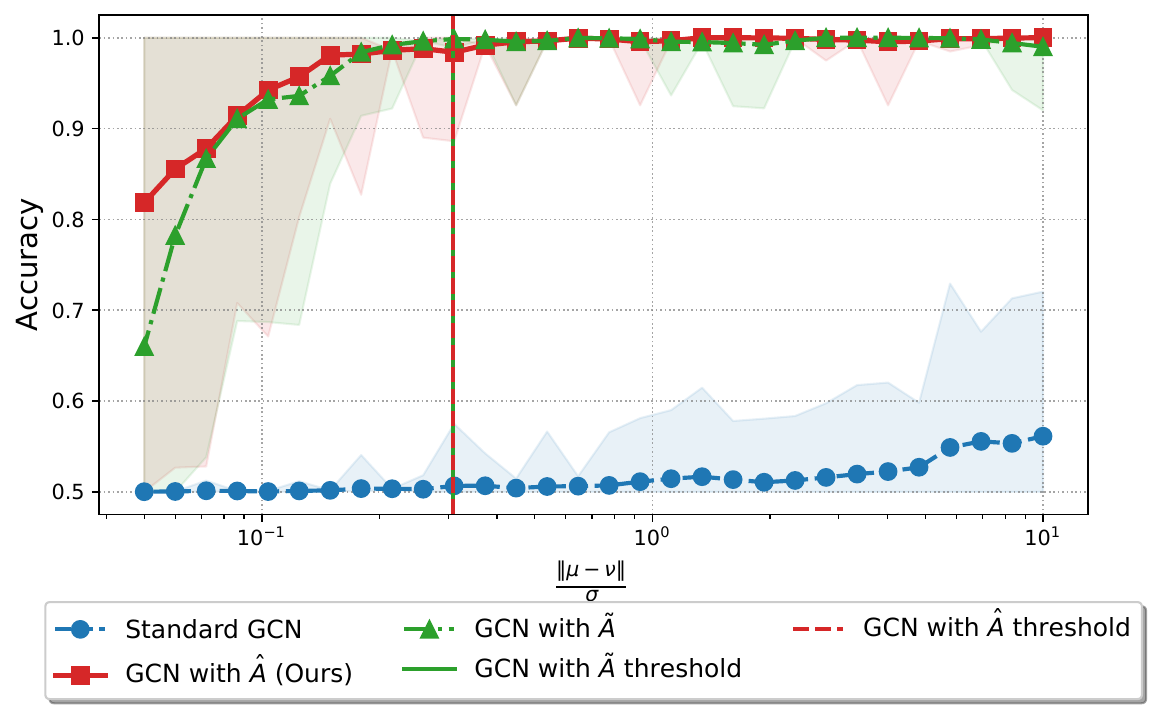}
        \caption{\(k=16\).}
    \end{subfigure}
    \caption{Synthetic CSBM accuracy versus feature SNR for increasing propagation depth.}
    \label{fig:synthetic-sigma}
\end{figure}

\begin{figure}[!ht]
    \centering
    \begin{subfigure}[b]{0.32\textwidth}
        \includegraphics[width=\linewidth]{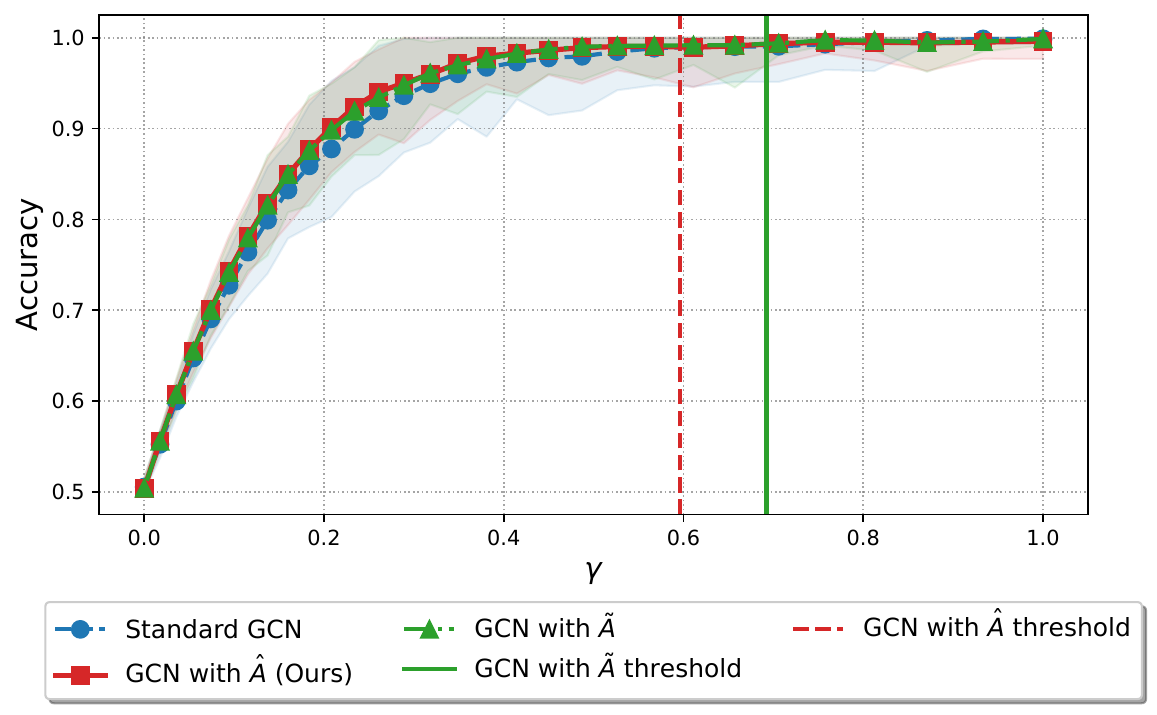}
        \caption{\(k=1\).}
    \end{subfigure}
    \begin{subfigure}[b]{0.32\textwidth}
        \includegraphics[width=\linewidth]{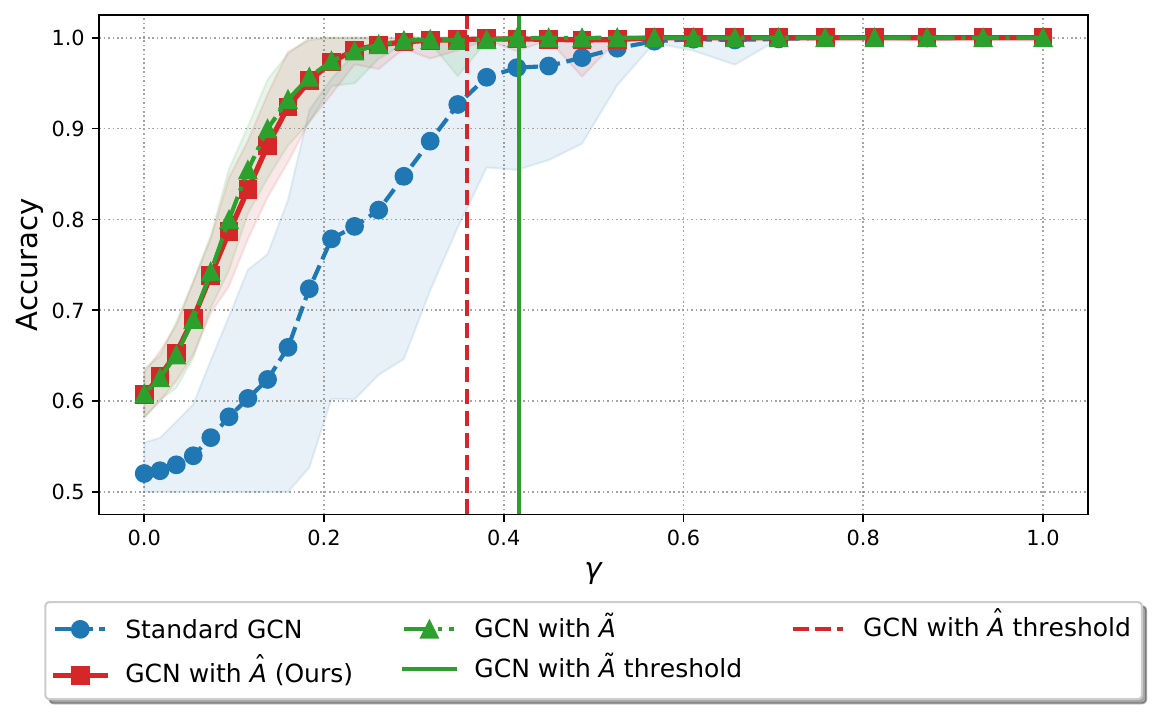}
        \caption{\(k=2\).}
    \end{subfigure}
    \begin{subfigure}[b]{0.32\textwidth}
        \includegraphics[width=\linewidth]{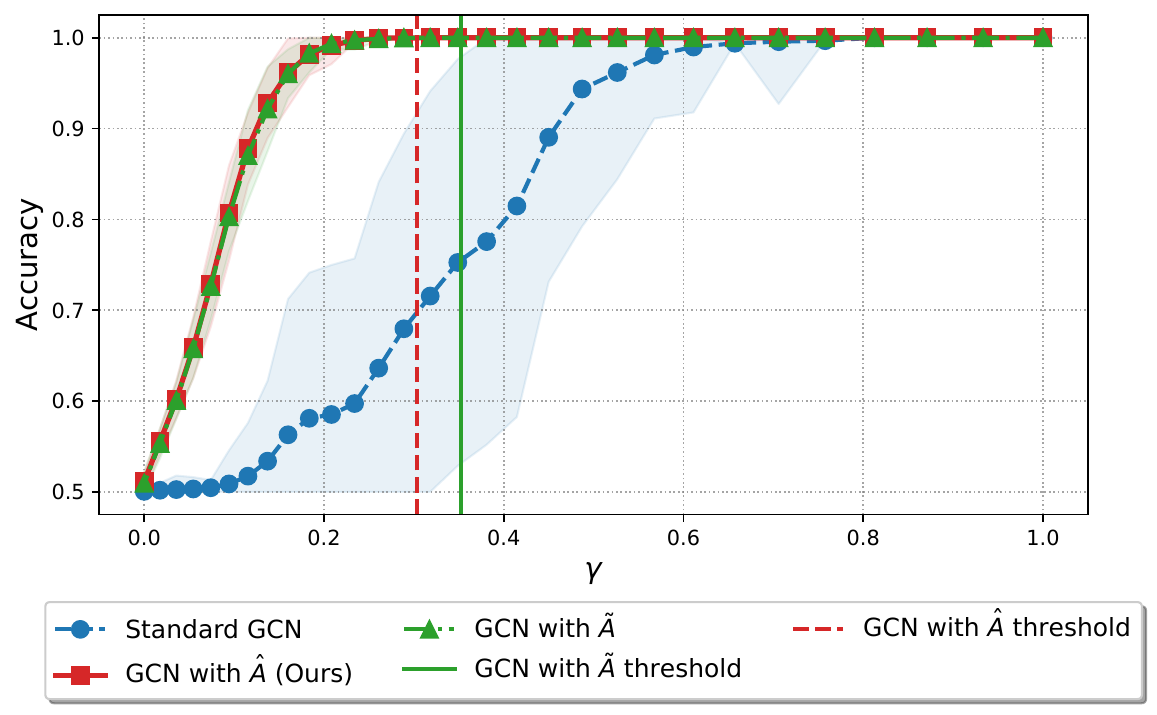}
        \caption{\(k=3\).}
    \end{subfigure}
    \begin{subfigure}[b]{0.32\textwidth}
        \includegraphics[width=\linewidth]{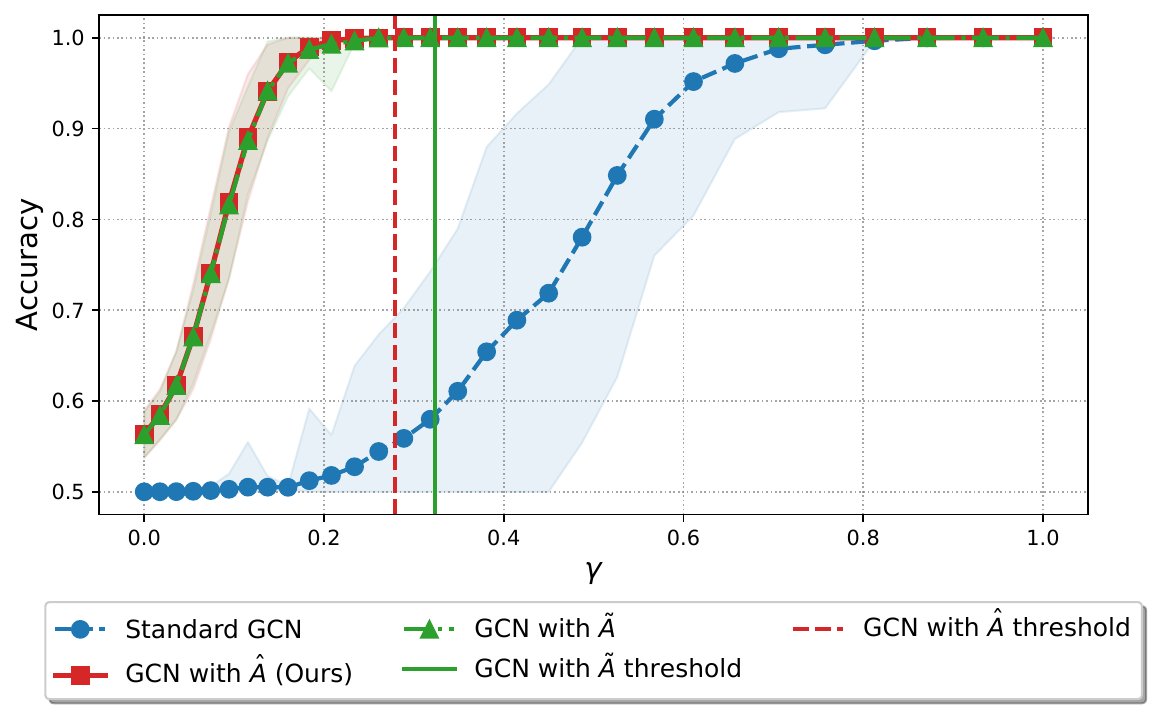}
        \caption{\(k=4\).}
    \end{subfigure}
    \begin{subfigure}[b]{0.32\textwidth}
        \includegraphics[width=\linewidth]{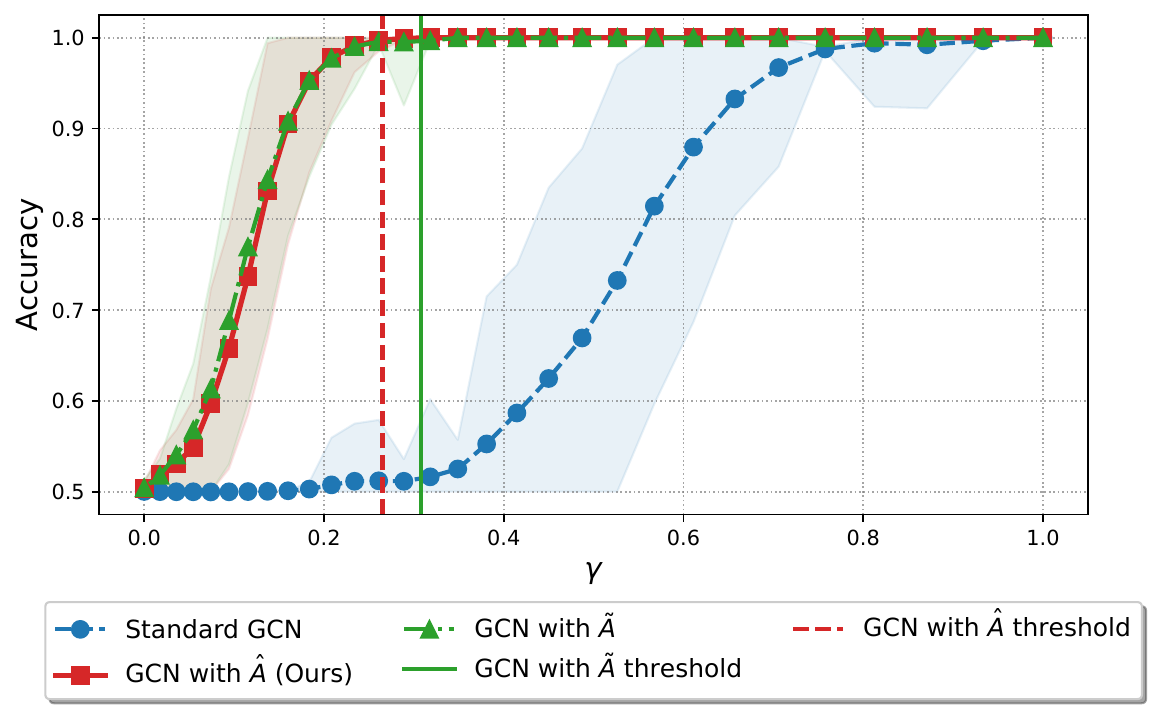}
        \caption{\(k=5\).}
    \end{subfigure}
    \begin{subfigure}[b]{0.32\textwidth}
        \includegraphics[width=\linewidth]{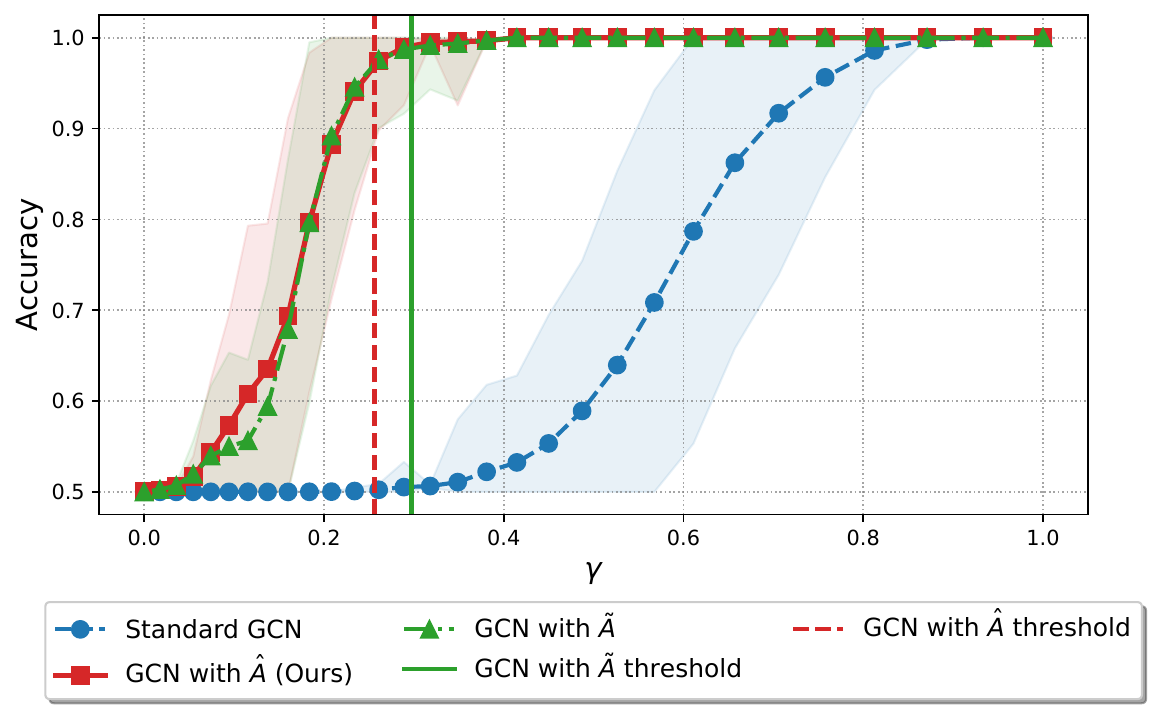}
        \caption{\(k=6\).}
    \end{subfigure}
    \caption{Synthetic CSBM accuracy versus graph signal strength.}
    \label{fig:synthetic-gamma}
\end{figure}

{
\textbf{Baseline comparison.}  Against recent oversmoothing baselines, corrected
propagation shows comparable accuracy trends while preserving the same
stability pattern across both feature-SNR and graph-signal sweeps
(Figures~\ref{fig:sota-synthetic-sigma} and~\ref{fig:sota-synthetic-gamma}).
}

\begin{figure}[!ht]
    \centering
    \begin{subfigure}[b]{0.32\textwidth}
        \includegraphics[width=\linewidth]{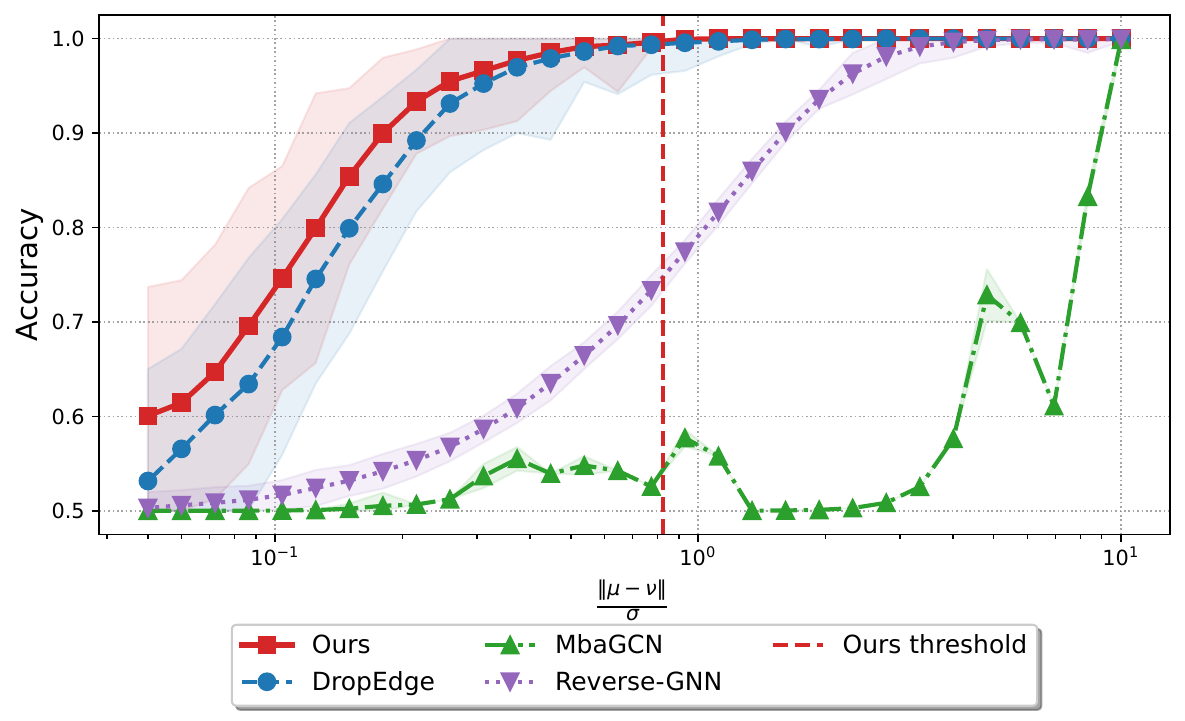}
        \caption{\(k=1\).}
    \end{subfigure}
    \begin{subfigure}[b]{0.32\textwidth}
        \includegraphics[width=\linewidth]{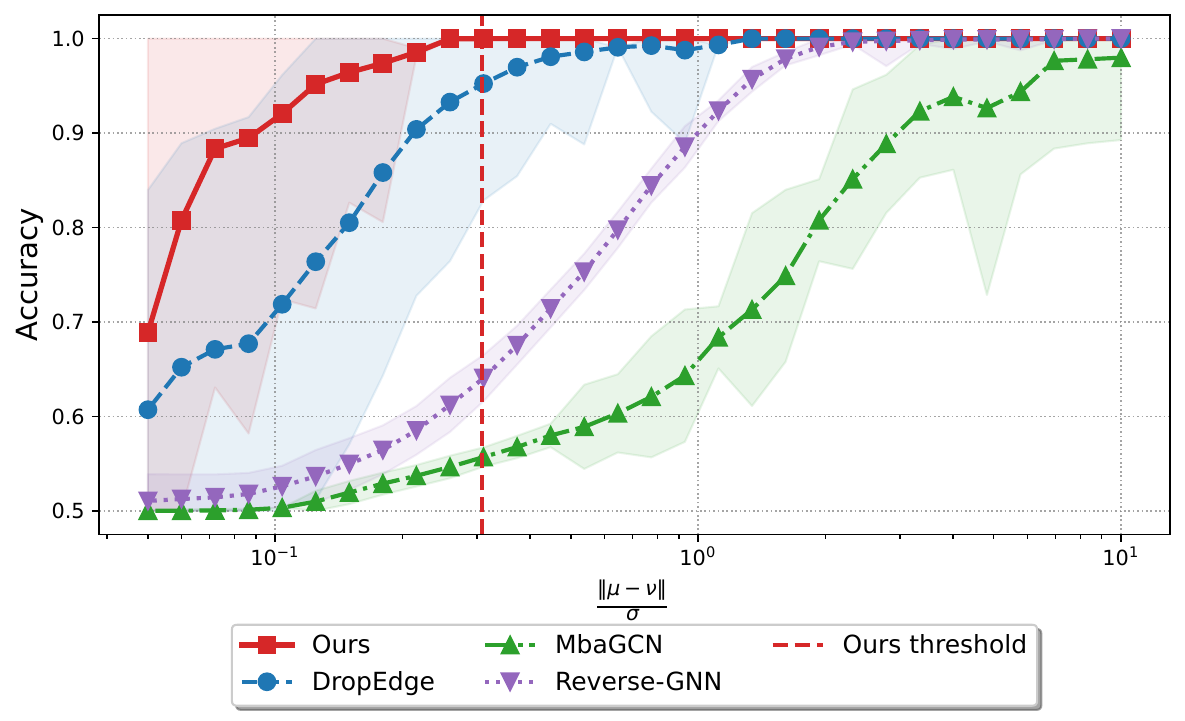}
        \caption{\(k=2\).}
    \end{subfigure}
    \begin{subfigure}[b]{0.32\textwidth}
        \includegraphics[width=\linewidth]{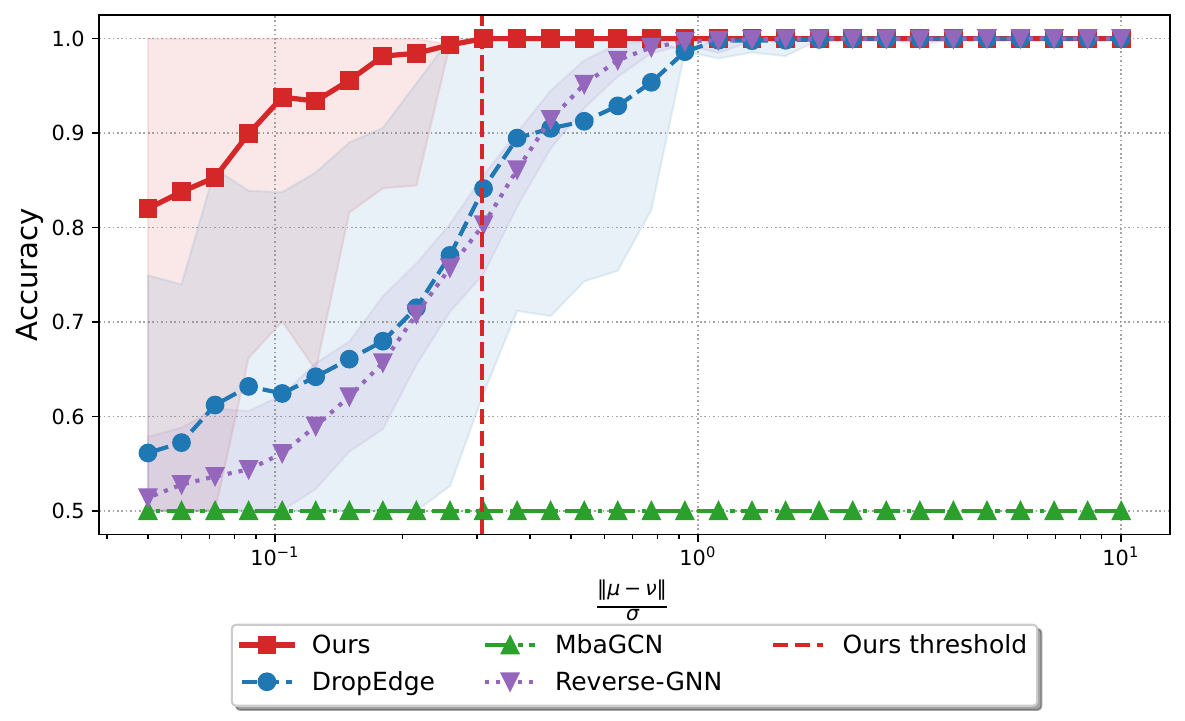}
        \caption{\(k=4\).}
    \end{subfigure}
    \begin{subfigure}[b]{0.32\textwidth}
        \includegraphics[width=\linewidth]{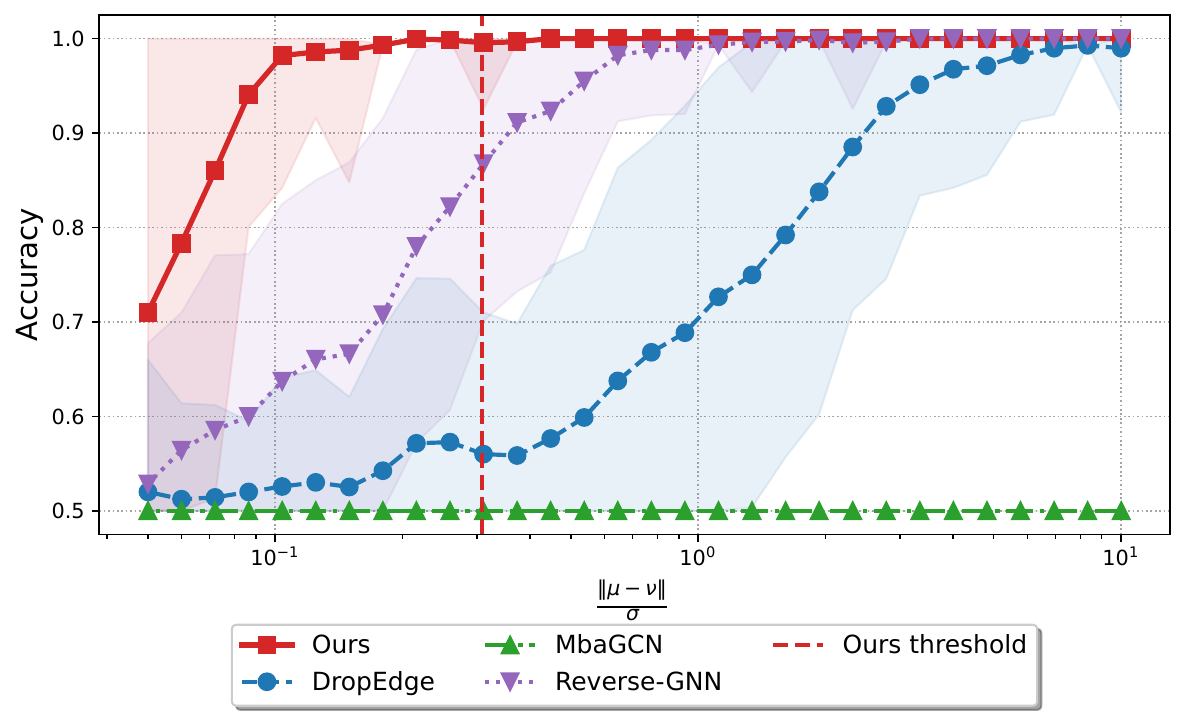}
        \caption{\(k=8\).}
    \end{subfigure}
    \begin{subfigure}[b]{0.32\textwidth}
        \includegraphics[width=\linewidth]{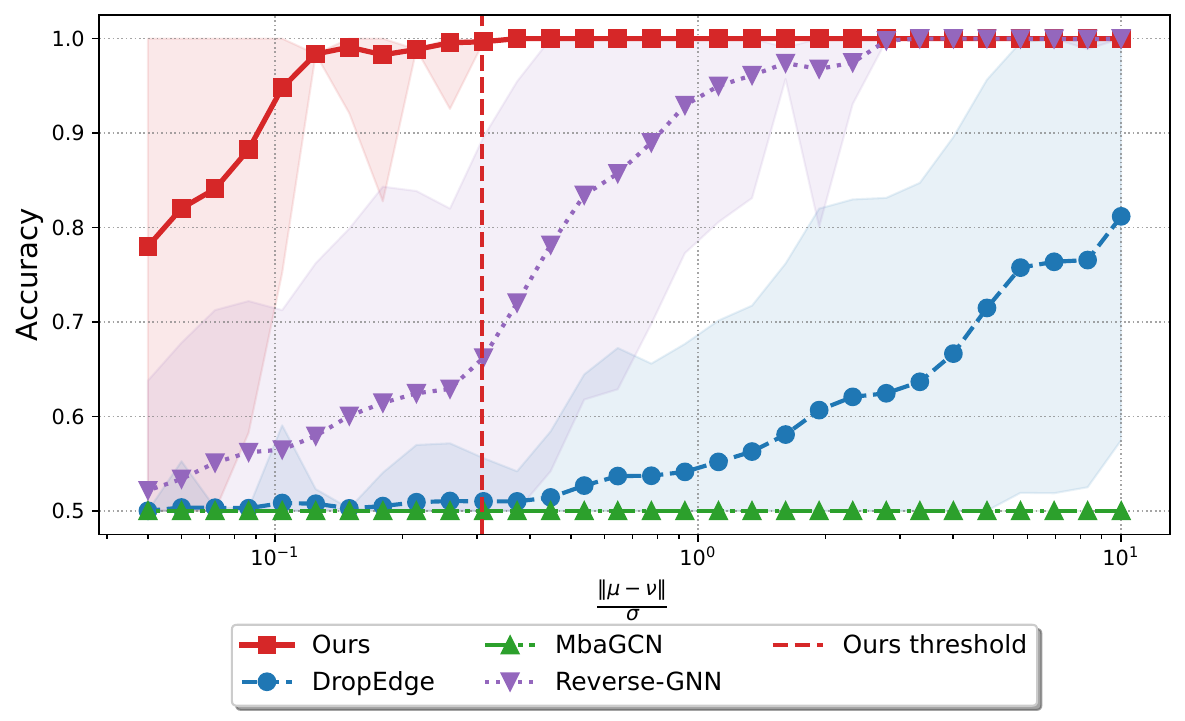}
        \caption{\(k=12\).}
    \end{subfigure}
    \begin{subfigure}[b]{0.32\textwidth}
        \includegraphics[width=\linewidth]{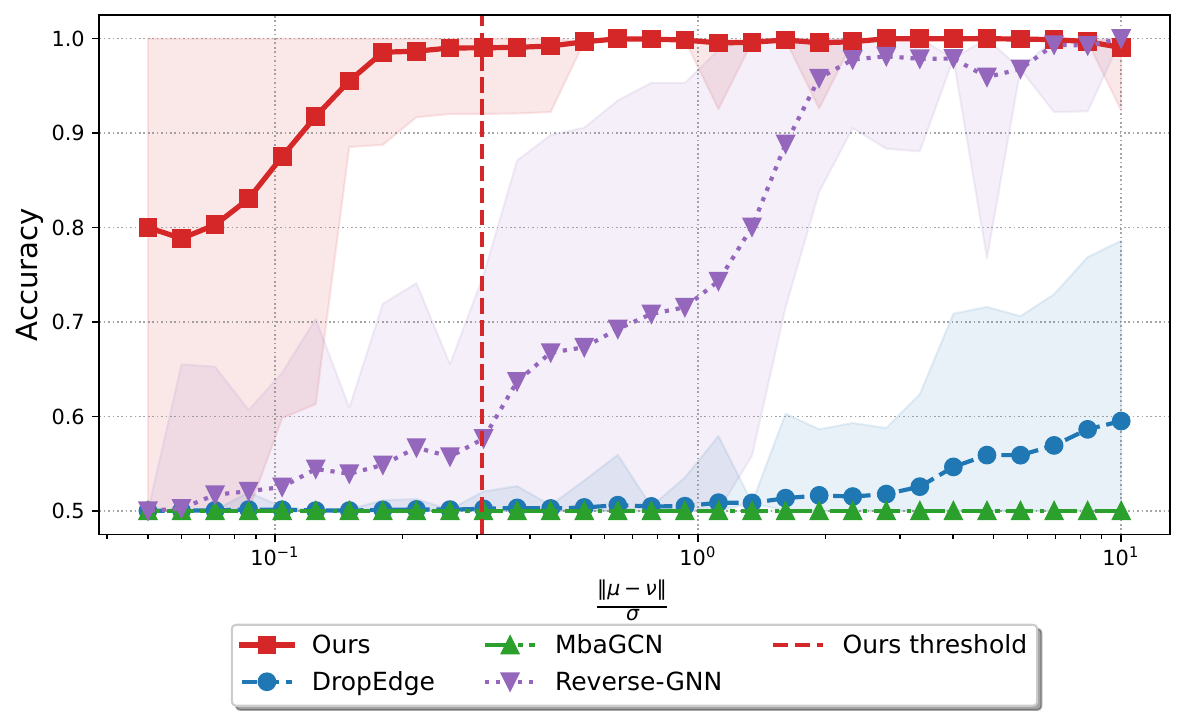}
        \caption{\(k=16\).}
    \end{subfigure}
    \caption{Synthetic CSBM comparison with recent oversmoothing baselines under varying feature SNR.}
    \label{fig:sota-synthetic-sigma}
\end{figure}

\begin{figure}[!ht]
    \centering
    \begin{subfigure}[b]{0.32\textwidth}
        \includegraphics[width=\linewidth]{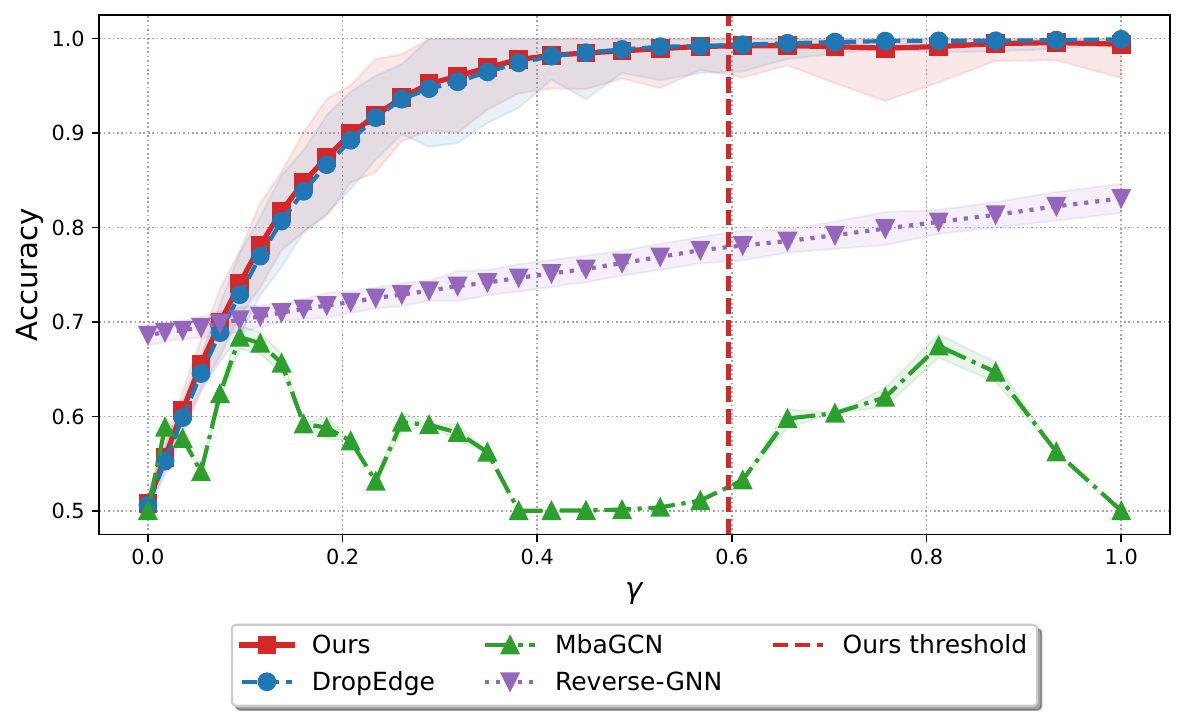}
        \caption{\(k=1\).}
    \end{subfigure}
    \begin{subfigure}[b]{0.32\textwidth}
        \includegraphics[width=\linewidth]{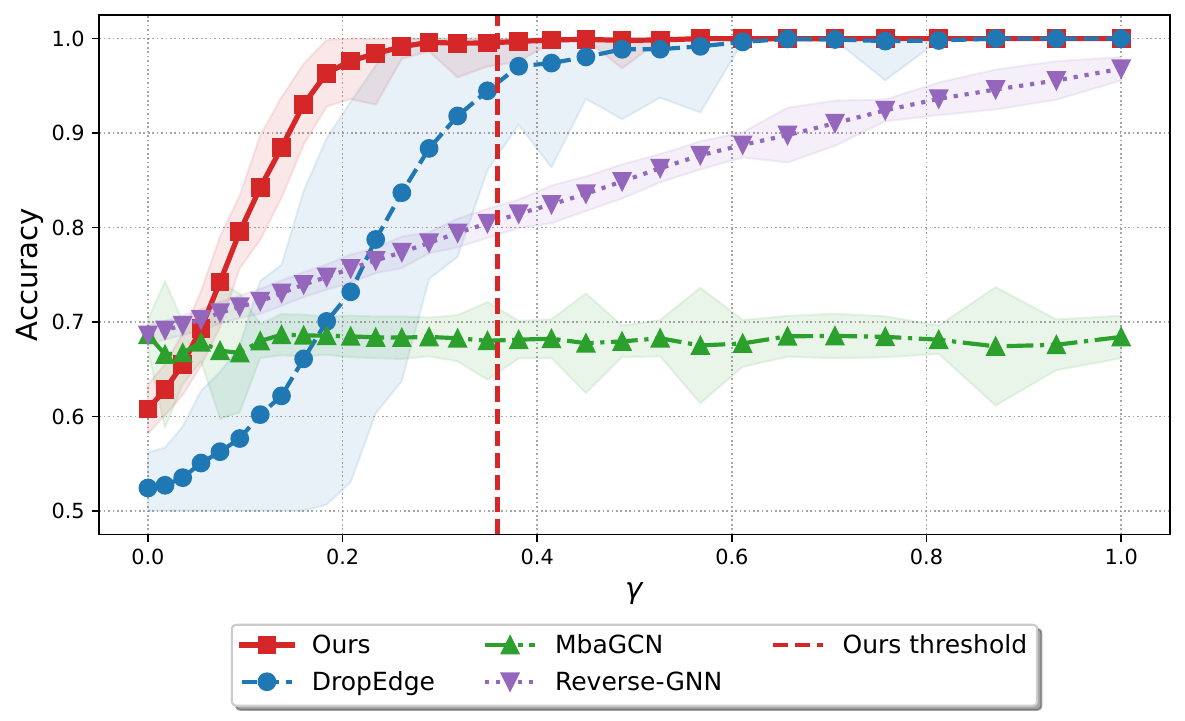}
        \caption{\(k=2\).}
    \end{subfigure}
    \begin{subfigure}[b]{0.32\textwidth}
        \includegraphics[width=\linewidth]{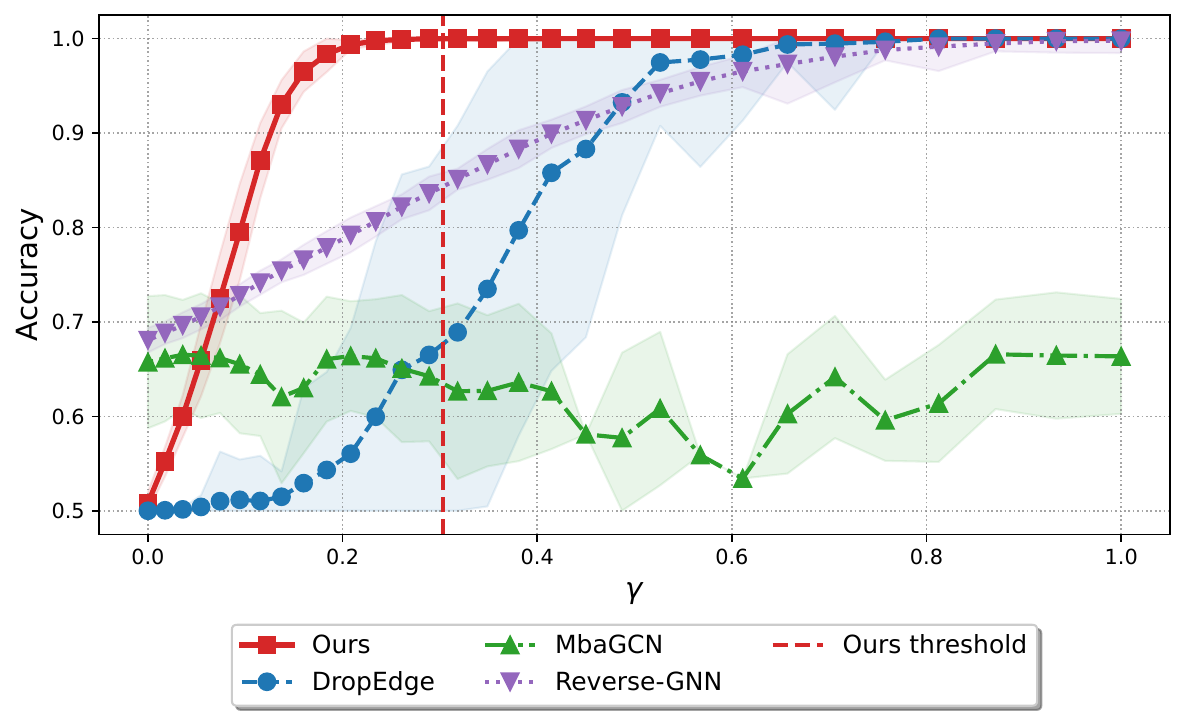}
        \caption{\(k=3\).}
    \end{subfigure}
    \begin{subfigure}[b]{0.32\textwidth}
        \includegraphics[width=\linewidth]{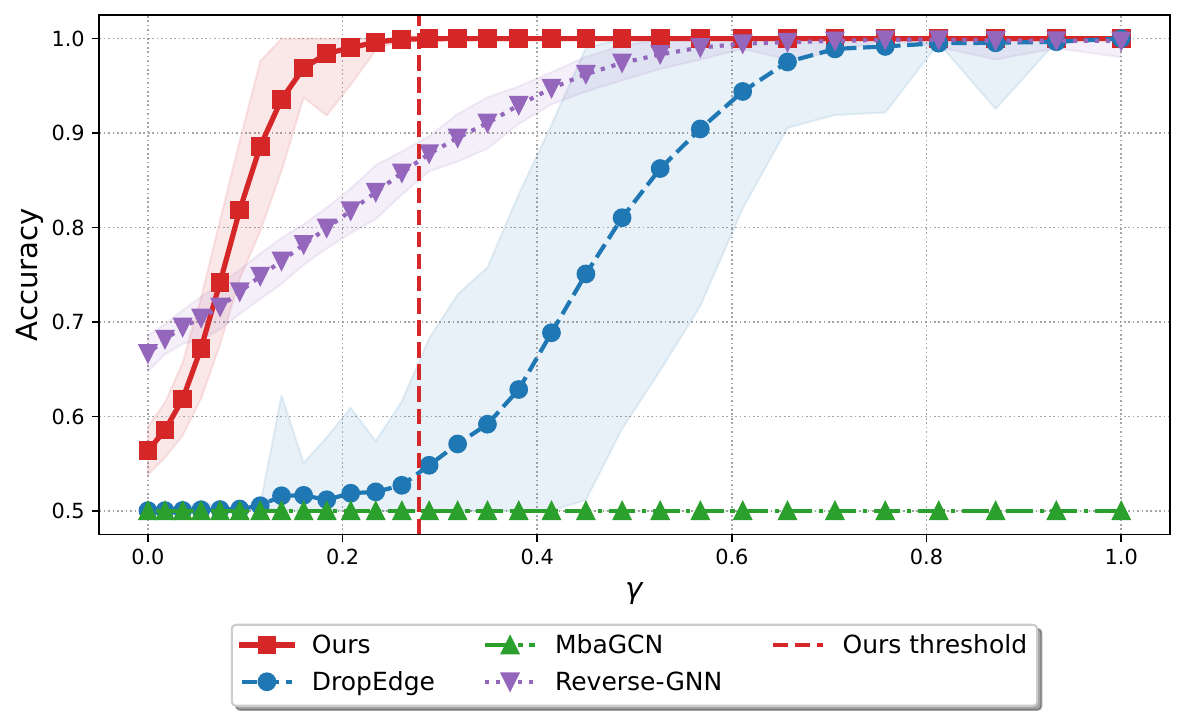}
        \caption{\(k=4\).}
    \end{subfigure}
    \begin{subfigure}[b]{0.32\textwidth}
        \includegraphics[width=\linewidth]{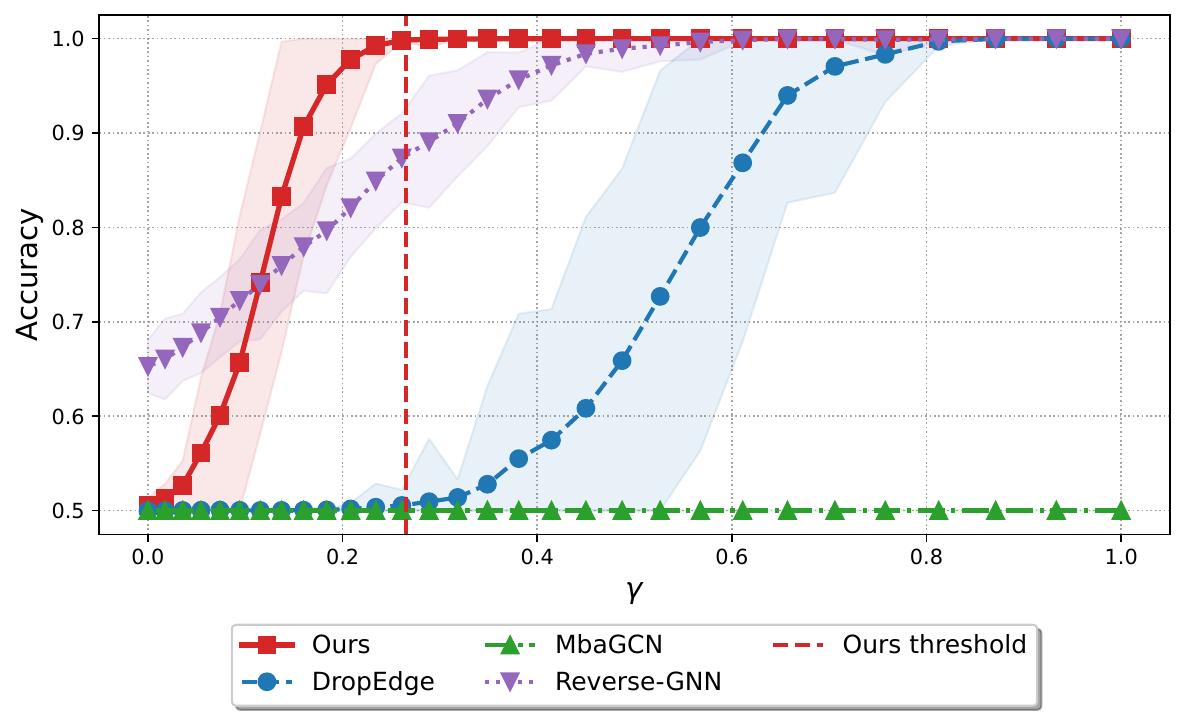}
        \caption{\(k=5\).}
    \end{subfigure}
    \begin{subfigure}[b]{0.32\textwidth}
        \includegraphics[width=\linewidth]{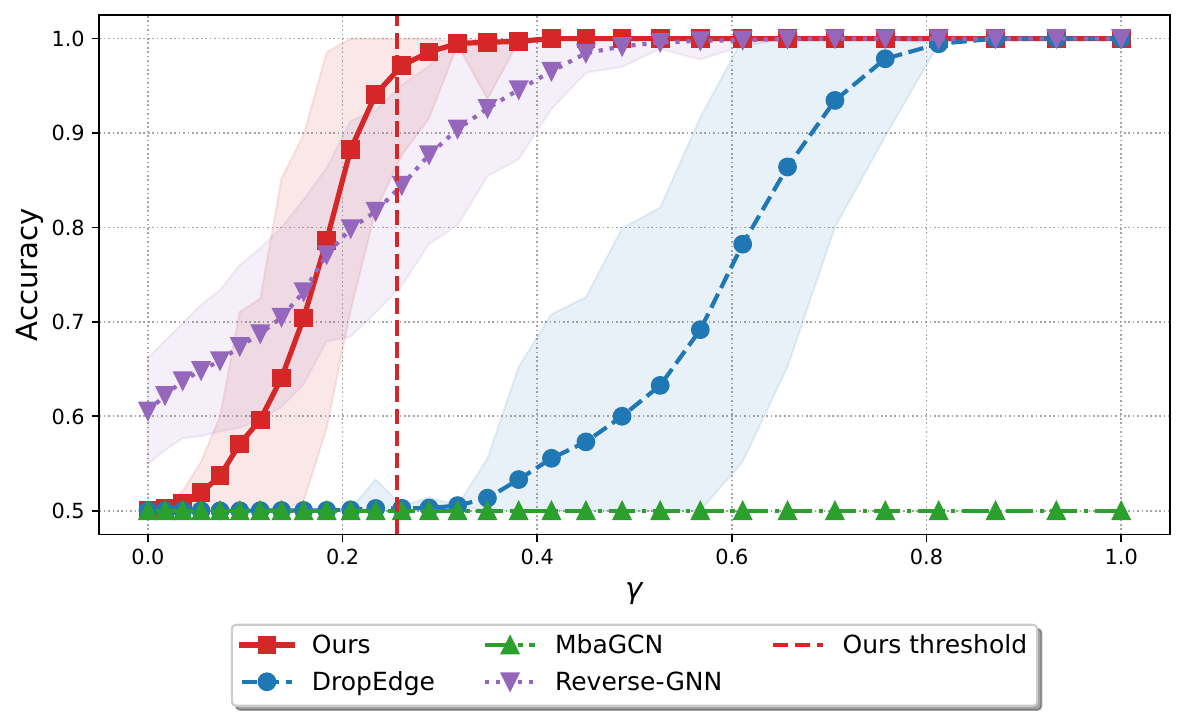}
        \caption{\(k=6\).}
    \end{subfigure}
    \caption{Synthetic CSBM comparison with recent oversmoothing baselines under varying graph signal.}
    \label{fig:sota-synthetic-gamma}
\end{figure}

\subsection{Real Data}
{
We evaluate on \emph{Cora}, \emph{CiteSeer}, \emph{PubMed}, \emph{Reddit}, \emph{ogbn-arxiv}, and
\emph{ogbn-products}.  Table~\ref{tab:dataset_stats} reports dataset statistics.
These experiments assess whether the correction is associated with reduced
depth-dependent degradation on heterogeneous multi-class graphs.
}

\begin{table}[h]
    \centering
    \caption{Statistics of real-world datasets used in our experiments.}
    \label{tab:dataset_stats}
    \resizebox{\textwidth}{!}{
    \begin{tabular}{lcccccc}
    \toprule
    \textbf{Dataset} & \textbf{Nodes} & \textbf{Edges} & \textbf{Features} & \textbf{Classes} & \textbf{Class Imbalance} \\
    \midrule
    \emph{Cora} & 2,708 & 10,556 & 1,433 & 7 & Mild \\
    \emph{CiteSeer} & 3,327 & 9,104 & 3,703 & 6 & Mild \\
    \emph{PubMed} & 19,717 & 88,648 & 500 & 3 & Moderate \\
    \emph{Reddit} & 232,965 & 114,615,892 & 602 & 41 & High \\
    \emph{ogbn-arxiv} & 169,343 & 1,166,243 & 128 & 40 & High \\
    \emph{ogbn-products} & 2,449,029 & 61,859,140 & 100 & 47 & High \\
    \bottomrule
    \end{tabular}
    }
\end{table}

{
\textbf{Balanced two-cluster subsets.}
To align with the balanced CSBM theory, we select the two largest classes in
each dataset and downsample the majority class to obtain a 1:1 class ratio.
In these balanced subsets, corrected propagation maintains more stable accuracy
as depth increases (Figure~\ref{fig:real-sample-layers}), consistent with the
predicted effect of removing the stationary component.
}

\begin{figure}[!ht]
    \centering
    \begin{subfigure}[b]{0.32\textwidth}
        \includegraphics[width=\linewidth]{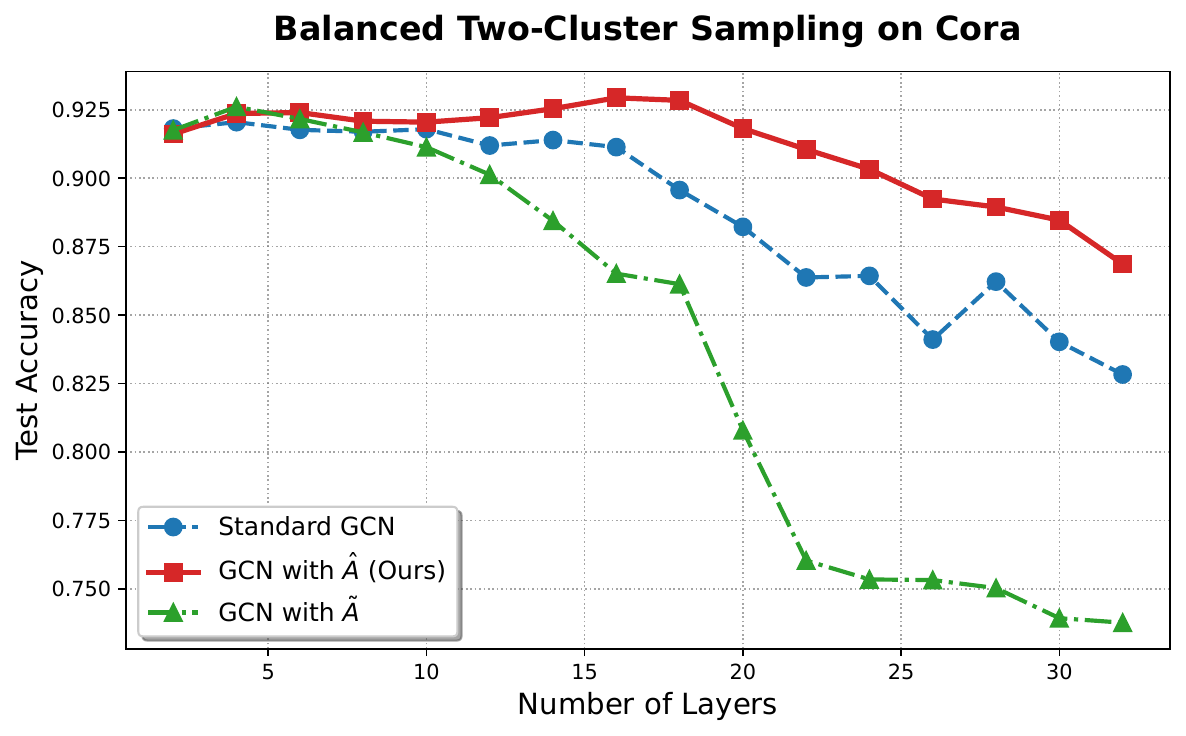}
        \caption{Cora.}
    \end{subfigure}
    \begin{subfigure}[b]{0.32\textwidth}
        \includegraphics[width=\linewidth]{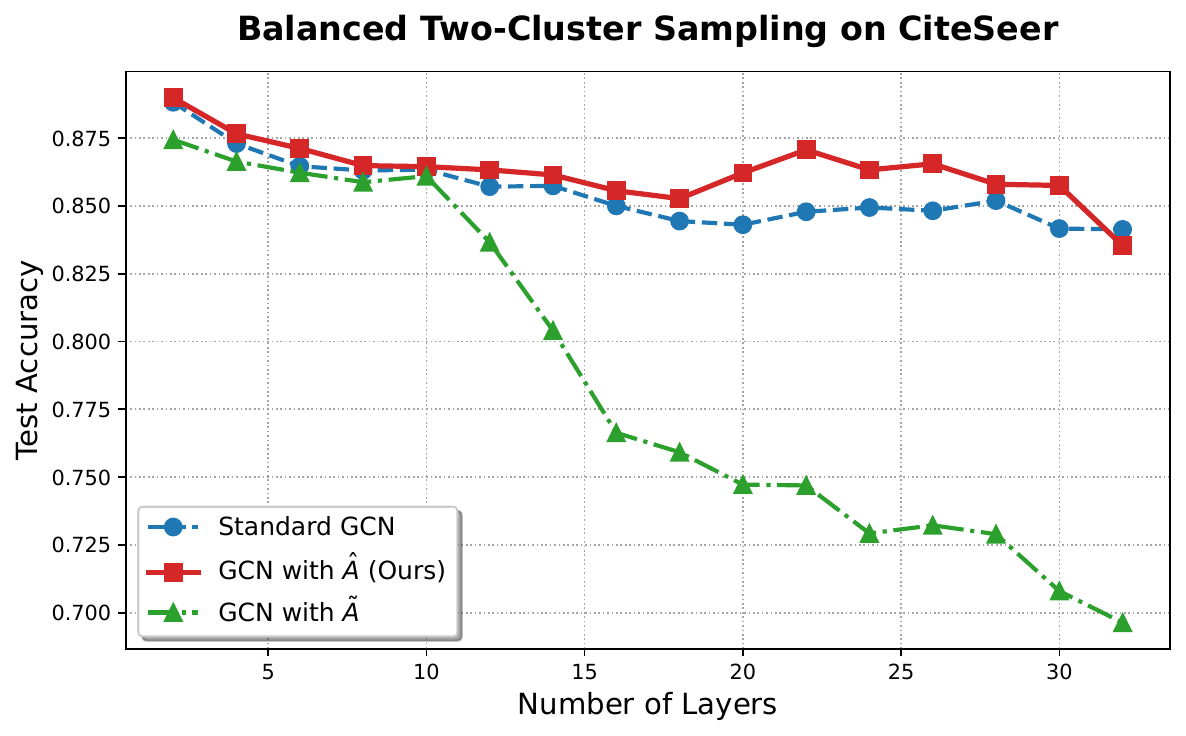}
        \caption{CiteSeer.}
    \end{subfigure}
    \begin{subfigure}[b]{0.32\textwidth}
        \includegraphics[width=\linewidth]{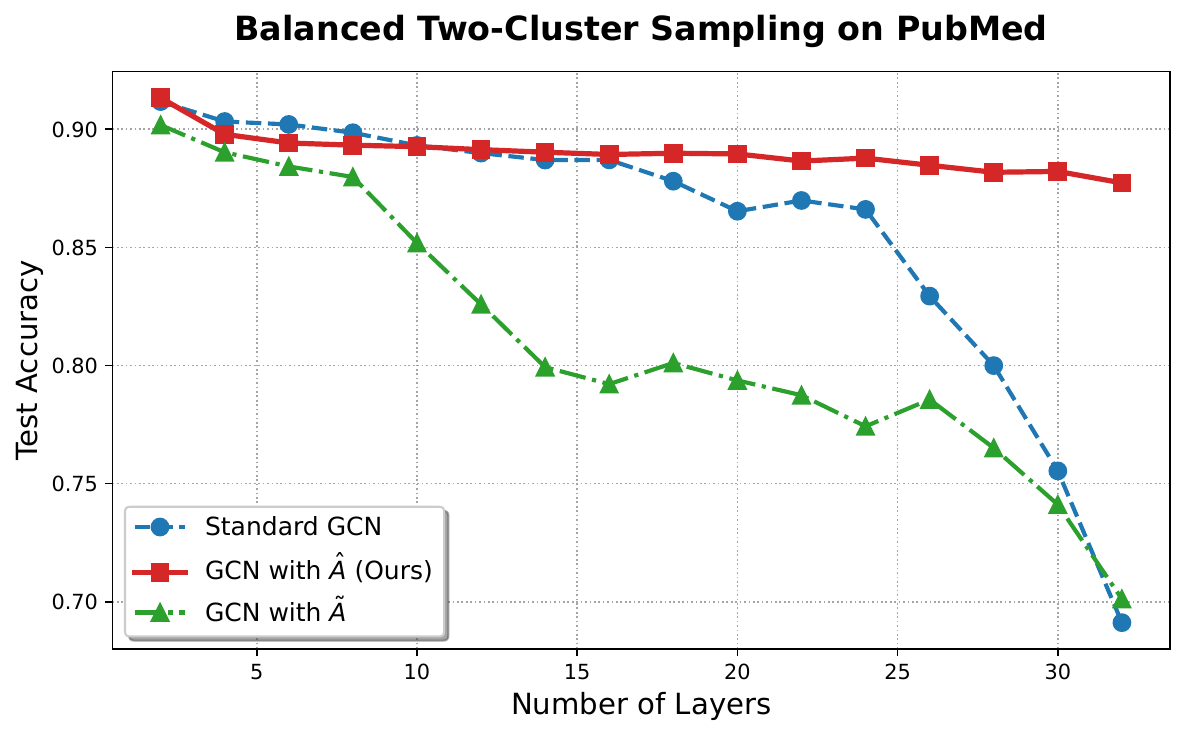}
        \caption{PubMed.}
    \end{subfigure}
    \begin{subfigure}[b]{0.32\textwidth}
        \includegraphics[width=\linewidth]{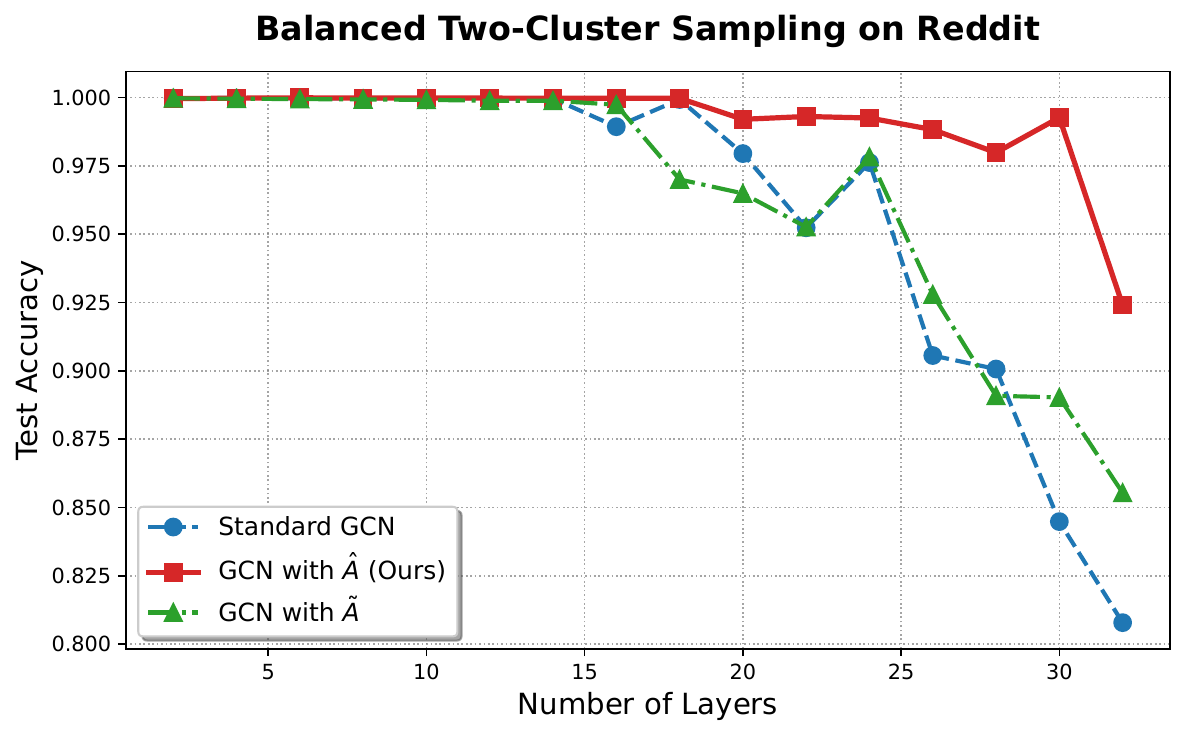}
        \caption{Reddit.}
    \end{subfigure}
    \begin{subfigure}[b]{0.32\textwidth}
        \includegraphics[width=\linewidth]{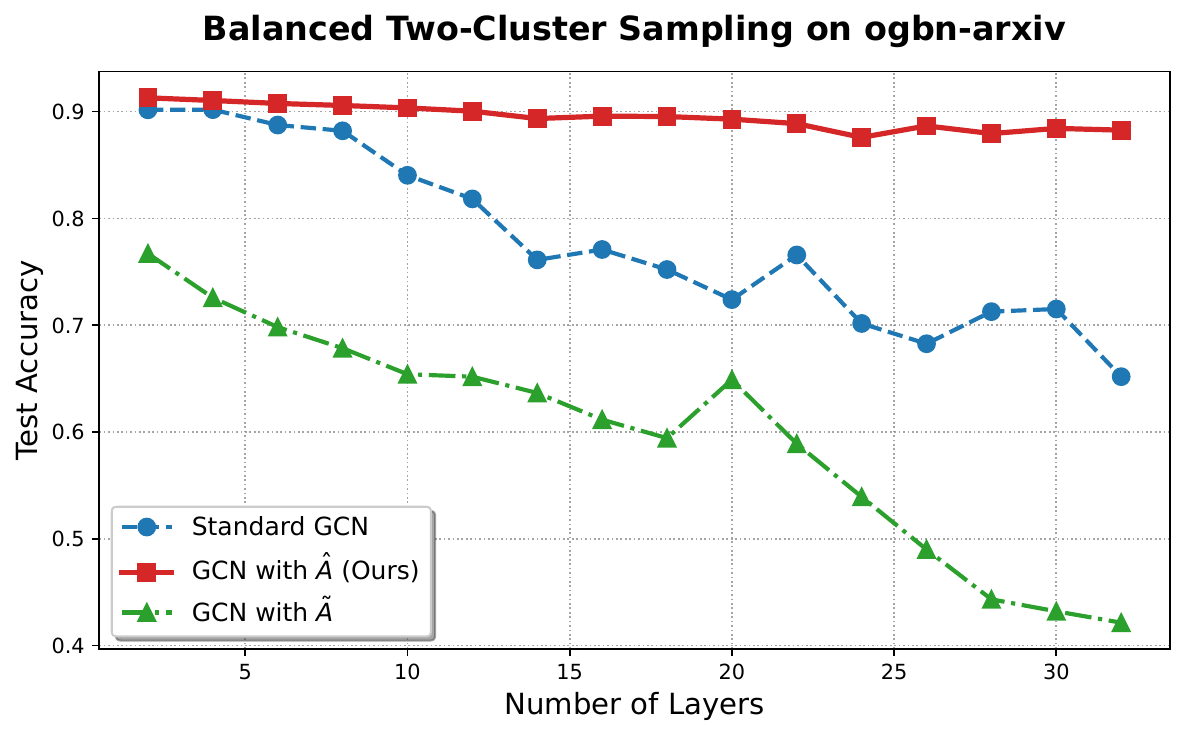}
        \caption{ogbn-arxiv.}
    \end{subfigure}
    \begin{subfigure}[b]{0.32\textwidth}
        \includegraphics[width=\linewidth]{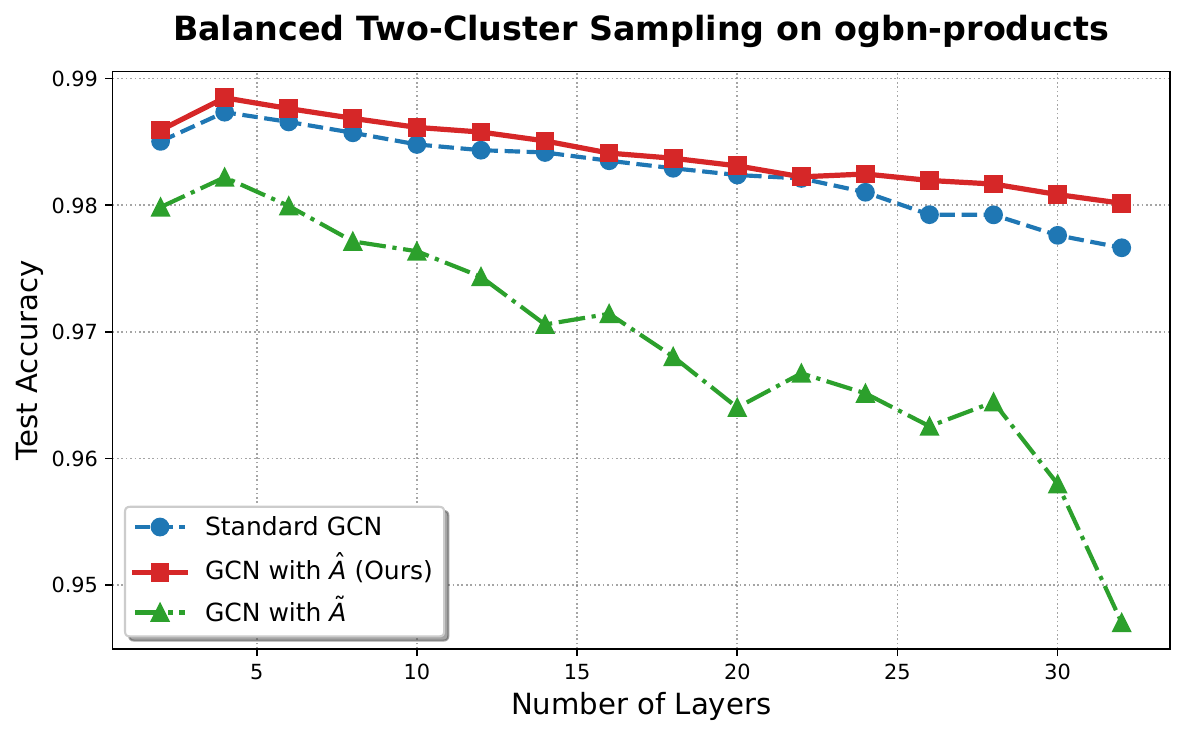}
        \caption{ogbn-products.}
    \end{subfigure}
    \caption{Accuracy versus propagation depth on balanced two-cluster sampled subsets.}
    \label{fig:real-sample-layers}
\end{figure}

{
\textbf{Full Dataset Evaluation.}
On the full datasets, we observe a smaller depth-dependent accuracy drop for
corrected propagation than for standard GCN propagation
(Figure~\ref{fig:real}).  The effect is smaller than on balanced subsets but
remains consistent with the spectral-collapse mechanism.
}

\begin{figure}[!ht]
    \centering
    \begin{subfigure}[b]{0.32\textwidth}
        \includegraphics[width=\linewidth]{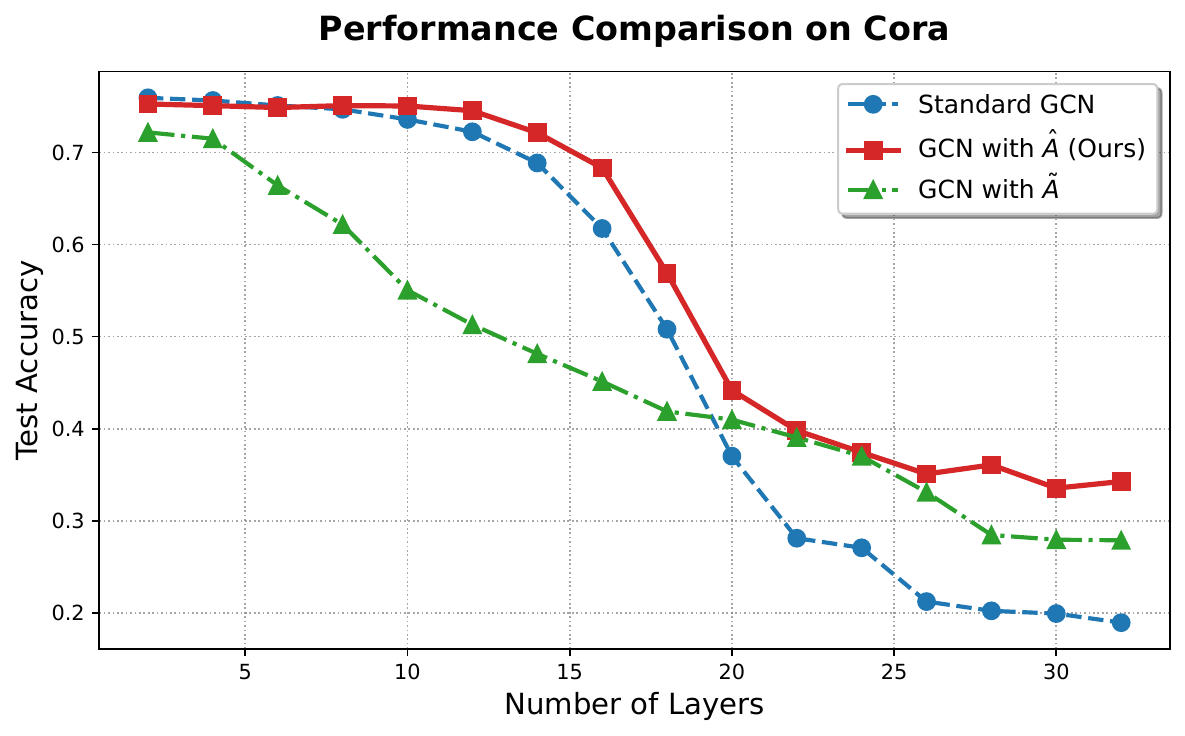}
        \caption{Cora.}
    \end{subfigure}
    \begin{subfigure}[b]{0.32\textwidth}
        \includegraphics[width=\linewidth]{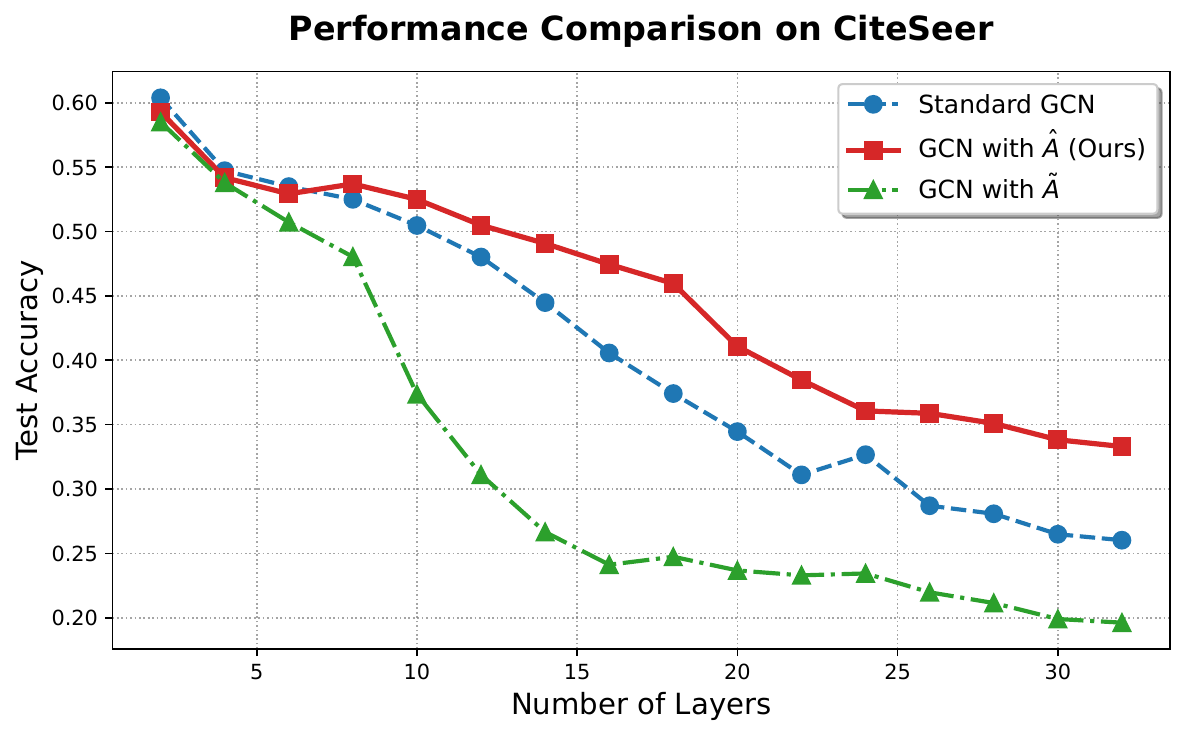}
        \caption{CiteSeer.}
    \end{subfigure}
    \begin{subfigure}[b]{0.32\textwidth}
        \includegraphics[width=\linewidth]{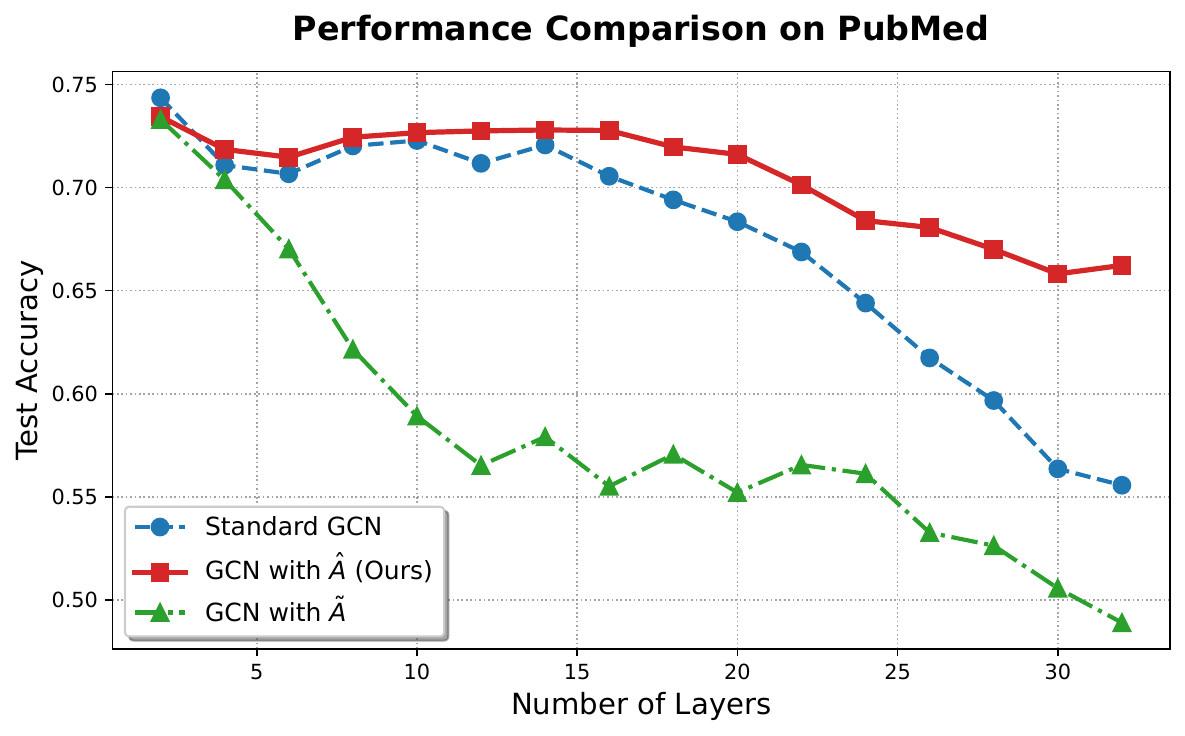}
        \caption{PubMed.}
    \end{subfigure}
    \begin{subfigure}[b]{0.32\textwidth}
        \includegraphics[width=\linewidth]{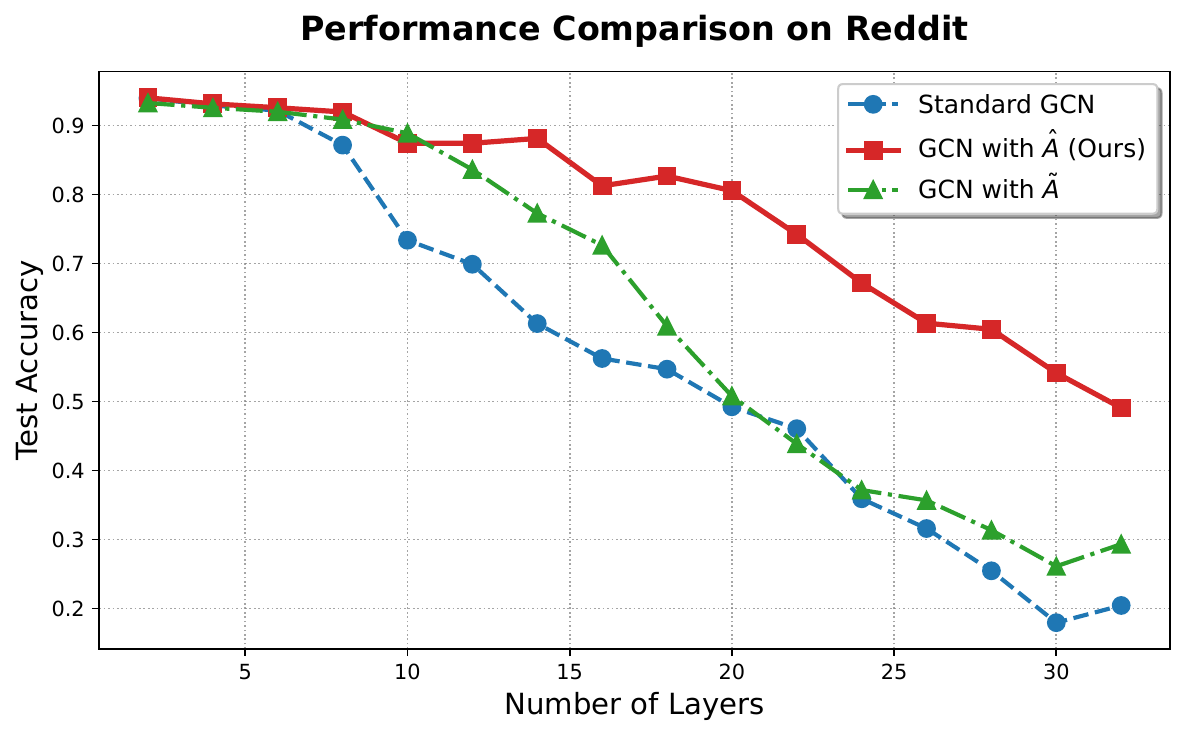}
        \caption{Reddit.}
    \end{subfigure}
    \begin{subfigure}[b]{0.32\textwidth}
        \includegraphics[width=\linewidth]{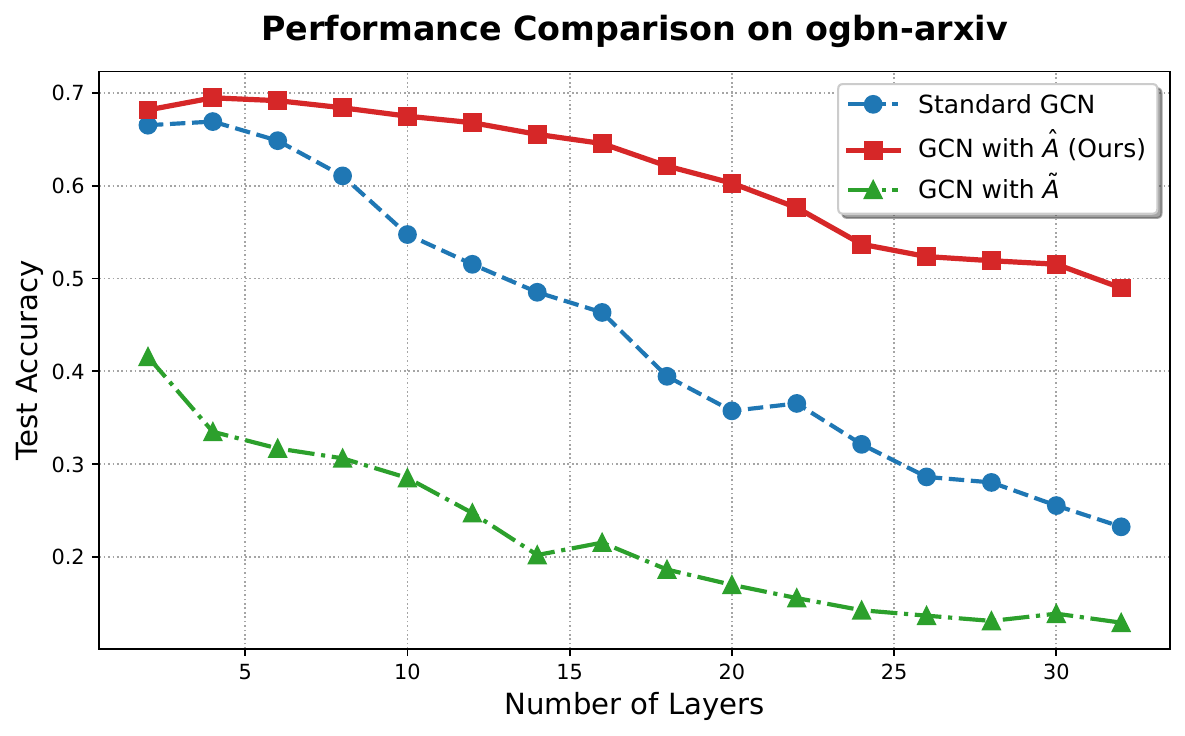}
        \caption{ogbn-arxiv.}
    \end{subfigure}
    \begin{subfigure}[b]{0.32\textwidth}
        \includegraphics[width=\linewidth]{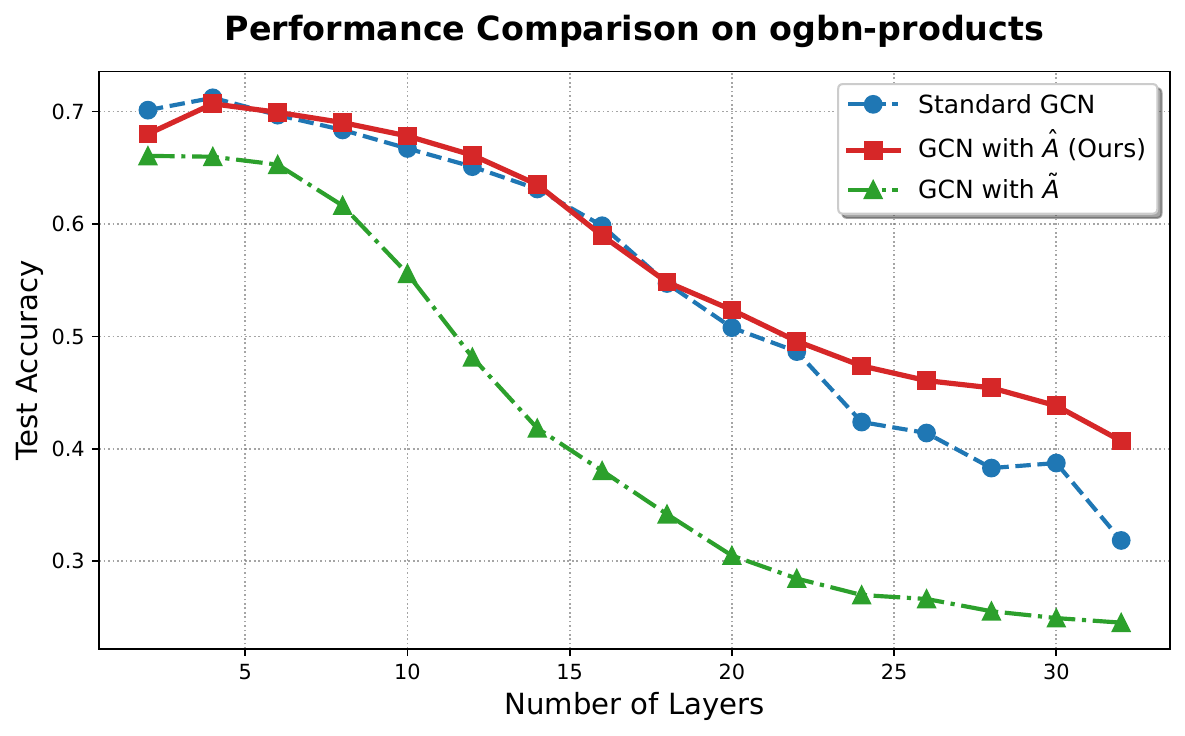}
        \caption{ogbn-products.}
    \end{subfigure}
    \caption{Accuracy versus propagation depth on full real-world datasets.}
    \label{fig:real}
\end{figure}

{
\textbf{Baseline comparison.}  Corrected propagation has comparable accuracy to
recent baselines on both balanced subsets and full datasets
(Figures~\ref{fig:sota-real-sample} and~\ref{fig:sota-real-full}), while using a
training-free operator-level correction.
}

\begin{figure}[!ht]
    \centering
    \begin{subfigure}[b]{0.32\textwidth}
        \includegraphics[width=\linewidth]{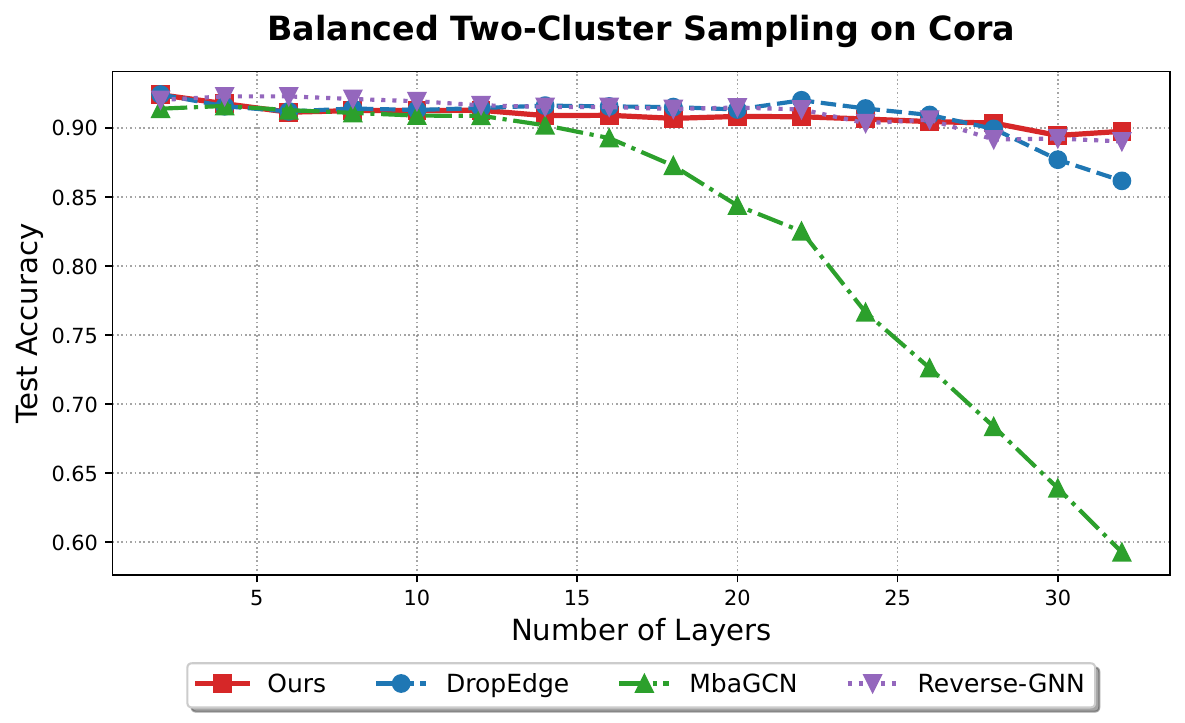}
        \caption{Cora.}
    \end{subfigure}
    \begin{subfigure}[b]{0.32\textwidth}
        \includegraphics[width=\linewidth]{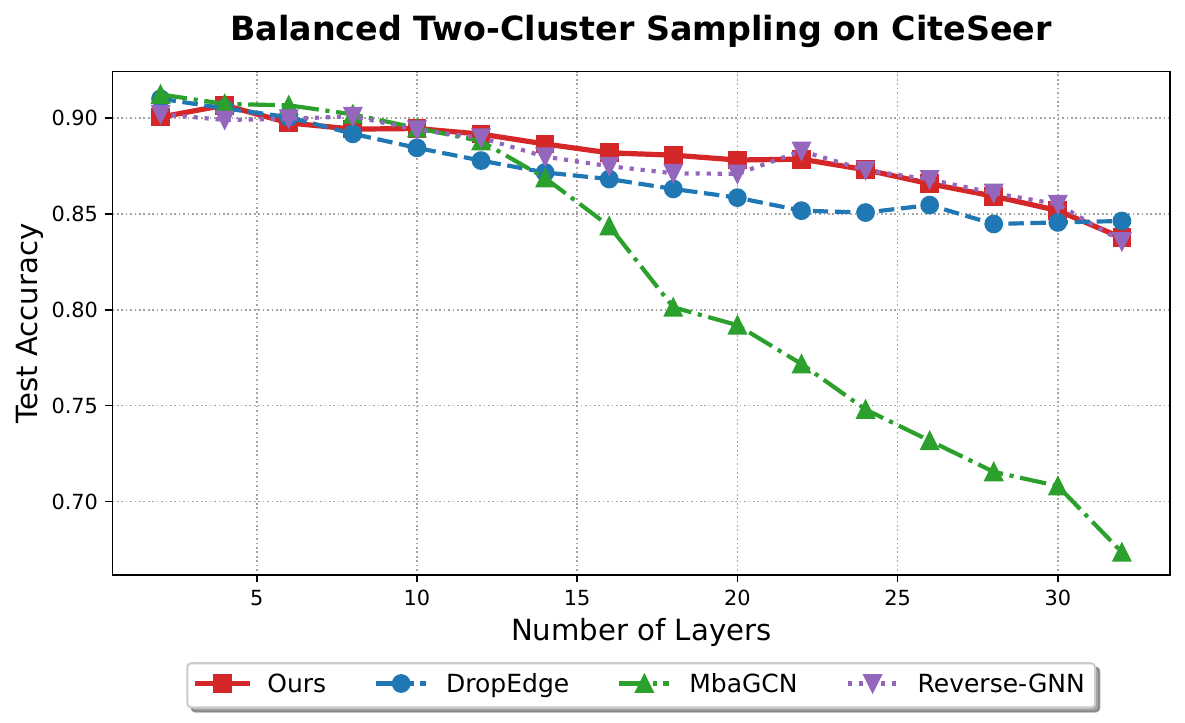}
        \caption{CiteSeer.}
    \end{subfigure}
    \begin{subfigure}[b]{0.32\textwidth}
        \includegraphics[width=\linewidth]{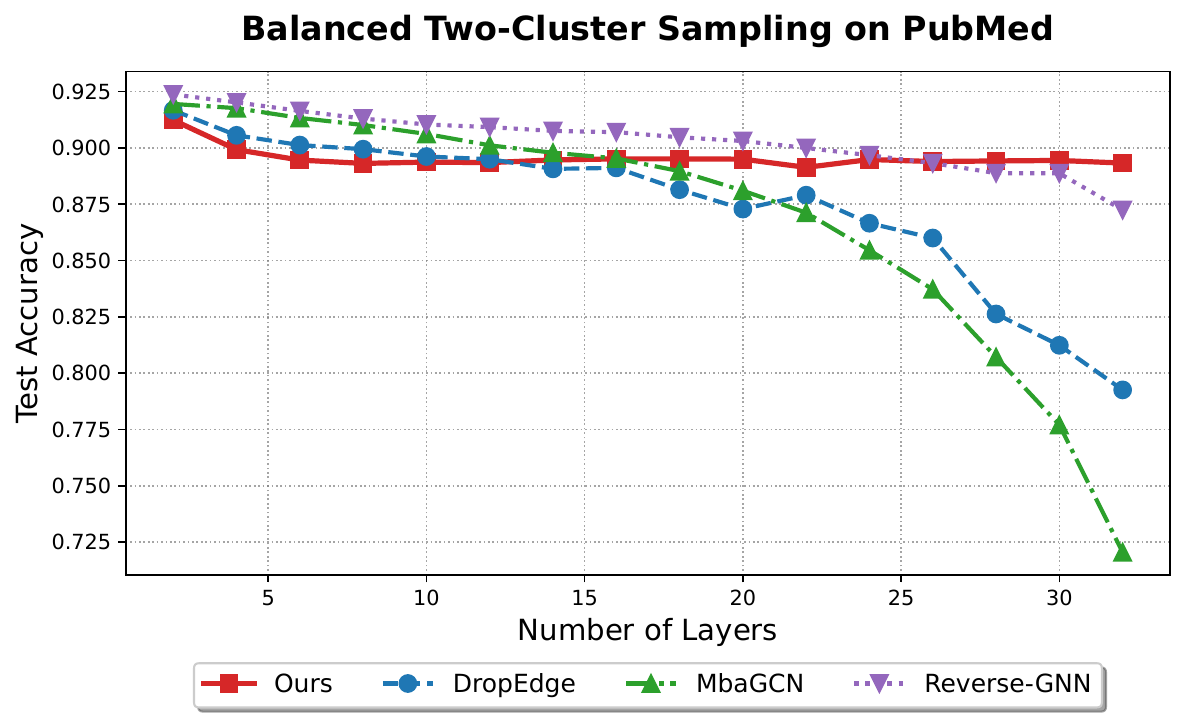}
        \caption{PubMed.}
    \end{subfigure}
    \begin{subfigure}[b]{0.32\textwidth}
        \includegraphics[width=\linewidth]{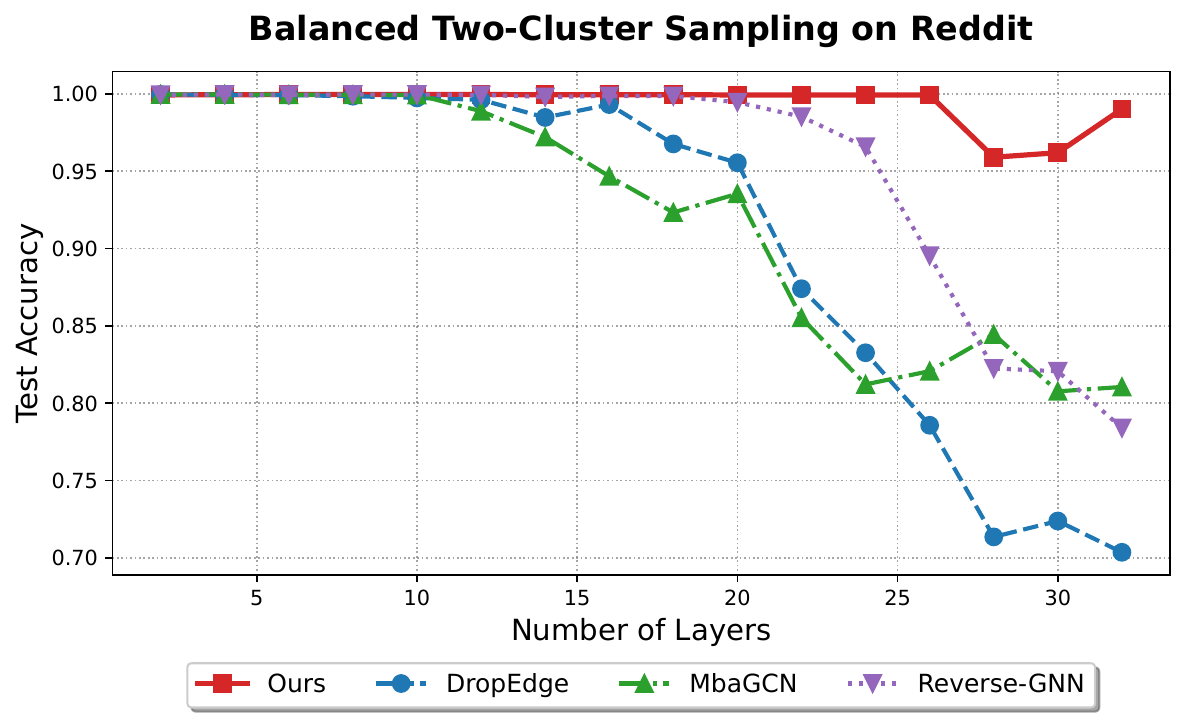}
        \caption{Reddit.}
    \end{subfigure}
    \begin{subfigure}[b]{0.32\textwidth}
        \includegraphics[width=\linewidth]{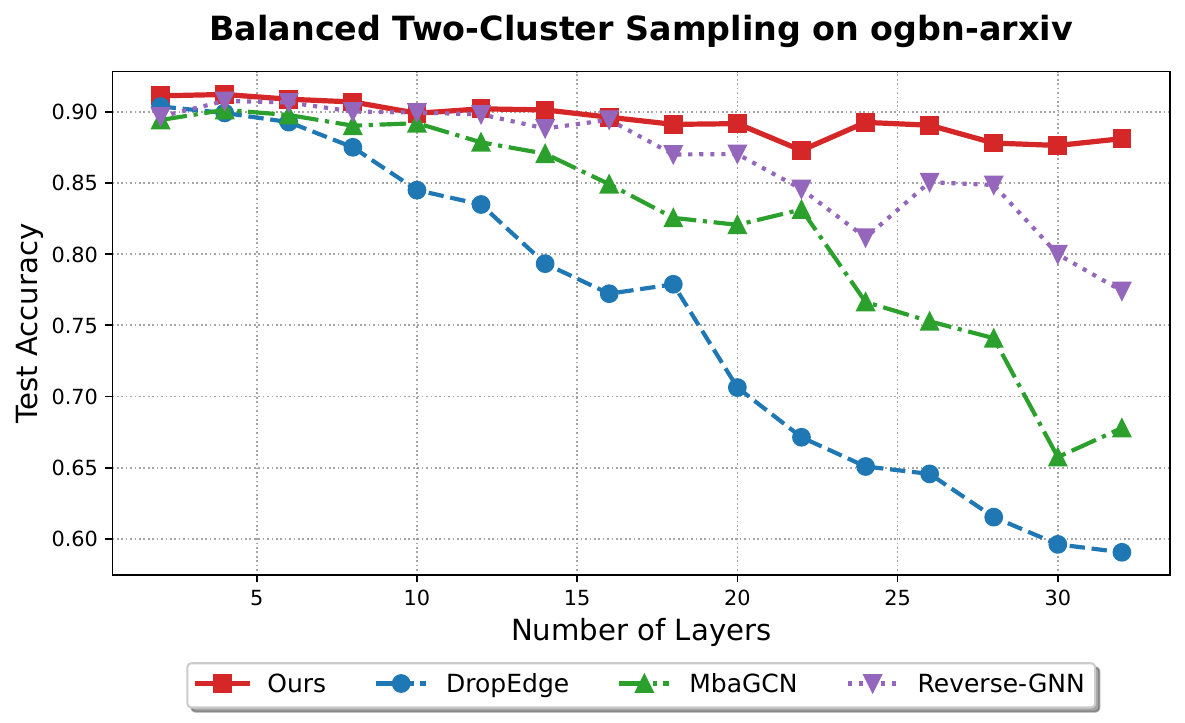}
        \caption{ogbn-arxiv.}
    \end{subfigure}
    \begin{subfigure}[b]{0.32\textwidth}
        \includegraphics[width=\linewidth]{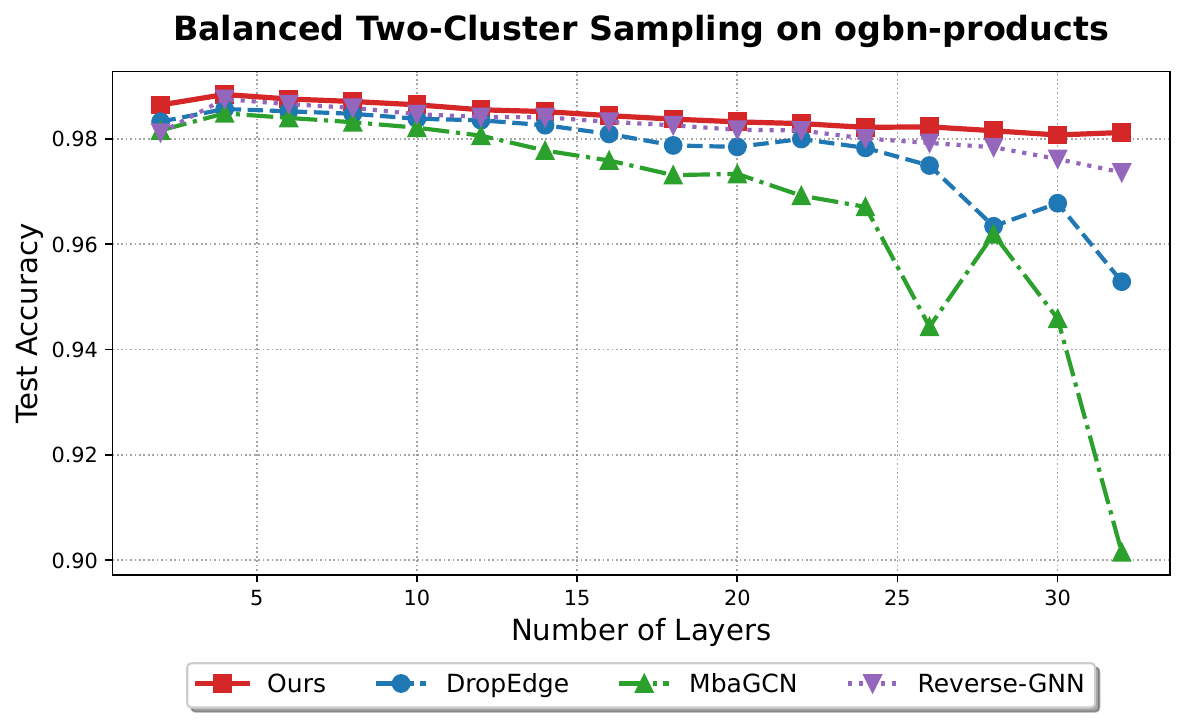}
        \caption{ogbn-products.}
    \end{subfigure}
    \caption{Balanced-subset comparison with recent oversmoothing baselines.}
    \label{fig:sota-real-sample}
\end{figure}

\begin{figure}[!ht]
    \centering
    \begin{subfigure}[b]{0.32\textwidth}
        \includegraphics[width=\linewidth]{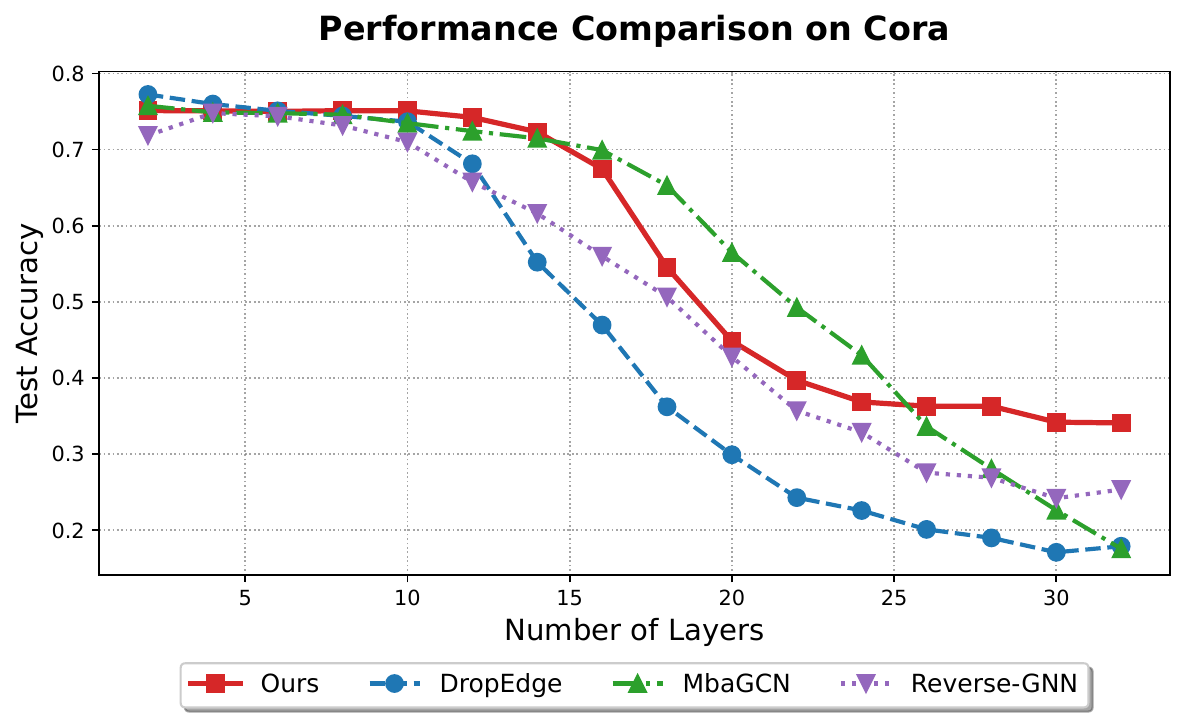}
        \caption{Cora.}
    \end{subfigure}
    \begin{subfigure}[b]{0.32\textwidth}
        \includegraphics[width=\linewidth]{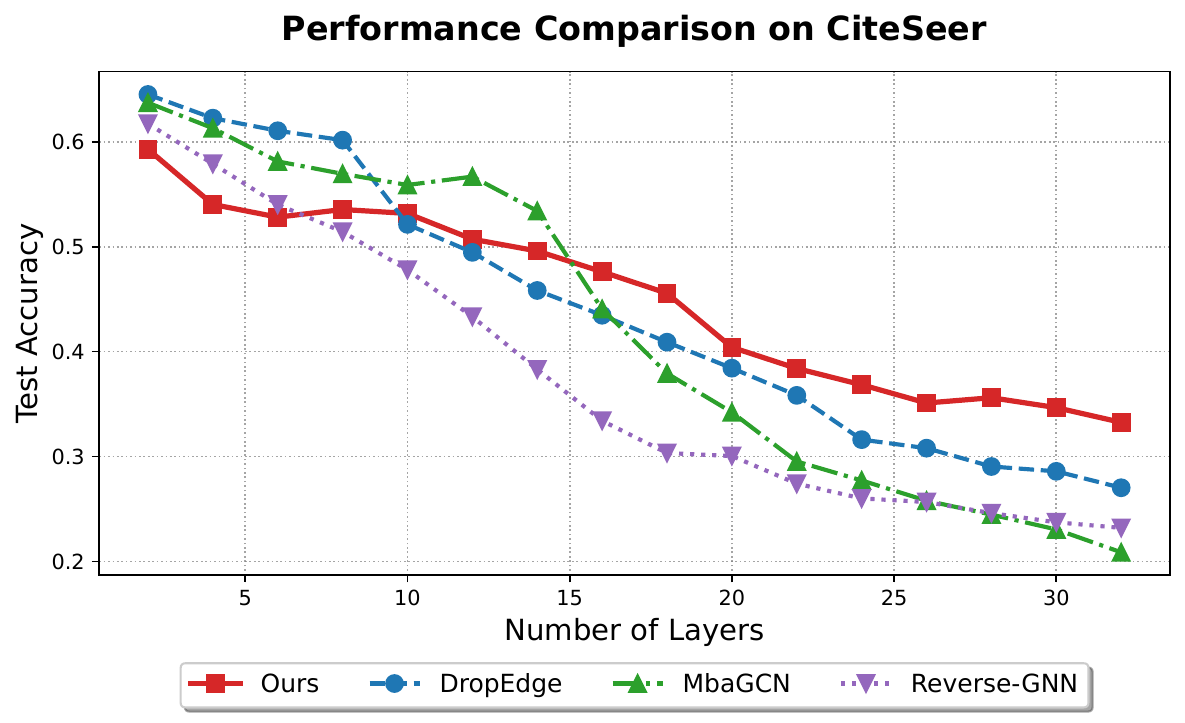}
        \caption{CiteSeer.}
    \end{subfigure}
    \begin{subfigure}[b]{0.32\textwidth}
        \includegraphics[width=\linewidth]{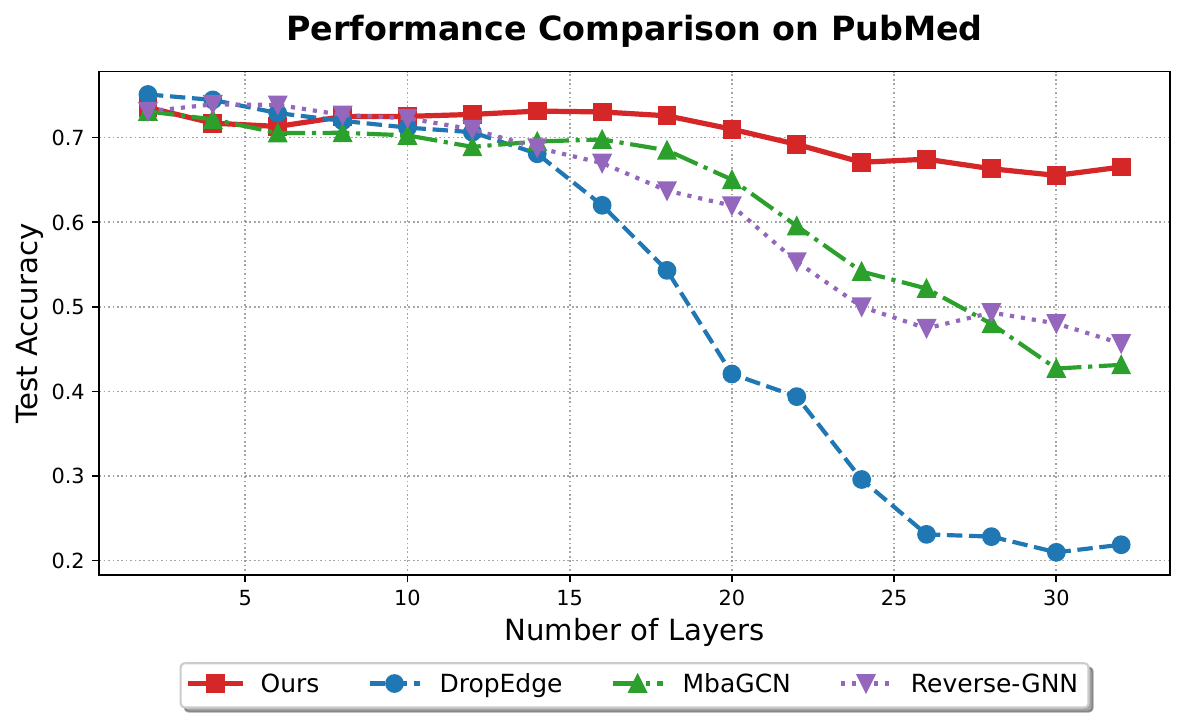}
        \caption{PubMed.}
    \end{subfigure}
    \begin{subfigure}[b]{0.32\textwidth}
        \includegraphics[width=\linewidth]{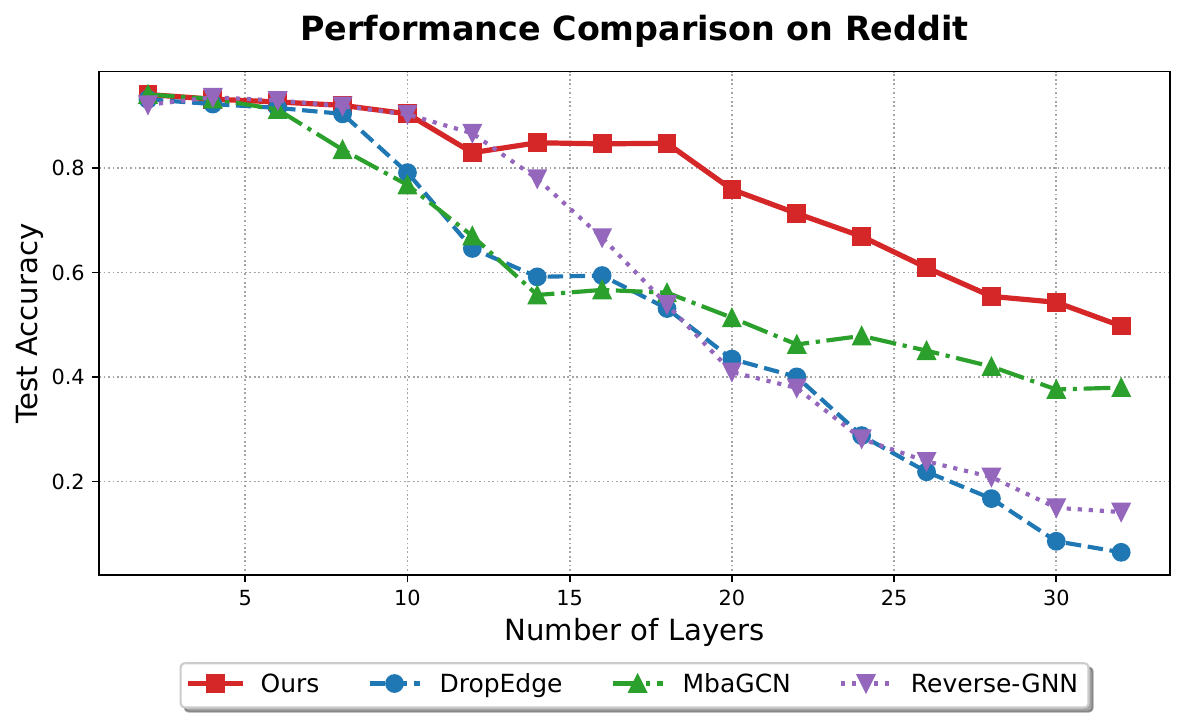}
        \caption{Reddit.}
    \end{subfigure}
    \begin{subfigure}[b]{0.32\textwidth}
        \includegraphics[width=\linewidth]{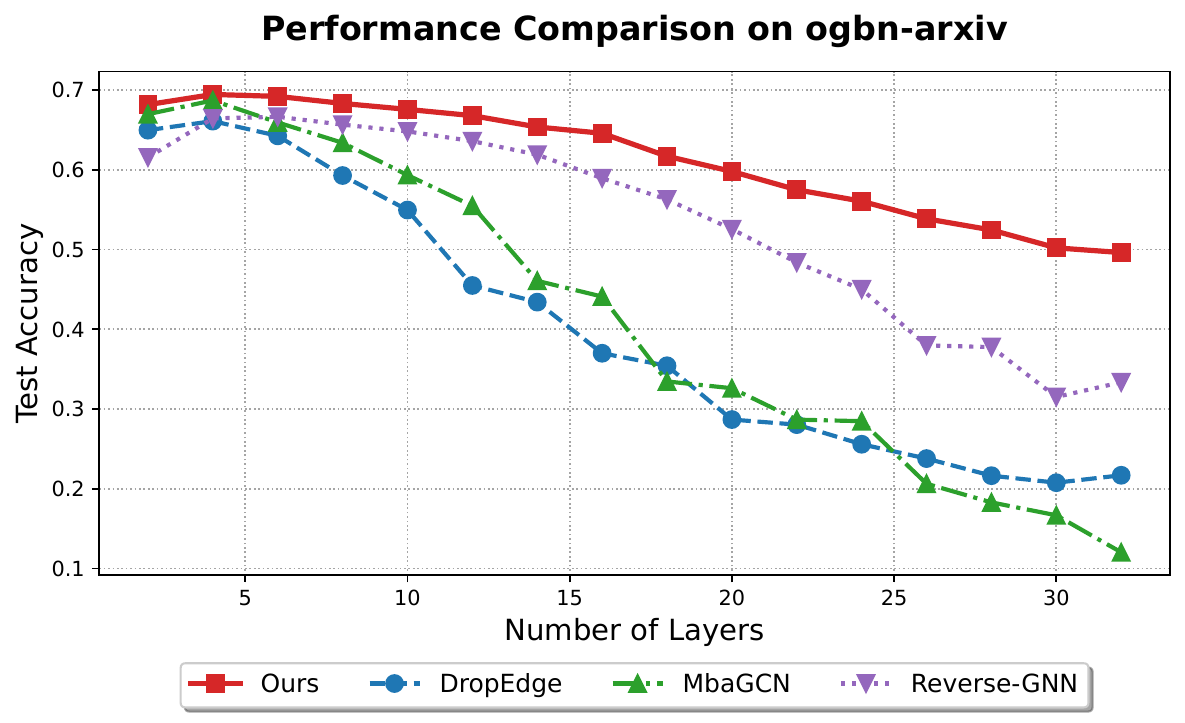}
        \caption{ogbn-arxiv.}
    \end{subfigure}
    \begin{subfigure}[b]{0.32\textwidth}
        \includegraphics[width=\linewidth]{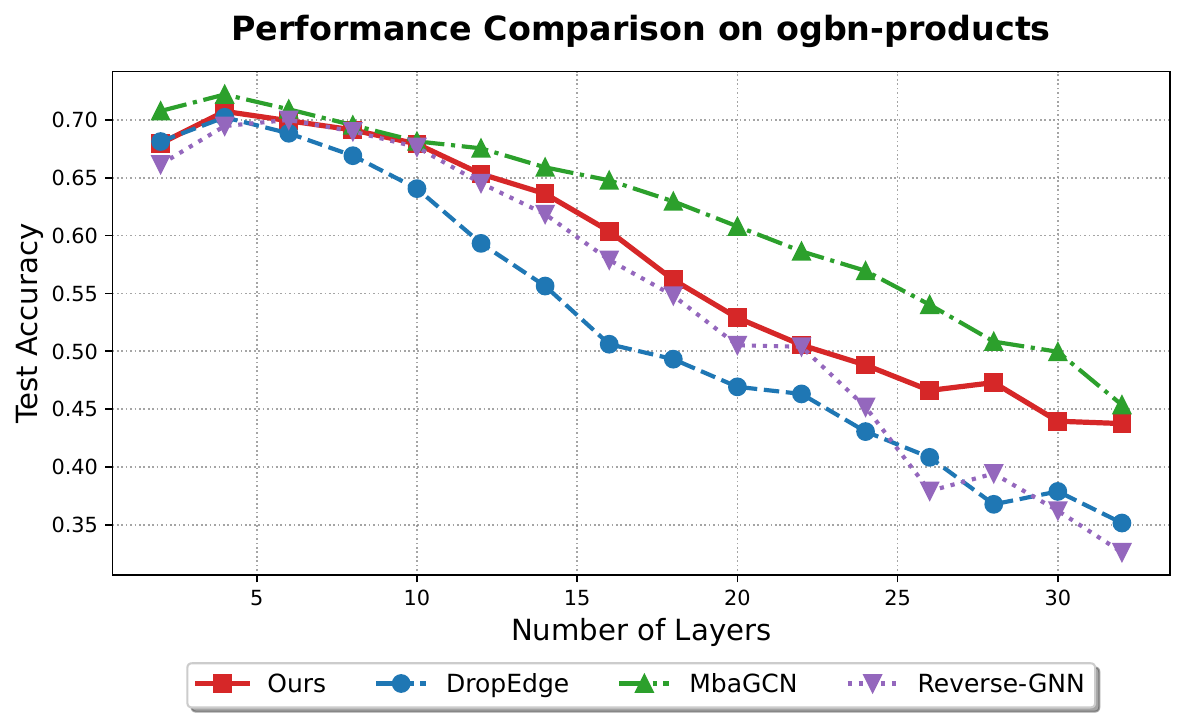}
        \caption{ogbn-products.}
    \end{subfigure}
    \caption{Full-dataset comparison with recent oversmoothing baselines.}
    \label{fig:sota-real-full}
\end{figure}

\section{Discussion and Limitations}
{
The theoretical contribution is threefold.  First, normalized corrected
propagation is identified as a spectral object in its own right: subtracting the
degree-stationary direction of \(D^{-1/2}AD^{-1/2}\) yields an operator whose
signal direction can remain separated for logarithmically many layers under
explicit CSBM conditions.  Second, the exact-recovery and partial-recovery
theorems connect this spectral correction to node classification guarantees.
Third, the proof develops normalized residual-power bounds that are entrywise
rather than only spectral-norm bounds, which is the level needed for exact
recovery.

The main technical message is that normalized correction requires tools beyond
the unnormalized analysis.  Random degrees enter through nonlinear factors and
also appear inside the correction term.  Our proof handles these effects by
expanding each normalized residual entry into positive-level atoms and then
counting the decorated walk patterns that survive in high moments.  This is the
main normalized-specific component of the theory.

Several limitations remain.  First, the CSBM does not capture all degree
heterogeneity, heterophily, or label semantics in real graphs.  Second, the
guarantees analyze the linear propagation backbone rather than nonlinear
activations, attention, optimization, or learned feature transformations.  Third,
the binary theorem is proved in the dense polylogarithmic regime
\(np\ge C\log^B n\) for fixed \(B>4\), leaving the sharper \(np\ge C\log^3 n\)
normalized threshold open.  Finally, very large graphs require scalable
approximations of the stationary-component correction.
}

\section{Conclusion and Future Work}
{
We developed a spectral theory of normalized corrected GNN propagation.  The
main result proves exact recovery in the binary CSBM after \(k=O(\log n)\)
layers under dense polylogarithmic graph density and explicit signal-to-noise
assumptions.  A multi-class theorem gives partial recovery for most nodes under
analogous signal-dominance conditions.  The central technical ingredient is
entrywise control of normalized residual powers, obtained by atom expansion and
decorated-walk counting.  Future work includes sharpening the density threshold,
extending the theory to heterophilous and degree-heterogeneous models,
incorporating nonlinear trained GNN layers, and developing scalable
approximations to the stationary correction.
}

\section{Acknowledgments}
{
This work was supported by the National Key RD Program of China 
(Grant No. 2023YFA1010202), the Central Guidance on Local Science 
and Technology Development Fund of Fujian Province (Grant No. 2023L3003), 
the National Natural Science Foundation of China (Grant No.11901094), 
the Interdisciplinary Frontier Research Project of PCL (Grant No. 2025QYB015).
}
\paragraph{AI Assistant Declaration:} 
During the preparation of this work, the authors used Gemini and ChatGPT to 
improve the readability and language of the manuscript, and to assist with 
LaTeX formatting. After using these tools, the authors reviewed and edited 
the content as needed and take full responsibility for the content of the publication.

\bibliographystyle{tmlr}
\bibliography{references}

\newpage
\appendix

\section{Mathematical Tools}
\label{appendix:mathematical_tools}

To rigorously analyze the spectral properties of the corrected graph convolution and 
its effect on node features, we require a set of fundamental probabilistic and algebraic tools. 
These tools allow us to bound the spectral norms of random matrices, control eigenvalue perturbations, 
and manage the concentration of scalar sums that arise from the CSBM's stochastic generation process.

We begin with a foundational result from matrix perturbation theory, which 
guarantees that small changes to a symmetric matrix result in bounded shifts in its eigenvalues.

\begin{theorem}\label{theorem:matrix-perturbation}
    Let $A$ and $B$ be $n \times n$ symmetric matrices. Let $\lambda_1 \geq \lambda_2 \geq \dots \geq \lambda_n$ and $\mu_1 \geq \mu_2 \geq \dots \geq \mu_n$ be the eigenvalues of $A$ and $B$, respectively. If $\norm{A-B} \leq \delta$, then $\max_i |\lambda_i - \mu_i| \leq \delta$.
\end{theorem}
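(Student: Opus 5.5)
The plan is to prove the stated eigenvalue perturbation bound, Theorem~\ref{theorem:matrix-perturbation}, which is Weyl's inequality. We use the Courant--Fischer min--max characterization of the eigenvalues of a symmetric matrix. For symmetric $M$ with eigenvalues in decreasing order,
\[
\lambda_i(M)=\max_{\dim W=i}\ \min_{x\in W,\ \|x\|=1} x^\top M x .
\]
The whole argument consists of comparing the Rayleigh quotients of $A$ and $B$ pointwise and then pushing that comparison through the max and the min.

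\textbf{Step 1 (pointwise comparison).} Set $E=A-B$. Then $E$ is symmetric and $\|E\|\le\delta$. By Cauchy--Schwarz, every unit vector $x$ satisfies
\[
|x^\top E x|\le \|x\|\,\|Ex\|\le\|E\|\le\delta .
\]
Hence $x^\top A x\le x^\top Bx+\delta$ and $x^\top Bx\le x^\top Ax+\delta$ for all unit $x$.

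\textbf{Step 2 (passing through min--max).} Fix $i$ and any subspace $W$ of dimension $i$. Taking the minimum over unit $x\in W$ of the first inequality gives
\[
\min_{x\in W} x^\top Ax\le \min_{x\in W} x^\top Bx+\delta\le \mu_i+\delta .
\]
The last step is the Courant--Fischer characterization of $\mu_i$ as the maximum of such minima. Now maximize the left-hand side over all $W$ with $\dim W=i$; this yields $\lambda_i\le\mu_i+\delta$. Exchanging the roles of $A$ and $B$ gives $\mu_i\le\lambda_i+\delta$. So $|\lambda_i-\mu_i|\le\delta$ for every $i$, and taking the maximum over $i$ finishes the proof.

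There is no real obstacle here. The only point needing care is that Courant--Fischer is applied with the same ordering convention (decreasing) for both matrices, so that the $i$-th eigenvalues are matched correctly. The existence of the max and min in Courant--Fischer follows from compactness of the unit sphere in $W$ together with the spectral theorem, and we take both as standard. An alternative route would use the Cauchy interlacing argument via a dimension count, intersecting the span of the top $i$ eigenvectors of $A$ with the span of the bottom $n-i+1$ eigenvectors of $B$. The min--max route above is shorter, so that is the one to use.
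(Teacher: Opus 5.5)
Your proof is correct and complete: the pointwise bound $|x^\top(A-B)x|\le\delta$, pushed through the Courant--Fischer max--min in both directions, gives exactly $|\lambda_i-\mu_i|\le\delta$ with the orderings matched. The paper gives no proof of its own here, since it lists this result (Weyl's inequality) in Appendix~A as a standard tool, so your argument supplies the textbook proof that the paper takes for granted.
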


Furthermore, we often need to bound the spectral norm of specific sparse symmetric matrices 
encountered in our graph constructions. The following theorem provides a general upper bound 
based on row sparsity and entry magnitude.

\begin{theorem}
\label{theorem:bound-norm-matrix}
Let $M$ be an $n\times n$ symmetric matrix such that each row of $M$ has at most $m$ non-zero 
entries and each entry of $M$ has an absolute value at most $\eps$. Then $\norm{M}\leq \eps m$.
\end{theorem}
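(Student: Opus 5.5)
The bound \(\|M\|\le \varepsilon m\) follows from the row-sum (Schur) test: for a symmetric matrix the operator norm is controlled by the largest absolute row sum. The plan is to show \(\|M\|\le \|M\|_{\infty\to\infty}\), then bound the right-hand side directly from the sparsity and magnitude assumptions.

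First, each row has at most \(m\) nonzero entries, each of absolute value at most \(\varepsilon\). Hence
\[
\max_i\sum_j |M_{ij}| \;\le\; \varepsilon m,
\]
which is exactly \(\|M\|_{\infty\to\infty}\le \varepsilon m\) in the notation of the preliminaries. Symmetry gives the same bound on the maximum absolute column sum, \(\|M\|_{1\to 1}=\|M^\top\|_{\infty\to\infty}\le\varepsilon m\).

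Second, compare the operator norm with these row and column sums. For a unit vector \(x\), apply Cauchy--Schwarz row by row, splitting \(|M_{ij}||x_j|=|M_{ij}|^{1/2}\cdot |M_{ij}|^{1/2}|x_j|\):
\[
\|Mx\|^2=\sum_i\Bigl(\sum_j M_{ij}x_j\Bigr)^2\le \sum_i\Bigl(\sum_j|M_{ij}|\Bigr)\Bigl(\sum_j |M_{ij}|x_j^2\Bigr).
\]
Bound the first factor by \(\varepsilon m\) and pull it out. Then exchange the order of summation in the remaining double sum:
\[
\sum_i\sum_j |M_{ij}|x_j^2=\sum_j x_j^2\sum_i|M_{ij}|\le \varepsilon m\,\|x\|^2,
\]
where the last step uses the column-sum bound, which is where symmetry enters. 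Together, \(\|Mx\|^2\le(\varepsilon m)^2\|x\|^2\), so \(\|M\|\le\varepsilon m\).

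An alternative route uses an eigenvector \(v\) of \(M\) whose eigenvalue \(\lambda\) has maximal modulus. Choose a coordinate \(i\) maximizing \(|v_i|\); then \(|\lambda||v_i|=\bigl|\sum_j M_{ij}v_j\bigr|\le \varepsilon m |v_i|\). Since \(M\) is symmetric, \(\|M\|=\max|\lambda|\), giving the same bound. Either argument is essentially routine. The only point requiring care is that the Cauchy--Schwarz route needs the column-sum bound, which is why the symmetry hypothesis matters; without it one would only get \(\sqrt{\|M\|_{1\to1}\|M\|_{\infty\to\infty}}\).
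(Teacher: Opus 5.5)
Your proof is correct, and there is no paper proof to compare it against: the paper lists this statement among the standard tools in Appendix~\ref{appendix:mathematical_tools} and does not prove it. Either of your arguments fills that gap. The Schur-test/Cauchy--Schwarz bound uses symmetry to control the column sums. The maximal-coordinate eigenvector argument uses symmetry to identify $\|M\|$ with the spectral radius. Both are complete.

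Your first argument actually proves something stronger than the statement: $\|M\|\le\max_i\sum_j|M_{ij}|$ for every symmetric $M$. That row-sum form is what the paper actually invokes in Lemma~\ref{lemma:1} and Lemma~\ref{lem:multiclass-normalization}. There it bounds absolute row sums directly rather than counting nonzero entries times the maximal entry, so your version is the one those lemmas need.
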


While spectral bounds control the global operator behavior, 
local graph properties, such as node degrees and edge counts, 
are sums of independent Bernoulli random variables. To establish high-probability 
bounds on these quantities, we employ the standard Chernoff concentration inequality.

\begin{theorem}
\label{theorem:bound-sum-bernoulli}
Let $X_1,...X_n$ be independent Bernoulli random variables with mean at most $p$, and let $S_n = \sum_{i=1}^nX_i$. 
Then for any $t>0$,
\[
    \Pr{|S_n - \E[S_n]| > t} \leq \exp\left(-\Omega\left(\frac{t^2}{\E[S_n]}\right)\right).
\]
\end{theorem}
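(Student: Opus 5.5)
We would prove the statement by the standard moment-generating-function (Cramér--Chernoff) method, treating the upper and lower tails separately and combining them with a union bound. Write \(\mu_0:=\E[S_n]=\sum_i p_i\), where \(p_i\le p\) is the mean of \(X_i\). For the upper tail, fix \(\lambda>0\) and apply Markov's inequality to \(e^{\lambda S_n}\). Independence factorizes the moment generating function, and \(1+x\le e^x\) gives
\[
\E\bigl[e^{\lambda S_n}\bigr]=\prod_{i=1}^n\bigl(1+p_i(e^\lambda-1)\bigr)\le \exp\bigl(\mu_0(e^\lambda-1)\bigr).
\]
Choosing \(\lambda=\log(1+t/\mu_0)\) yields
\[
\Pr{S_n-\mu_0>t}\le \exp\bigl(-\mu_0\, h(t/\mu_0)\bigr),\qquad h(x)=(1+x)\log(1+x)-x .
\]

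For the lower tail we apply the same argument to \(e^{-\lambda S_n}\), again using \(1-p_i(1-e^{-\lambda})\le \exp(-p_i(1-e^{-\lambda}))\). This gives \(\Pr{S_n-\mu_0<-t}\le \exp(-\mu_0 h(-t/\mu_0))\) for \(0<t<\mu_0\). The probability is trivially zero when \(t\ge\mu_0\), since \(S_n\ge0\).

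The remaining step is elementary calculus on \(h\). We would show \(h(x)\ge x^2/(2(1+x/3))\) for \(x\ge-1\), either by comparing second derivatives or by citing Bennett's inequality. This yields the Bernstein-type form
\[
\Pr{|S_n-\mu_0|>t}\le 2\exp\Bigl(-\frac{t^2}{2(\mu_0+t/3)}\Bigr).
\]
In the regime \(t\le c\,\mu_0\) for a constant \(c\), the denominator is \(O(\mu_0)\), so we get the claimed \(\exp(-\Omega(t^2/\E[S_n]))\). The factor \(2\) is absorbed into the \(\Omega\) once \(t^2/\mu_0\) exceeds a constant; for smaller \(t^2/\mu_0\), the \(\Omega\)-statement is vacuous up to constants.

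The main obstacle is not technical but a matter of scope. For \(t\gg\mu_0\) the true tail decays only like \(\exp(-\Theta(t\log(t/\mu_0)))\), not like \(\exp(-\Omega(t^2/\mu_0))\). The statement as written is therefore valid only for deviations \(t=O(\E[S_n])\), or else its exponent should read \(\Omega(\min\{t^2/\E[S_n],\,t\})\). We would make this restriction explicit and check that every later application uses it in that regime. For example, degree concentration uses \(t=C\sqrt{np\log n}\le np\) precisely because \(np\ge C_d\log^B n\).
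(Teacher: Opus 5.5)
The paper gives no proof of this statement. It is quoted in Appendix~A as ``the standard Chernoff concentration inequality'' and used as a black box, so there is no argument in the paper to compare yours with.

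Your Cram\'er--Chernoff route is the standard proof, and it is correct. The choices \(\lambda=\log(1+t/\mu_0)\) and \(\lambda=-\log(1-t/\mu_0)\) do give \(\exp(-\mu_0h(\pm t/\mu_0))\). The second-derivative comparison works on all of \(x>-1\):
\begin{itemize}
\item \(h''(x)=1/(1+x)\);
\item the second derivative of \(x^2/(2(1+x/3))\) is \(27/(3+x)^3\);
\item \((3+x)^3\ge 27(1+x)\), because with \(y=1+x\) the function \((2+y)^3-27y\) is minimized at \(y=1\), where it vanishes.
\end{itemize}
Since both functions and their first derivatives vanish at \(0\), \(h(x)\ge x^2/(2(1+x/3))\) follows. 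The only gap is the degenerate case \(\mu_0=0\), where \(S_n=0\) almost surely; set it aside before dividing by \(\mu_0\).

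Your scope caveat points to a genuine defect in the statement, not in your proof. Take \(X_i\sim\mathrm{Bernoulli}(1/n)\), so \(\mu_0=1\), and \(t=\sqrt n\). The event \(S_n=k:=\lceil\sqrt n\rceil+2\) has probability at least \(\binom nk n^{-k}(1-1/n)^n\ge k^{-k}/4=\exp(-O(\sqrt n\log n))\). This exceeds \(C\exp(-cn)\) for any fixed \(c,C>0\) once \(n\) is large. So the bound holds only for \(t=O(\mathbb E[S_n])\), or with exponent \(\Omega(\min\{t^2/\mathbb E[S_n],t\})\), which is exactly what your Bernstein form yields.

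Your remark that a constant prefactor must be implicit is also necessary: with \(\mu_0=1/2\) and \(t=1/4\) the probability equals \(1\). The paper itself writes \(2\exp(-c\varepsilon^2\bar d)\) when it applies the bound.

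All of the paper's invocations lie in the valid regime:
\begin{itemize}
\item \(t/\mathbb E|E|=O(\sqrt{\log n/(n^2p)})\) in Lemma~\ref{prop:degree-concentration};
\item \(t/\mathbb E d_i=O(\log n/\sqrt{np})\) for the event \(\mathcal G\) of Lemma~\ref{lem:entrywise-atom-expansion};
\item \(t/\mathbb E d_v=O(\sqrt{\log n/\bar d})=O(1)\) in Lemmas~\ref{lemma:1} and~\ref{lem:multiclass-normalization}.
\end{itemize}

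One small correction to your last sentence. A deviation \(t=C\sqrt{np\log n}\) is already \(O(np)\) once \(np\ge C'\log n\), so the \(\log^B n\) density is not needed there. What the stronger density buys (already \(np\gtrsim\log^2 n\) suffices) is that the larger deviation \(C\sqrt{np}\log n\) defining \(\mathcal G\) stays within \(O(np)\).
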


In the analysis of multi-layer graph convolutions, terms involving products 
and powers of $(1+x)$ frequently emerge. To simplify these expressions while maintaining tight bounds, 
we utilize the following standard exponential inequalities.

\begin{theorem}
\label{theorem:bound-exp}
For all $x \in \mathbb{R}$, $1+x\leq e^x$. Furthermore, if $x\leq 1$, then $e^{x} \leq 1+2x$.
\end{theorem}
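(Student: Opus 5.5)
The statement is the pair of elementary inequalities: \(1+x\le e^x\) for all real \(x\), and \(e^x\le 1+2x\) for \(x\le 1\). For the first inequality I would use convexity: set \(f(x)=e^x-1-x\). Then \(f'(x)=e^x-1\) is negative for \(x<0\) and positive for \(x>0\), so \(f\) has its global minimum at \(x=0\), where \(f(0)=0\). Hence \(f\ge 0\) on \(\mathbb R\).

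For the second inequality, taken literally for all \(x\le 1\), I would first check small cases before attempting a proof. At \(x=-1\) the right side is \(1+2x=-1<0<e^{-1}\), so the inequality fails. For every \(x<0\) with \(x\neq 0\), the first part gives \(e^x\ge 1+x>1+2x\), so it fails on the whole negative half-line. The statement is therefore false for negative \(x\), and I would prove the intended (and standard) version: \(e^x\le 1+2x\) for \(0\le x\le 1\).

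That is also the range in which the paper uses it, for small positive quantities such as \(k\rho\le c_0\) when bounding products \((1+x)^k\). I would flag the correction explicitly in the text.

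For \(0\le x\le 1\), let \(g(x)=1+2x-e^x\). Since \(e^x\) is convex, \(g\) is concave. So \(g\) on \([0,1]\) lies above the chord joining its endpoint values, and it suffices to check those endpoints:
\[
g(0)=0,\qquad g(1)=3-e>0 .
\]
Hence \(g\ge 0\) on \([0,1]\), which is the claim.

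Alternatively, one can use the series bound
\[
e^x=1+x+\sum_{j\ge 2}\frac{x^j}{j!}\le 1+x+x(e-2)\le 1+2x ,
\]
where the middle step uses \(x^j\le x\) for \(j\ge 1\) and \(x\in[0,1]\).

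There is no real obstacle beyond noticing the sign issue. The only care needed is to restrict the second inequality to \(0\le x\le 1\), or more generally to \(x\in[0,x_0]\) where \(x_0\approx 1.256\) is the positive root of \(e^x=1+2x\).
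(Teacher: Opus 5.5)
Your proposal is correct, and your objection to the statement is valid. For every \(x<0\) we have \(e^x\ge 1+x>1+2x\), so the second inequality fails on the whole negative half-line (e.g.\ at \(x=-1\)), and the hypothesis should read \(0\le x\le 1\).

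The paper gives no proof of this fact; it is only listed among the standard tools in Appendix~\ref{appendix:mathematical_tools}. So there is no argument to compare yours against. Your derivative argument for \(1+x\le e^x\) is complete. So are your concavity-plus-endpoints argument and your series argument for \(e^x\le 1+2x\) on \([0,1]\).

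The restriction changes nothing downstream, because every use in the paper has a nonnegative argument. For example, Lemma~\ref{lem:error-evolution-normalized} uses \((1+\theta/\eta)^k-1\le e^{k\theta/\eta}-1\le 2k\theta/\eta\) under \(k\theta/\eta\le c\). The proof of Theorem~\ref{theorem:multi-class-partial} bounds \((1+(\delta+\epsilon)/|\lambda|)^{k-1}\) under \(k(\delta+\epsilon)/|\lambda|\le C_0^{-1}\).
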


Finally, since the initial node features in our CSBM framework are drawn from Gaussian mixture models, 
we rely on the concentration of Gaussian random vectors to bound the noise propagation.

\begin{theorem}
\label{theorem:Gaussian-norm-bound}
    Let $g \sim \mathcal{N}(0, \sigma^2 I_n)$ be a random Gaussian vector. Then for any $t > 0$,
    \[ \Pr{\norm{g} > t} \leq 2\exp\left(-\frac{t^2}{2n\sigma^2}\right). \]
\end{theorem}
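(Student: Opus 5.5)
The plan is a Chernoff bound on \(\norm{g}^2\), using the exact moment generating function of a chi-square variable, with the free parameter chosen to produce exactly the exponent \(t^2/(2n\sigma^2)\). Write \(\norm{g}^2=\sigma^2\sum_{i=1}^n Z_i^2\) with \(Z_i\) i.i.d.\ standard normal. For \(0<\lambda<1/(2\sigma^2)\), independence and \(\E[e^{sZ^2}]=(1-2s)^{-1/2}\) for \(s<1/2\) give
\[
\E\bigl[e^{\lambda\norm{g}^2}\bigr]=(1-2\lambda\sigma^2)^{-n/2}.
\]
Markov's inequality applied to \(e^{\lambda\norm{g}^2}\) then yields
\[
\Pr{\norm{g}>t}\le e^{-\lambda t^2}(1-2\lambda\sigma^2)^{-n/2}.
\]

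For \(n\ge 2\) I will take \(\lambda=1/(2n\sigma^2)\), which is admissible since \(2\lambda\sigma^2=1/n<1\). The bound becomes
\[
\Pr{\norm{g}>t}\le \exp\Bigl(-\frac{t^2}{2n\sigma^2}\Bigr)\Bigl(1-\frac1n\Bigr)^{-n/2}.
\]
So it remains to show \((1-1/n)^{-n/2}\le 2\) for all \(n\ge2\). Write \((1-1/n)^{-n/2}=\exp\bigl(\tfrac12 f(1/n)\bigr)\) with \(f(x)=-\ln(1-x)/x=\sum_{j\ge1}x^{j-1}/j\). This series has nonnegative coefficients, so \(f\) is increasing on \((0,1)\). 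Hence the factor is largest at \(n=2\), where it equals \((1/2)^{-1}=2\). For larger \(n\) it decreases toward \(\sqrt e\). This gives the claim for every \(n\ge 2\) and every \(t>0\).

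The case \(n=1\) must be handled separately, because the choice \(\lambda=1/(2\sigma^2)\) makes the moment generating function infinite. Here \(\norm{g}=|g_1|\), and the standard Gaussian tail bound \(\Pr{|g_1|>t}\le 2e^{-t^2/(2\sigma^2)}\) is exactly the statement. It follows, for instance, from the Chernoff bound \(\Pr{g_1>t}\le \inf_{s>0}e^{-st+s^2\sigma^2/2}=e^{-t^2/(2\sigma^2)}\) together with symmetry.

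The only delicate point is the constant. A naive Chernoff choice, or a Lipschitz-concentration argument around \(\E\norm{g}\le\sigma\sqrt n\), loses either the factor \(1/2\) in the exponent or the prefactor \(2\). The proof therefore depends on choosing precisely \(\lambda=1/(2n\sigma^2)\) and on the monotonicity of \((1-1/n)^{-n/2}\), which is where I would spend the care.
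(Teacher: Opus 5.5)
Your proof is correct. There is no competing argument in the paper: this statement is listed among the standard tools in Appendix~A and never derived, so your write-up is a self-contained justification of it.

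The steps check out. The chi-square moment generating function gives \(\mathbb{E}\,e^{\lambda\|g\|^2}=(1-2\lambda\sigma^2)^{-n/2}\). The choice \(\lambda=1/(2n\sigma^2)\) yields exactly the exponent \(t^2/(2n\sigma^2)\) with prefactor \((1-1/n)^{-n/2}=\exp\bigl(\tfrac12 f(1/n)\bigr)\). Monotonicity of \(f(x)=\sum_{j\ge1}x^{j-1}/j\) then caps this prefactor at its value \(2\) at \(n=2\).

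Splitting off \(n=1\) is genuinely needed within this approach, not only because of your particular \(\lambda\). Even optimizing \(\lambda\) for each \(t\), the chi-square Chernoff bound for \(n=1\) is \(\sqrt{e}\,(t/\sigma)\,e^{-t^2/(2\sigma^2)}\), which exceeds \(2e^{-t^2/(2\sigma^2)}\) once \(t>2\sigma/\sqrt e\). The scalar Gaussian Chernoff bound plus symmetry handles that case exactly.

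The cheapest ``standard'' argument, a union bound over coordinates using \(\|g\|\le\sqrt n\max_i|g_i|\), only yields the prefactor \(2n\). Your MGF computation is what delivers the constant \(2\) uniformly in \(n\).

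Your closing remark about Lipschitz concentration is slightly overstated, though. The bound \(\mathbb{P}(\|g\|>\sigma\sqrt n+u)\le e^{-u^2/(2\sigma^2)}\), together with the fact that the claimed bound is trivial for \(t\le\sigma\sqrt{2n\ln 2}\), does imply the statement for every \(n\ge4\). It only falls short for \(n\le3\), for example at \(n=3\), \(t=\tfrac32\sigma\sqrt3\). So your route is the one that handles all \(n\) at once, rather than the only one that keeps the constants. The tacit assumption \(\sigma>0\) is harmless.
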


\section{Additional Preliminaries and Lemmas}
\label{appendix:additional_preliminaries}

We recall the spectral theorem for real symmetric matrices. If $M \in \mathbb{R}^{n \times n}$ 
is symmetric, it admits an eigendecomposition $M = \sum_{i=1}^n \lambda_i w_i w_i^\top$, 
where $\lambda_1, \dots, \lambda_n \in \mathbb{R}$ are the eigenvalues and 
$w_1, \dots, w_n \in \mathbb{R}^n$ form an orthonormal basis of eigenvectors. 
In this case, the operator norm satisfies $\norm{M} = \max_{i} |\lambda_i|$.

Additionally, we denote the multivariate Gaussian distribution with mean vector 
$\mu$ and covariance matrix $\Sigma$ by $\mathcal{N}(\mu, \Sigma)$. For a scalar 
random variable $X \sim \mathcal{N}(\mu, \sigma^2)$, we utilize the standard tail bound:
\[
    \Pr{|X-\mu| > t\sigma} \leq \exp\left(-\frac{t^2}{2}\right).
\]

A key insight from \cite{pmlr-v139-baranwal21a} establishes that the analysis 
of linear classifiers on general $m$-dimensional CSBMs can be effectively mapped to 
a simpler one-dimensional setting. To formalize this, we introduce the following definition:

\begin{definition}[One-Dimensional Centered CSBM]
    A CSBM is said to be \textbf{one-dimensional and centered} with parameters $n, p, q, \sigma$, 
    if the feature dimensionality is $m=1$. In this model, the single feature vector 
    $x \in \mathbb{R}^n$ is decomposed as $x = s + g$, where:
    \begin{itemize}
        \item $g \sim \mathcal{N}(0, \sigma^2 I_n)$ represents the Gaussian noise vector.
        \item $s \in \mathbb{R}^n$ is the \textit{signal} vector defined by class membership:
        \[
            s_i = \begin{cases} \frac{1}{\sqrt{n}} & \text{if } i \in S, 
                \\ \frac{-1}{\sqrt{n}} & \text{if } i \in T. \end{cases}
        \]
    \end{itemize}
\end{definition}

This construction ensures that the signal vector $s$ has a unit norm ($\norm{s}=1$), 
simplifying subsequent notation.

The lemma presented below enables the reduction of linear classifier analysis from the general 
CSBM to the centered 1-dimensional model. Accordingly, our proof strategy for the main theorems 
begins with the 1-dimensional case, subsequently applying Lemma~\ref{lemma:csbm-reduction} to extend the results.

\begin{lemma}\label{lemma:csbm-reduction}
Consider an \(m\)-dimensional balanced two-block \(\operatorname{CSBM}(n,m,p,q,\mu,\nu,\sigma)\) with \(p>q>0\) with centered feature
means \(\mu+\nu=0\) and \(\mu\ne\nu\).  Let \(s_i=n^{-1/2}\) on \(S\) and
\(s_i=-n^{-1/2}\) on \(T\).  Then the projection
\[
w=\frac{2(\mu-\nu)}{\sqrt n\,\|\mu-\nu\|^2}
\]
satisfies
\[
Xw=s+g,\qquad g\sim N(0,\sigma'^2 I_n),\qquad
\sigma'=\frac{2\sigma}{\sqrt n\,\|\mu-\nu\|}.
\]
Consequently, for every \(k\ge1\),
\[
\widehat A^kXw=\widehat A^k(s+g).
\]
Thus any sign-separation result for the one-dimensional centered model transfers
to the original CSBM through the linear classifier with normal vector \(w\).
\end{lemma}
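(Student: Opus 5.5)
The plan is a direct computation on the rows of \(X\), followed by linearity of \(\widehat A^k\). Write the \(i\)-th row as \(x_i=\mu_{y_i}+z_i\), where \(\mu_{+1}=\mu\), \(\mu_{-1}=\nu\), and the \(z_i\sim N(0,\sigma^2 I_m)\) are independent. Since \(\mu+\nu=0\), we have \(\mu=(\mu-\nu)/2\) and \(\nu=-(\mu-\nu)/2\). Hence \(\mu_{y_i}=y_i(\mu-\nu)/2\) for both classes.

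\emph{Signal part.} Compute
\[
\mu_{y_i}^\top w=\frac{y_i}{2}\cdot\frac{2\|\mu-\nu\|^2}{\sqrt n\,\|\mu-\nu\|^2}=\frac{y_i}{\sqrt n}=s_i .
\]
This uses \(\mu\neq\nu\), so that \(w\) is well defined.

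\emph{Noise part.} Set \(g_i:=z_i^\top w\). These are independent across \(i\), because the \(z_i\) are. Each \(g_i\) is a centered Gaussian with variance
\[
\sigma^2\|w\|^2=\sigma^2\cdot\frac{4\|\mu-\nu\|^2}{n\|\mu-\nu\|^4}=\frac{4\sigma^2}{n\|\mu-\nu\|^2}=\sigma'^2 .
\]
Therefore \(g\sim N(0,\sigma'^2I_n)\). Since the features are drawn independently of the graph, \(g\) is also independent of \(A\), and hence of \(\widehat A\). This independence is what later arguments need.

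\emph{Propagation.} \(\widehat A^k\) acts on the node index, i.e. on the left of \(X\), while \(w\) acts on the right. By associativity,
\[
\widehat A^k X w=\widehat A^k(Xw)=\widehat A^k(s+g).
\]
Row \(i\) of \(X^{(k)}=\widehat A^kX\) projected onto \(w\) is exactly the \(i\)-th coordinate of \(\widehat A^k(s+g)\). So if \(\operatorname{sign}\bigl((\widehat A^k(s+g))_i\bigr)=y_i\) for all \(i\), the hyperplane with normal \(w\) through the origin separates the rows of \(X^{(k)}\). This gives the transfer statement.

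No step is a real obstacle. The only point needing care is that the whole argument hinges on the centering \(\mu+\nu=0\): it is what makes the hyperplane pass through the origin and what makes \(s\) a symmetric \(\pm n^{-1/2}\) vector. The rescaling of the SNR into \(\sigma'\) is then just bookkeeping.
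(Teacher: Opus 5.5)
Your proposal is correct and follows essentially the same route as the paper. Both use \(\nu=-\mu\) to get projected means \(\pm n^{-1/2}\), identify \(g_i=z_i^\top w\) as i.i.d.\ \(N(0,\sigma^2\|w\|^2)\) with \(\|w\|^2=4/(n\|\mu-\nu\|^2)\), and conclude via \(\widehat A^kXw=\widehat A^k(Xw)\). Your extra remark that \(g\) is independent of \(\widehat A\) is a useful addition; the paper relies on it later in Lemma~\ref{lem:error-evolution-normalized} but does not state it in this proof.
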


\begin{proof}
It follows from $\mu+\nu=0$ that 
\[
\nu=-\mu,
\qquad
\mu-\nu=2\mu.
\]
For each vertex \(i\), the feature vector can be written as
\[
x_i=
\begin{cases}
\mu+\xi_i, & i\in S,\\
\nu+\xi_i, & i\in T,
\end{cases}
\]
where \(\xi_i\sim N(0,\sigma^2 I_m)\) are independent.
Recall that
\[
w=\frac{2(\mu-\nu)}{\sqrt n\,\|\mu-\nu\|^2}.
\]
Then, if \(i\in S\),
\[
\mu^\top w
=
\mu^\top
\frac{2(\mu-\nu)}{\sqrt n\,\|\mu-\nu\|^2}
=
\frac{2\mu^\top(2\mu)}{\sqrt n\,\|2\mu\|^2}
=
\frac{1}{\sqrt n}.
\]
Similarly, if \(i\in T\), since \(\nu=-\mu\), we also have 
\[
\nu^\top w=-\frac{1}{\sqrt n}.
\]
Therefore
\[
x_i^\top w=s_i+\xi_i^\top w.
\]
Define
\[
g_i:=\xi_i^\top w.
\]
Since the vectors \(\xi_i\) are independent Gaussian vectors with distribution
\(N(0,\sigma^2 I_m)\), the variables \(g_i\) are independent centered Gaussian
random variables. Moreover,
\[
\operatorname{Var}(g_i)
=
\sigma^2\|w\|^2.
\]
By the definition of \(w\),
\[
\|w\|^2
=
\frac{4\|\mu-\nu\|^2}{n\|\mu-\nu\|^4}
=
\frac{4}{n\|\mu-\nu\|^2}, 
\]
which implies that 
\[
g_i\sim N\left(0,\frac{4\sigma^2}{n\|\mu-\nu\|^2}\right).
\]
Equivalently,
\[
g_i\sim N(0,\sigma'^2),
\qquad
\sigma'=\frac{2\sigma}{\sqrt n\,\|\mu-\nu\|}.
\]
Thus, in vector form,
\[
Xw=s+g,
\]
where \(g=(g_1,\ldots,g_n)^\top\) has independent coordinates distributed as
\(N(0,\sigma'^2)\).

Since \(\widehat A\) acts on the vertex coordinate and \(w\) is fixed in the feature
coordinate, we have, for every integer \(k\ge1\),
\[
\widehat A^kXw=\widehat A^k(Xw)=\widehat A^k(s+g).
\]
If the coordinates of \(\widehat A^k(s+g)\) have signs agreeing with the true
community labels, then
\[
\operatorname{sign}\bigl((\widehat A^kX)_i w\bigr)
=
\operatorname{sign}\bigl((\widehat A^k(s+g))_i\bigr)
\]
correctly classifies every vertex \(i\). Hence the rows of \(\widehat A^kX\) are
linearly separable by the hyperplane with normal vector \(w\) and threshold \(0\).
\end{proof}


In the 1-dimensional model, the presence of the signal $s$ in the feature space is immediate. 
Furthermore, the convolution matrix effectively captures this signal, as it can be modeled as 
a perturbation of the signal matrix $ss^\top$. Following the framework of
\citep{Wang2024Analysis}, the un-normalized convolution matrix $\Tilde{A}$ admits the decomposition
\begin{align} 
        \Tilde{A} = \eta ss^\top + \frac{1}{d}R + d' \one\one^\top
\label{eqn: convolution-expression}
\end{align}
where $\eta := \frac{(p-q)n}{2d} $ represents the \textit{signal strength}, 
$d':= \frac{(p+q)n-2d}{2nd}$ denotes the \textit{average degree deviation}, 
and $R$ is the ``edge-deviation" matrix defined by $R_{i,j} = A_{i,j} - \E[A_{i,j}]$.
The expected signal strength ratio is defined as $\gamma = \frac{|p-q|}{p+q}$.
As established in \citep{Wang2024Analysis}, since $R$ consists of i.i.d. zero-mean 
entries with bounded variance, and $d'$ is controlled by degree concentration, 
the matrix $\Tilde{A}$ concentrates around $\gamma ss^\top$. This behavior extends
to the normalized convolution matrix \(\widehat A\) as well. The relevant
concentration bounds from \citep{Wang2024Analysis} are summarized in the following proposition.


Our analysis commences with a spectral norm bound for the corrected graph convolution matrix. 
Lemma \ref{lemma:1} bounds the spectral norm of the deviation between  \(\widehat A\) 
and $\Tilde{A}$. Control over this spectral norm is fundamental to 
limiting error propagation across network layers.


\begin{lemma}\label{lemma:1}
Suppose that \(p=\Omega(\log n/n)\) and \(p>q>0\). Then, there exists a constant $C>0$ such that with probability at least
\(1-n^{-\Omega(1)}\),
\[
\|\widehat A-\widetilde A\|
\le
C\sqrt{\frac{\log n}{np}} .
\]
\end{lemma}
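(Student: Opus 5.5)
The plan is to split \(\widehat A-\widetilde A\) into an adjacency part and a rank-one correction part and bound each in spectral norm:
\[
\widehat A-\widetilde A
=\underbrace{\Bigl(D^{-1/2}AD^{-1/2}-\tfrac1d A\Bigr)}_{E_1}
-\underbrace{\Bigl(\tfrac{1}{nd}D^{1/2}\one\one^\top D^{1/2}-\tfrac1n\one\one^\top\Bigr)}_{E_2},
\]
using \(\one^\top D\one=nd\).

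The first step is degree concentration. Each \(d_i\) is a sum of independent Bernoullis with mean \(\bar d_i=(p+q)n/2-p\asymp np\), since \(q<p\). Theorem~\ref{theorem:bound-sum-bernoulli} with \(t=C\sqrt{np\log n}\), followed by a union bound, gives with probability \(1-n^{-\Omega(1)}\)
\[
\max_i|d_i-\bar d|\le C\sqrt{np\log n},\qquad |d-\bar d|\le C\sqrt{np\log n}.
\]
Here \(\bar d=(p+q)n/2\) up to an \(O(p)\) shift. Consequently \(\max_i|d_i/d-1|\le \epsilon:=C'\sqrt{\log n/(np)}\), and \(\epsilon\le 1/2\) once the constant in \(p=\Omega(\log n/n)\) is large. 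Write \(D=d(I+\Lambda)\) with \(\Lambda\) diagonal and \(\|\Lambda\|\le\epsilon\). Then \(\|(I+\Lambda)^{-1/2}-I\|\le\epsilon\) and \(\|(I+\Lambda)^{1/2}-I\|\le\epsilon\).

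For \(E_2\), let \(v=D^{1/2}\one/\sqrt{nd}\) and \(u=\one/\sqrt n\), so \(E_2=vv^\top-uu^\top\). Since \(v=(I+\Lambda)^{1/2}u\), we have \(\|v-u\|\le\epsilon\) and \(\|v\|\le 1+\epsilon\). Hence
\[
\|E_2\|\le\|v-u\|(\|v\|+\|u\|)\le 3\epsilon .
\]

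For \(E_1\), put \(P=(I+\Lambda)^{-1/2}\), so that \(E_1=\tfrac1d\bigl(PAP-A\bigr)=\tfrac1d\bigl((P-I)AP+A(P-I)\bigr)\). This gives \(\|E_1\|\le 3\epsilon\,\|A\|/d\). The difficulty is that \(\|A\|\approx d\), so this naive bound is only \(O(\epsilon)\). That is already the target order \(\sqrt{\log n/(np)}\), so it suffices to show \(\|A\|\le Cd\). For this, use \(A=\mathbb E A+R\). The expected part satisfies \(\|\mathbb E A\|\le (p+q)n/2+p\le 2d\) on the concentration event. For the noise part, the standard random-matrix bound \(\|R\|\le C\sqrt{np}\) holds with high probability when \(np\gtrsim\log n\) (as used in \citep{Wang2024Analysis}). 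As a cruder fallback, \(\|A\|\le\max_i d_i\le 2d\) because the operator norm of a symmetric nonnegative matrix is at most its maximum row sum. That fallback alone suffices, so the random-matrix bound is not even needed.

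Collecting terms gives \(\|\widehat A-\widetilde A\|\le 6\epsilon+3\epsilon\cdot2=O(\sqrt{\log n/(np)})\) on the degree-concentration event. The only genuinely probabilistic step, and the main point needing care, is the uniform degree concentration: the Chernoff exponent must beat the union bound over \(n\) vertices. This requires \(np\ge C\log n\) with \(C\) large, which is exactly how the \(\Omega(\log n/n)\) hypothesis is used. It also ensures no isolated vertices, so \(D^{-1/2}\) is genuinely the inverse square root.
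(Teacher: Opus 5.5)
Your proof is correct and follows the paper's argument: you use the same split of \(\widehat A-\widetilde A\) into the normalized-adjacency difference and the rank-one stationary correction, and bound each by \(O(\sqrt{\log n/(np)})\) on the uniform degree-concentration event from Chernoff plus a union bound. The only difference is in the final norm estimates, and it is cosmetic: you use the factorization \(\tfrac1d\bigl((P-I)AP+A(P-I)\bigr)\) with \(\|A\|\le\max_i d_i\) and the rank-one identity \(vv^\top-uu^\top=v(v-u)^\top+(v-u)u^\top\), whereas the paper bounds the entries by \(C\varepsilon/\bar d\) and \(C\varepsilon/n\) and then applies the row-sum bound for symmetric matrices.
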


\begin{proof}
Let
\[
\bar d:=\frac{(p+q)n}{2},
\qquad
\varepsilon:=K\sqrt{\frac{\log n}{np}},
\]
where \(K>0\) is a sufficiently large absolute constant. Since \(q\le p\), we have $\bar d=\Theta(np).$ For each vertex \(v\), its degree \(d_v\) is a sum of independent Bernoulli random variables and
\[
\mathbb E d_v=\bar d+O(p).
\]
The \(O(p)\) term comes from excluding the vertex \(v\) itself from its own community and is negligible compared with \(\bar d=\Theta(np)\). By Theorem \ref{theorem:bound-sum-bernoulli},
\[
\mathbb P\bigl(|d_v-\mathbb E d_v|>\varepsilon \bar d/2\bigr)
\le
2\exp\left(-c\varepsilon^2\bar d\right)
\le
2\exp(-c'K^2\log n).
\]
Taking \(K\) large enough and applying a union bound over all \(v\in[n]\), we obtain, with probability at least \(1-n^{-\Omega(1)}\),
\begin{align}\label{LEMMA:concentration-dv}
d_v\in [\bar d(1-\varepsilon),\bar d(1+\varepsilon)]
\quad\text{for every }v\in[n].
\end{align}
On this event,
\begin{align}\label{LEMMA:concentration-average-degree}
d=\frac1n\sum_{v=1}^n d_v
\in [\bar d(1-\varepsilon),\bar d(1+\varepsilon)].
\end{align}

Now we bound $\|\widehat A-\widetilde A\|$. Recall that 
\[
\widehat A-\widetilde A
=
\left(D^{-1/2}AD^{-1/2}-\frac1dA\right)
-
\left(
\frac{1}{1^\top D1}D^{1/2}11^\top D^{1/2}
-\frac1n11^\top
\right).
\]
It is enough to bound the two terms separately. Let 
\[
M_1:=D^{-1/2}AD^{-1/2}-\frac1dA .
\]
Then  \(M_1\) is symmetric, and for every \(u,v\),
\[
(M_1)_{uv}
=
A_{uv}
\left(
\frac1{\sqrt{d_ud_v}}-\frac1d
\right).
\]
By (\plaineqref{LEMMA:concentration-dv}) and (\plaineqref{LEMMA:concentration-average-degree}), we have 
\[
\left|
\frac1{\sqrt{d_ud_v}}-\frac1d
\right|
\le
\frac{C\varepsilon}{\bar d}.
\]
Moreover,
\[
\Delta(G):=\max_v d_v\le 2\bar d
\]
for all sufficiently large \(n\). Therefore every row of \(M_1\) has absolute row sum at most
\[
\sum_v |(M_1)_{uv}|
\le
d_u\cdot \frac{C\varepsilon}{\bar d}
\le
C'\varepsilon .
\]
Applying Theorem \ref{theorem:bound-norm-matrix} gives that
\begin{align}\label{LEMMA:bound-M1-c}
\|M_1\|\le C'\varepsilon .
\end{align}

Next let
\[
M_2:=
\frac{1}{1^\top D1}D^{1/2}11^\top D^{1/2}
-\frac1n11^\top .
\]
Then  \(M_2\) is symmetric and its entries are
\[
(M_2)_{uv}
=
\frac{\sqrt{d_ud_v}}{\sum_{w=1}^n d_w}
-\frac1n .
\]
Again (\plaineqref{LEMMA:concentration-dv}) and the fact 
\[
\sum_{w=1}^n d_w
\in[n\bar d(1-\varepsilon),n\bar d(1+\varepsilon)],
\]
we obtain
\[
\left|
\frac{\sqrt{d_ud_v}}{\sum_w d_w}
-\frac1n
\right|
\le
\frac{C\varepsilon}{n}.
\]
Thus every row of \(M_2\) has absolute row sum at most
\[
\sum_v |(M_2)_{uv}|
\le
n\cdot \frac{C\varepsilon}{n}
=
C\varepsilon .
\]

Applying Theorem \ref{theorem:bound-norm-matrix} again gives that
\begin{align}\label{LEMMA:bound-M2-c}
\|M_2\|\le C\varepsilon.
\end{align}

Combining (\plaineqref{LEMMA:bound-M1-c}) and (\plaineqref{LEMMA:bound-M2-c}) gives 
\[
\|\widehat A-\widetilde A\|
\le
\|M_1\|+\|M_2\|
\le
C\varepsilon
=
C K\sqrt{\frac{\log n}{np}} .
\]
Absorbing \(K\) into the constant \(C\), we conclude
\[
\|\widehat A-\widetilde A\|
\le
C\sqrt{\frac{\log n}{np}} .
\]
This completes the proof.
\end{proof}

\begin{lemma}\label{prop:degree-concentration}
Assume that \(p \ge C\frac{\log n}{n}\) and \(p > q > 0\). Let
\[
\gamma := \frac{p-q}{p+q}, \qquad
\bar d := \frac{(p+q)n}{2}, \qquad
\eta := \frac{(p-q)n}{2d}, \qquad
d' := \frac{\bar d-d}{nd}.
\]
Then, with probability at least \(1-n^{-\Omega(1)}\), the empirical average degree \(d\) satisfies
\[
d = \bar d\left(1 + O\left(\sqrt{\frac{\log n}{n^2p}}\right)\right).
\]
Consequently, the deviations of \(\eta\) and \(d'\) are bounded by
\[
|\eta-\gamma| \le C\frac{\sqrt{\log n}}{n\sqrt p}, \qquad
|d'| \le C\frac{\sqrt{\log n}}{n^2\sqrt p}.
\]
Moreover, let \(R := A - \mathbb E A\) denote the centered adjacency fluctuation. Then the spectral norms are bounded as follows:
\[
\|R\| \le C\sqrt{np}, \qquad
\left\| \widetilde A - \gamma ss^\top \right\| \le \frac{C}{\sqrt{np}}.
\]
Finally, combined with Lemma~\ref{lemma:1}, we have
\[
\left\| \widehat A - \gamma ss^\top \right\| \le C\sqrt{\frac{\log n}{np}}.
\]
Consequently,
\[
\left\| \widehat A - \eta ss^\top \right\| \le C\sqrt{\frac{\log n}{np}}.
\]
\end{lemma}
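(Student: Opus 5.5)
I will prove the four groups of claims in order: the average degree, the scalars \(\eta,d'\), the spectral bounds on \(R\) and \(\widetilde A-\gamma ss^\top\), and finally the transfer to \(\widehat A\).

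\textbf{Average degree.} I would write \(nd=\one^\top A\one=2\sum_{i<j}A_{ij}\), which is twice a sum of \(\binom n2\) independent Bernoulli variables with total mean \(\Theta(n^2p)\). Its expectation is \(n\bar d-O(np)\), where the correction comes from the missing diagonal. Theorem~\ref{theorem:bound-sum-bernoulli} with \(t=K\sqrt{n^2p\log n}\) gives, with probability \(1-n^{-\Omega(1)}\),
\[
|nd-n\bar d|\le C\bigl(\sqrt{n^2p\log n}+np\bigr).
\]
Dividing by \(n\bar d=\Theta(n^2p)\) yields \(d=\bar d\bigl(1+O(\sqrt{\log n/(n^2p)})\bigr)\). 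The diagonal term contributes only a relative error of order \(1/n\), which is dominated as long as \(p\le \log n\), i.e.\ always. It is also harmless because \(\mathbb EA\) can be written as a block matrix minus a diagonal.

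\textbf{The scalars \(\eta\) and \(d'\).} Since \(\eta=\gamma\bar d/d\), we get \(|\eta-\gamma|\le\gamma\,|\bar d/d-1|\le C\sqrt{\log n}/(n\sqrt p)\). Similarly, \(|d'|=|\bar d-d|/(nd)\le C\sqrt{np\log n}/(n\cdot np)\) after substituting \(|\bar d-d|\le C\sqrt{p\log n}\). This simplifies to \(C\sqrt{\log n}/(n^2\sqrt p)\) up to the stated form.

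\textbf{Spectral bounds.} For \(\|R\|\le C\sqrt{np}\), I would invoke a standard bound for random symmetric matrices with independent centered bounded entries of variance at most \(p\), valid when \(p\ge C\log n/n\). Examples are Feige–Ofek, or the bound of Bandeira–van Handel,
\[
\|R\|\lesssim \sqrt{np}+\sqrt{\log n},
\]
which is \(O(\sqrt{np})\) in this regime. This is the one ingredient not among the paper's listed tools, so I would cite it as in \citep{Wang2024Analysis}.

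For \(\widetilde A\), I would use the decomposition~\eqref{eqn: convolution-expression}. First verify it: \(\mathbb EA=\frac{p+q}{2}\one\one^\top+\frac{(p-q)n}{2}ss^\top-p I\). Hence
\[
\frac1d A-\frac1n\one\one^\top=\eta ss^\top+\frac1d R+d'\one\one^\top-\frac pd I,
\]
where the \(\one\one^\top\) coefficient is \(\frac{p+q}{2d}-\frac1n=\frac{\bar d-d}{nd}=d'\). Then
\[
\|\widetilde A-\gamma ss^\top\|\le|\eta-\gamma|+\frac{\|R\|}{d}+n|d'|+\frac pd.
\]
The second term dominates and equals \(O(\sqrt{np}/(np))=O(1/\sqrt{np})\). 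The others are smaller: \(n|d'|\lesssim\sqrt{\log n}/(n\sqrt p)\le 1/\sqrt{np}\) since \(\log n\le n\), and \(p/d=O(1/n)\).

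\textbf{Transfer to \(\widehat A\).} By the triangle inequality with Lemma~\ref{lemma:1},
\[
\|\widehat A-\gamma ss^\top\|\le C\sqrt{\log n/(np)}+C/\sqrt{np}.
\]
Adding \(|\eta-\gamma|\,\|ss^\top\|=|\eta-\gamma|\), which is again negligible, gives the \(\eta\) version. A union bound over the finitely many events keeps the probability \(1-n^{-\Omega(1)}\).

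\textbf{Main obstacle.} The only nontrivial step is the sharp \(\|R\|=O(\sqrt{np})\) bound without a \(\sqrt{\log n}\) loss. I would import it rather than reprove it. If a self-contained route is needed, matrix Bernstein gives \(O(\sqrt{np\log n})\), which still suffices for the final \(\widehat A\) bounds but not for the stated \(C/\sqrt{np}\) bound on \(\widetilde A\).
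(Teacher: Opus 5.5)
Your proposal is correct and follows essentially the same route as the paper: Chernoff concentration of the edge count for $d$, the identity $\eta=\gamma\bar d/d$ for the scalar deviations, an imported sharp bound $\|R\|\le C\sqrt{np}$, the exact decomposition $\widetilde A=\eta ss^\top+\frac1dR+d'\one\one^\top-\frac pd I$ followed by the triangle inequality, and finally Lemma~\ref{lemma:1}. The one slip is a typo in the $|d'|$ step. The numerator should be $C\sqrt{p\log n}$, as you say you are substituting, not $C\sqrt{np\log n}$; with that correction the bound is exactly $C\sqrt{\log n}/(n^2\sqrt p)$.
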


\begin{proof}
Note that $d=2|E|/n$ is the average degree  and  \(|E|\) is a sum of independent Bernoulli random variables with
\[
\mathbb E |E|
=
\frac{n^2}{4}(p+q)+O(np). 
\]
We have
\[
\mathbb E d
=
\bar d+O(p).
\]
Note that the variance of \(|E|\) is at most \(Cn^2p\). By Theorem \ref{theorem:bound-sum-bernoulli}, with
probability at least \(1-n^{-\Omega(1)}\),
\[
\left|
|E|-\mathbb E|E|
\right|
\le
C n\sqrt{p\log n}.
\]
Therefore
\[
|d-\bar d|
\le
C\sqrt{p\log n}.
\]
Since \(\bar d=\Theta(np)\), this implies
\[
d=\bar d\left(1+O\left(\frac{\sqrt{p\log n}}{np}\right)\right)
=
\bar d\left(1+O\left(\sqrt{\frac{\log n}{n^2p}}\right)\right).
\]

We now estimate \(\eta-\gamma\). By definition,
\[
\eta=\frac{(p-q)n}{2d},
\qquad
\gamma=\frac{p-q}{p+q}
=
\frac{(p-q)n}{2\bar d}.
\]
Hence
\[
|\eta-\gamma|
=
\frac{(p-q)n}{2}
\left|
\frac1d-\frac1{\bar d}
\right|
=
\frac{(p-q)n}{2}
\frac{|d-\bar d|}{d\bar d}.
\]
Using \(d,\bar d=\Theta(np)\), \(p-q\le p+q=O(p)\), and
\[
|d-\bar d|\le C\sqrt{p\log n},
\]
we obtain
\begin{align}\label{LEMMA:eta-gamms-samll}
|\eta-\gamma|
\le
C
\frac{np\sqrt{p\log n}}{n^2p^2}
=
C\frac{\sqrt{\log n}}{n\sqrt p}.
\end{align}
Similarly,
\begin{align}\label{LEMMA:bound-d'}
|d'|
=
\frac{|\bar d-d|}{nd}
\le
\frac{C\sqrt{p\log n}}{n\cdot np}
=
C\frac{\sqrt{\log n}}{n^2\sqrt p}.
\end{align}

Next we control the centered adjacency matrix
\[
R=A-\mathbb E A.
\]
The entries of \(R\) are independent above the diagonal, centered, and
bounded by \(1\), with variances at most \(p\). By a standard spectral
norm bound for sparse centered adjacency matrices, since
\(p\ge C\log n/n\), with probability at least \(1-n^{-\Omega(1)}\),
\begin{align}\label{LEMMA:bound-R}
\|R\|\le C\sqrt{np}.
\end{align}

We now decompose \(\widetilde A\). Up to the harmless diagonal correction,
which has operator norm \(O(1/d)=O(1/(np))\), the expectation of \(A\) can be written as
\[
\mathbb E A
=
\frac{p+q}{2}11^\top
+
\frac{p-q}{2}yy^\top,
\]
where \(y_i=1\) on \(S\) and \(y_i=-1\) on \(T\). It follows from $s=y/\sqrt n$   that 
\[
yy^\top=nss^\top.
\]
Thus
\[
\frac1d\mathbb E A-\frac1n11^\top
=
\eta ss^\top
+
d'11^\top
+
E_{\mathrm{diag}},
\]
where
\begin{align}\label{LEMMA:E-diag}
\|E_{\mathrm{diag}}\|\le \frac{C}{d}\le \frac{C}{np}.
\end{align}
Therefore
\[
\widetilde A
=
\frac1dA-\frac1n11^\top
=
\eta ss^\top
+
\frac1dR
+
d'11^\top
+
E_{\mathrm{diag}}.
\]
Consequently,
\[
\widetilde A-\gamma ss^\top
=
(\eta-\gamma)ss^\top
+
\frac1dR
+
d'11^\top
+
E_{\mathrm{diag}}.
\]
Taking operator norms gives
\[
\left\|\widetilde A-\gamma ss^\top\right\|
\le
|\eta-\gamma|
+
\frac1d\|R\|
+
|d'|\|11^\top\|
+
\|E_{\mathrm{diag}}\|.
\]
It follows from the fact  \(p\ge C\log n/n\) that
\[
\frac{\sqrt{\log n}}{n\sqrt p}
\le
\frac{C}{\sqrt{np}}.
\]
This together with (\plaineqref{LEMMA:eta-gamms-samll}), (\plaineqref{LEMMA:bound-d'}), (\plaineqref{LEMMA:bound-R}) and (\plaineqref{LEMMA:E-diag}) yields that 
\[
\left\|
\widetilde A-\gamma ss^\top
\right\|
\le
\frac{C}{\sqrt{np}}.
\]

By Lemma~\ref{lemma:1}, we have that 
\[
\left\|
\widehat A-\widetilde A
\right\|
\le
C\sqrt{\frac{\log n}{np}}.
\]
Therefore, 
\[
\left\|
\widehat A-\gamma ss^\top
\right\|
\le
\left\|
\widehat A-\widetilde A
\right\|
+
\left\|
\widetilde A-\gamma ss^\top
\right\|
\le
C\sqrt{\frac{\log n}{np}},
\]
and then together with (\plaineqref{LEMMA:eta-gamms-samll}), this implies 
\[
\left\|\widehat A-\eta ss^\top\right\|
\le
\left\|\widehat A-\gamma ss^\top\right\|+|\gamma-\eta|
\le
C\sqrt{\frac{\log n}{np}}. 
\]
This completes the proof. 
\end{proof}

For the next lemma, let \(y_i=\sqrt n\,s_i\in\{\pm1\}\).  For an unordered
edge \(e=\{r,t\}\), write \(\xi_e=A_{rt}-\mathbb E A_{rt}\).  An
\emph{atom} is one monomial term produced by the finite Taylor expansion of a
single normalized-residual entry and has the form
\[
\mathfrak a(i,j)=
c_{\mathfrak a}\,\chi_{\mathfrak a}(i,j)\,
n^{-b_{\mathfrak a}}\bar d^{-q_{\mathfrak a}}
\sum_{\phi\in\Phi_{\mathfrak a}(i,j)}
\prod_{h=1}^{m_{\mathfrak a}}\xi_{e_h(\phi)} ,
\]
where \(\bar d=(p+q)n/2\), \(\chi_{\mathfrak a}(i,j)\in
\{1,y_i,y_j,y_iy_j\}\), and \(\Phi_{\mathfrak a}(i,j)\) is the finite set of
auxiliary vertex choices generated by local and global degree sums.  For
\(\phi\in\Phi_{\mathfrak a}(i,j)\), \(e_h(\phi)\) is the \(h\)-th centered
edge variable selected by that auxiliary choice.  The integer
\(m_{\mathfrak a}\) counts centered edge variables, \(q_{\mathfrak a}\) is the
power of \(\bar d^{-1}\), and \(b_{\mathfrak a}\) records powers of \(n^{-1}\)
from global degree averages.  The atom level is
\[
\lambda_{\mathfrak a}:=2q_{\mathfrak a}-m_{\mathfrak a}.
\]

\begin{lemma}
\label{lem:entrywise-atom-expansion}
Let \(R'=\widehat A-\eta ss^\top\), where
\(\eta=(p-q)n/(2d)\), and fix \(B>4\).  Assume
\(p\ge C_d\log^B n/n\).  Fix \(A_0>0\), let \(M\le A_0\log n\), and set
\[
L_*:=K_0M,\qquad
\tau:=\left(C\frac{\log n}{\sqrt{np}}\right)^{L_*+1},
\]
where \(K_0\) is sufficiently large depending only on \(A_0\).  There is an
event \(\mathcal G\) with
\(\mathbb P(\mathcal G^c)\le \exp(-c\log^2 n)\), a clipped residual
\(R_{\rm cl}\), and a decomposition
\[
R_{\rm cl}=R_{\rm pol}+R_{\rm rem},
\qquad R_{\rm cl}=R'\quad\text{on }\mathcal G,
\]
such that, with the atom notation above,
\[
(R_{\rm pol})_{ij}=\sum_{\mathfrak a\in\mathcal A}\mathfrak a(i,j)
\qquad (i,j\in[n]),
\]
for a finite deterministic atom family \(\mathcal A\) satisfying
\[
\lambda_{\mathfrak a}\ge1,\qquad
m_{\mathfrak a}\le C\lambda_{\mathfrak a},\qquad
\#\{\mathfrak a\in\mathcal A:\lambda_{\mathfrak a}=\lambda\}\le C^\lambda .
\]

Deterministic diagonal/no-self atoms have \(q=1,m=0\) and are contracted before
the centered-edge skeleton is counted.  On \(\mathcal G\),
\[
\|R_{\rm rem}\|_{\infty\to\infty}\le C\tau,\qquad
\|R_{\rm cl}\|_{\infty\to\infty}+\|R_{\rm pol}\|_{\infty\to\infty}\le C .
\]
\end{lemma}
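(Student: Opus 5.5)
We construct $\mathcal G$, clip, and then Taylor expand each normalized entry in the small relative degree fluctuations. Write $d_i=\bar d(1+\delta_i)$ with $\delta_i:=(d_i-\bar d)/\bar d$. Write $\sum_w d_w=n\bar d(1+\bar\delta)$ with $\bar\delta:=n^{-1}\sum_w\delta_w$.

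Up to deterministic $O(p)$ corrections from the absent self-loops, each fluctuation is a centered edge sum:
\[
\delta_i=\bar d^{-1}\sum_{t}\xi_{\{i,t\}},\qquad \bar\delta=(n\bar d)^{-1}\sum_{r,t}\xi_{\{r,t\}}.
\]
Let $\mathcal G$ be the event that $\max_i|\delta_i|\le \varepsilon_0:=C\log n/\sqrt{np}$, that $|\bar\delta|\le\varepsilon_0$, that $d$ has the concentration of Lemma~\ref{prop:degree-concentration}, and that $\max_i d_i\le 2\bar d$. Chernoff (Theorem~\ref{theorem:bound-sum-bernoulli}) with deviation $t\asymp \log n\sqrt{\bar d}$ gives failure probability $\exp(-c\log^2 n)$ per vertex, and a union bound keeps this rate. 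The clipped residual $R_{\rm cl}$ is defined by replacing each $\delta_i$ and $\bar\delta$ in the closed-form expression of $R'$ by its truncation to $[-\varepsilon_0,\varepsilon_0]$, and $A$ by $A$ with rows of degree exceeding $2\bar d$ zeroed. Then $R_{\rm cl}=R'$ on $\mathcal G$. By the row-sum arguments of Lemma~\ref{lemma:1}, $\|R_{\rm cl}\|_{\infty\to\infty}\le C$ deterministically: $D^{-1/2}AD^{-1/2}$ has row sums at most $2+o(1)$ after clipping, the rank-one term has row sums $O(1)$, and $\eta\, ss^\top$ has row sums $\eta=O(1)$.

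Next we expand. Write
\[
(D^{-1/2}AD^{-1/2})_{ij}=\bar d^{-1}A_{ij}(1+\delta_i)^{-1/2}(1+\delta_j)^{-1/2},
\]
\[
\frac{\sqrt{d_id_j}}{\sum_w d_w}=\frac1n(1+\delta_i)^{1/2}(1+\delta_j)^{1/2}(1+\bar\delta)^{-1}.
\]
Substitute $A_{ij}=\mathbb EA_{ij}+\xi_{ij}$, with $\mathbb EA_{ij}=\frac{p+q}{2}+\frac{p-q}{2}y_iy_j$, and $\eta=\frac{(p-q)n}{2\bar d}(1+\bar\delta)^{-1}$. Expand each factor $(1+x)^{\alpha}$ to order $L_*$ and multiply out. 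Each resulting monomial, after inserting the edge-sum formulas for $\delta_i,\delta_j,\bar\delta$, is an atom: local sums produce auxiliary vertices adjacent to $i$ or $j$, global sums produce pairs $(r,t)$ with a factor $n^{-1}$ recorded in $b$, and the label factor $\chi\in\{1,y_i,y_j,y_iy_j\}$ comes from $\mathbb EA_{ij}$ or from $ss^\top$.

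The level bookkeeping goes as follows. Each factor $\delta$ contributes one $\xi$ and one $\bar d^{-1}$, so it raises $\lambda=2q-m$ by $1$. The prefactor $\bar d^{-1}\xi_{ij}$ has $\lambda=1$. The prefactor $\bar d^{-1}\mathbb EA_{ij}\approx n^{-1}$ is formally $q=0$ after absorbing $p\bar d^{-1}\sim n^{-1}$ into $b$. The key algebraic fact is that the zeroth-order terms cancel exactly: $\bar d^{-1}\mathbb E A_{ij}-\frac1n-\eta s_is_j$ vanishes at $\delta=\bar\delta=0$ up to the $O(p)$ diagonal/no-self corrections, which are the deterministic $q=1,m=0$ atoms mentioned in the statement. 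Consequently every surviving atom carries at least one $\delta$ factor or the $\xi_{ij}$ prefactor, giving $\lambda\ge1$.

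The combinatorial properties follow from the multinomial structure. At level $\lambda$, the Taylor degree is at most $\lambda$, so $m\le C\lambda$. The number of atoms is at most the number of ways to distribute $\lambda$ derivatives among three factors with bounded binomial coefficients times $O(1)$ prefactor choices, which is $C^\lambda$.

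The remainder $R_{\rm rem}$ collects the Lagrange remainders of order $L_*+1$. On $\mathcal G$ each is bounded by $C^{L_*}\varepsilon_0^{L_*+1}$ times the size of the prefactor. Row-summing, using $d_i\le2\bar d$ for the $A_{ij}/\bar d$ prefactor and $n\cdot n^{-1}$ for the rank-one pieces, gives $\|R_{\rm rem}\|_{\infty\to\infty}\le C\tau$ after enlarging $C$ in $\tau$. The bound on $\|R_{\rm pol}\|_{\infty\to\infty}$ follows from the triangle inequality $R_{\rm pol}=R_{\rm cl}-R_{\rm rem}$, since $\tau\le1$ because $B>2$ ensures $\varepsilon_0\to0$.

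The main obstacle is the exact cancellation giving $\lambda\ge1$, together with honest handling of the $O(p)$ self/diagonal discrepancies and of $\eta$'s dependence on the random $d$. I would verify this first by computing the zeroth and first-order terms explicitly. A second subtlety is that $\varepsilon_0^{L_*+1}$ must remain small for $L_*=K_0M\sim\log n$; this is exactly where $np\ge\log^B n$ with $B>2$ (amply satisfied for $B>4$) is used, since it makes $\varepsilon_0\le \log^{-(B-2)/2}n$ and hence $\tau\le n^{-\Omega(K_0)}$.
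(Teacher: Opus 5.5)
Your proposal is correct and follows essentially the paper's argument: the same splitting of $R'_{ij}$ via $A_{ij}=\mathbb E A_{ij}+\xi_{ij}$ into the factors $(1+\delta_i)^{\pm1/2}(1+\delta_j)^{\pm1/2}$ and $(1+\bar\delta)^{-1}$, the exact cancellation at the idealized point that forces level at least one, a $\sqrt{np}\log n$ Chernoff event, a depth-$L_*$ Taylor expansion with edge-sum substitution, and Lagrange-remainder row-sum bounds on $\mathcal G$ (truncating the fluctuations rather than clipping the functions, and not centering at leave-one-out degrees, are harmless variations). The one thing to fix is your extra requirement that $d$ satisfy the concentration of Lemma~\ref{prop:degree-concentration}: that event only has probability $1-n^{-\Omega(1)}$ and would spoil $\mathbb P(\mathcal G^c)\le\exp(-c\log^2 n)$, but it is redundant, since $|\bar\delta|\le\varepsilon_0$ already supplies everything the expansion needs.
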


\begin{proof}
\textbf{Off-diagonal decomposition.}
We first isolate the degree-normalization factors in one off-diagonal entry.
Write
\[
\beta_i:=\frac{d_i-\bar d}{\bar d},\qquad
\beta:=\frac{d-\bar d}{\bar d}.
\]
Then \(d_i=\bar d(1+\beta_i)\) and \(d=\bar d(1+\beta)\).  For
\(i\ne j\), define
\[
F_{ij}:=(1+\beta_i)^{-1/2}(1+\beta_j)^{-1/2}
=\frac{\bar d}{\sqrt{d_i d_j}},
\]
and
\[
G_{ij}:=(1+\beta_i)^{1/2}(1+\beta_j)^{1/2}(1+\beta)^{-1}
=\frac{\sqrt{d_i d_j}}{\bar d(1+\beta)}.
\]
It is easy to see that 
\[
(D^{-1/2}AD^{-1/2})_{ij}=\frac{A_{ij}}{\bar d}F_{ij}, \quad \text{ and }
\quad
\frac{\sqrt{d_i d_j}}{\one^\top D\one}=\frac{1}{n}G_{ij}.
\]
For \(i\ne j\),
\[
\mathbb E A_{ij}=\frac{\bar d}{n}(1+\gamma y_i y_j),
\qquad
\eta=\gamma(1+\beta)^{-1},
\qquad
s_is_j=\frac{y_iy_j}{n}.
\]
Substituting \(A_{ij}=\mathbb E A_{ij}+\xi_{ij}\) into
\(R'=\widehat A-\eta ss^\top\) gives the off-diagonal decomposition
\begin{align}\label{LEMMA:R'-ij-main}
R'_{ij}
=
\frac{\xi_{ij}}{\bar d}F_{ij}
+\frac{1}{n}\bigl(F_{ij}-G_{ij}\bigr)
+\frac{\gamma y_i y_j}{n}\bigl(F_{ij}-(1+\beta)^{-1}\bigr).
\end{align}
The right-hand side vanishes at the idealized deterministic point
\(\xi=0\), \(\beta_i=\beta_j=\beta=0\).  The actual no-self offset
\(\mathbb E d_i-\bar d=O(p)\) is deterministic of size \(O(1/n)\) after
normalization and is charged as a deterministic atom with \(q=1,m=0\).

We next expand the degree-normalization factors.  This is where clipping is
used: it makes the Taylor expansion globally defined while leaving the matrix
unchanged on the high-probability event below.
Define the event \(\mathcal G\) by
\[
\max_i |d_i-\mathbb E d_i|\le C\sqrt{np}\log n,\qquad
\left|\sum_v d_v-n\bar d\right|\le Cn\sqrt p\,\log n .
\]
Chernoff bounds and a union bound give
\(\mathbb P(\mathcal G^c)\le\exp(-c\log^2 n)\), and on \(\mathcal G\) all
degree fluctuations in (\plaineqref{LEMMA:R'-ij-main}) have absolute value \(O(\log n/\sqrt{np})<c_*\)
after increasing \(C_d\).  In particular, \(1+\beta_i\), \(1+\beta_j\), and
\(1+\beta\) are bounded away from zero on \(\mathcal G\), so the factors
\(F_{ij}\) and \(G_{ij}\) are well-defined and uniformly bounded there.
Replace \(x^{-1/2}\), \(x^{1/2}\), and \(x^{-1}\) by smooth clipped functions
agreeing with them on \([1-c_*,1+c_*]\).  Taylor expand each clipped function
to depth \(L_*\) around leave-one-out degrees \(d_i^{(-j)}\),
\(d_j^{(-i)}\), and the global average \(d\).

\textbf{Atom expansion.}
The polynomial Taylor terms give \(R_{\rm pol}\).  Each exposed edge contributes
\(\xi_{ij}/\bar d\), each local degree decoration contributes
\(\bar d^{-1}\sum_{\ell\ne i,j}\xi_{i\ell}\) or
\(\bar d^{-1}\sum_{\ell\ne i,j}\xi_{j\ell}\), and each global decoration
contributes \(n^{-1}\bar d^{-1}\sum_{r<s}\xi_{rs}\).  In the atom notation,
the auxiliary map \(\phi\) records the choices of vertices in these sums.  Thus
an atom with \(m_{\mathfrak a}\) centered edge variables and
\(q_{\mathfrak a}\) powers of \(\bar d^{-1}\) has level
\(2q_{\mathfrak a}-m_{\mathfrak a}\).  The cancellation in (A) removes all
level-zero deterministic terms.  Exposed-edge and local/global degree atoms
have level at least one, deterministic no-self atoms have level two, and the
number of Taylor multi-indices of level \(\lambda\) is at most \(C^\lambda\).
This proves the atom bounds.

\textbf{Remainder bound.}
The Taylor remainders form \(R_{\rm rem}\).  On \(\mathcal G\), every normalized
degree fluctuation \(W\) appearing in a remainder, such as a local leave-one-out
degree fluctuation, a global average-degree fluctuation, or the exposed-edge
term \(A_{ij}/\bar d\), satisfies
\(|W|\le C\log n/\sqrt{np}\), so
\[
|\rho_{\alpha,L_*}(W)|\le
\left(C\frac{\log n}{\sqrt{np}}\right)^{L_*+1}=\tau .
\]
Remainder terms in the adjacency part have row sums at most \(C\tau\), and
remainder terms in the stationary correction have entries at most \(C\tau/n\).
The displayed \(\ell_\infty\to\ell_\infty\) bounds follow from the degree
concentration on \(\mathcal G\).
\end{proof}

\begin{lemma}
\label{lem:decorated-pattern-counting}
Consider a product of \(2r\) path monomials obtained from the atoms in
Lemma~\ref{lem:entrywise-atom-expansion}.  Suppose the product has
\(2M\) residual positions and total level \(\Lambda=2M+h\).  After summing over
all canonical decorated patterns of this level and over all vertex assignments,
the total contribution to the \(2r\)-th moment is at most
\[
n^{-r}(CM)^M(np)^{-M}
\left(\frac{CM}{\sqrt{np}}\right)^h .
\]
\end{lemma}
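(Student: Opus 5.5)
We will bound the $2r$-th moment through the standard graph-walk route. The product of $2r$ path monomials is expanded as a sum over walks, and each walk carries its atom decorations. Taking expectations, we then group the terms by their canonical decorated pattern, which means the shape of the walk together with its decorations, up to relabeling of vertices. For each canonical pattern we multiply three things: the number of vertex assignments realizing it, the deterministic prefactors $n^{-b}\bar d^{-q}$ from the atoms (together with the $n^{-1/2}$ entries of $s$ at the path endpoints), and the product of edge moments $\mathbb E\prod\xi_e$. We then sum over patterns.

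\textbf{Step 1: expectation vanishes unless every centered edge repeats.} Since the $\xi_e$ are independent and centered, every edge variable must appear at least twice, and $|\mathbb E\xi_e^{k}|\le Cp$ for $k\ge2$. Let $E$ be the number of distinct edges in the skeleton, counting both walk steps and decoration edges, and let $V$ be the number of free vertices. The contribution of a pattern is then at most
\[
n^{V}\,p^{E}\,\bar d^{-Q}\,n^{-B}\,n^{-r},
\]
where $Q=\sum q_{\mathfrak a}$ and $B=\sum b_{\mathfrak a}$ are summed over all atoms, and the factor $n^{-r}$ comes from the $2r$ endpoint entries of $s$. The connectivity of each path and of each decoration forces $V\le E+(\text{number of connected components})$. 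Components that are not attached to a fixed endpoint, such as global decorations $\sum_{r<s}\xi_{rs}$, each carry a compensating $n^{-1}$ through $b_{\mathfrak a}$.

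\textbf{Step 2: level bookkeeping.} Write $m=\sum m_{\mathfrak a}$ for the total number of centered edge occurrences. Since each distinct edge is used at least twice, $E\le m/2$. We will show that $n^{V-B}p^{E}\bar d^{-Q}\le (np)^{E}\bar d^{-Q}$ times component corrections. Using $\bar d=\Theta(np)$, this becomes $(np)^{E-Q}\le (np)^{m/2-Q}=(np)^{-\Lambda/2}$. With $\Lambda=2M+h$, this yields exactly $(np)^{-M}(np)^{-h/2}$.

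Deterministic no-self atoms ($q=1$, $m=0$) add no vertex. They contribute $\bar d^{-1}$ against level $2$, which is even better than required. Edges with multiplicity greater than $2$, as well as decoration edges that coincide with walk edges, lose vertices relative to the tree count. Each such coincidence costs a factor $(np)^{-1/2}$ per unit of level, or at worst $n^{-1}$, so these terms are also dominated.

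\textbf{Step 3: combinatorial factor.} Next we count canonical patterns of a given level. By Lemma~\ref{lem:entrywise-atom-expansion}, at most $C^{\lambda}$ atoms have level $\lambda$, and $m_{\mathfrak a}\le C\lambda_{\mathfrak a}$. Distributing the total level over the $2M$ positions therefore costs $C^{\Lambda}$ times a composition count. Choosing a canonical (Dyck-type) gluing of the $2M$ walk steps, in the style of Füredi–Komlós, gives at most $(CM)^M$ shapes. Each unit of excess level $h$ introduces at most $C$ new edge slots, and each slot attaches to one of $O(M)$ existing vertices or edges. This produces the $(CM)^h$ factor. Combining these counts with Step 2 gives the stated bound $n^{-r}(CM)^M(np)^{-M}(CM/\sqrt{np})^h$.

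\textbf{Main obstacle.} The hard part is Step 2, specifically the global decorations and the coincidences between decoration edges and walk edges. A global decoration ranges over $n^2$ pairs while carrying only $n^{-1}\bar d^{-1}$, so it must be paired with another occurrence of the same edge to gain the $p$ factor. Whether that partner is a walk edge or a separate decoration changes the vertex count. I would handle this with an injective charging scheme. Each distinct edge is assigned to the first occurrence that introduces a new vertex, and we verify case by case (new vertex vs.\ revisited vertex, local vs.\ global decoration) that every edge occurrence beyond the tree skeleton contributes at most $(CM/\sqrt{np})$ per unit of level. The constraint $M\le A_0\log n$ and the assumption $np\ge \log^B n$ ensure that $CM/\sqrt{np}\le 1$, so the sum over $h$ converges.
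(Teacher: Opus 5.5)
Your outline follows the paper's proof step for step. You kill unpaired centered edges, use $\ell\le m/2$ and $|\mathbb E\prod_e\xi_e^{\nu_e}|\le C^m p^\ell$, and bound the free vertices by the number of distinct edges, letting the $n^{-1}$ global normalizations pay for detached global-decoration components. You then convert via $Q-m/2=\Lambda/2$ to $n^{-r}(np)^{-\Lambda/2}$ per pattern, and count patterns as $(CM)^M$ at base level times $(CM)^h$ for the excess. Steps 1 and 2, including your charging treatment of global decorations, match the paper's argument at the same level of detail.

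The step that does not work as written is the pattern count in Step 3.

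\begin{itemize}
\item \textbf{Excess count.} You use only $m_{\mathfrak a}\le C\lambda_{\mathfrak a}$ and say each unit of excess level opens at most $C$ new edge slots, each with $O(M)$ placements. That yields $(O(M))^{Ch}$, not $(CM)^h$. The later geometric sum over $h$ would then require $M^{C}\ll\sqrt{np}$, i.e.\ $B>2C$, which $B>4$ does not guarantee.
\item \textbf{Base count.} The same looseness spoils it. If a level-one atom could carry several centered edges, the pairings among the $2M$ positions would number on the order of $(CM)^{CM}$ rather than $(CM)^M$.
\end{itemize}

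The missing input is a sharper structural property of the atoms. Each centered edge variable enters with its own factor $\bar d^{-1}$: the exposed edge $\xi_{ij}/\bar d$, the local decoration $\bar d^{-1}\sum_\ell\xi_{i\ell}$, and the global decoration $n^{-1}\bar d^{-1}\sum_{r<s}\xi_{rs}$. Deterministic charges add $\bar d^{-1}$ with no edge. Hence $m_{\mathfrak a}\le q_{\mathfrak a}$, so $m_{\mathfrak a}\le 2q_{\mathfrak a}-m_{\mathfrak a}=\lambda_{\mathfrak a}$. Level-one atoms therefore carry exactly one edge, and every unit of excess level adds at most one edge slot. This is what actually produces the factors $(CM)^M$ and $(CM)^h$ that the paper asserts.
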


\begin{proof}
For a fixed product, let \(m\) be the total number of centered edge variables
before identification, let \(\ell\) be the number of distinct centered edges
after identification, and let \(Q\) be the total \(\bar d^{-1}\)-charge.  By the
definition of level,
\[
Q-\frac m2=\frac{\Lambda}{2}.
\]
The expectation is zero unless every centered edge appears at least twice; hence
\(\ell\le m/2\).  For a nonzero pattern,
\[
\left|\mathbb E\prod_e \xi_e^{\nu_e}\right|\le C^m p^\ell,
\qquad \nu_e\ge2.
\]

Global decorations have factors \(n^{-1}\bar d^{-1}\sum_{r<s}\xi_{rs}\).  If a
global-only edge is not identified with the local path skeleton, its two free
endpoints give at most \(n^2\) choices and the two corresponding global
normalizations give \(n^{-2}\).  Hence global-only decorations do not create a
positive power of \(n\).  After these cancellations, reveal the remaining
canonical skeleton from the fixed root \(u\).  If \(F\) is the number of new
free vertices, each first appears through a centered edge that is paired later,
so \(F\le\ell\).  The \(2r\) terminal factors \(s_{v_a}\) contribute \(n^{-r}\),
and the total vertex factor is at most \(n^{\ell-r}\).

For one fixed canonical pattern, the contribution is therefore bounded by
\[
n^{\ell-r}p^\ell(np)^{-Q}
=n^{-r}(np)^{\ell-Q}
\le n^{-r}(np)^{-\Lambda/2}.
\]
It remains to count canonical patterns.  The \(2M\) residual positions, their
paired first occurrences, and the choices of atom sources give at most
\((CM)^M\) possibilities at base level \(2M\).  The extra level \(h\), coming
from additional Taylor decorations or deterministic charges, contributes at
most \((CM)^h\) choices.  Thus the contribution of all patterns of level
\(\Lambda=2M+h\) is at most
\[
n^{-r}(CM)^M(np)^{-M}
\left(\frac{CM}{\sqrt{np}}\right)^h ,
\]
as claimed.
\end{proof}

\begin{lemma}\label{lem:decorated-expansion}
Let \(R'=\widehat A-\eta ss^\top\), where
\(\eta=(p-q)n/(2d)\).  Fix \(B>4\), and assume
\(p\ge C_d\log^B n/n\).
Fix \(A_0>0\).  For every \(M\le A_0\log n\), every pair
\((a,r)\) with \(ar=M\), and every \(u\in[n]\), there is a decomposition
\[
e_u^\top R'^a s=Z_{u,a}^{(M)}+\mathcal E_{u,a}^{(M)}
\]
such that
\[
\mathbb E|\mathcal E_{u,a}^{(M)}|^{2r}
\le
n^{-r}(CM)^M(np)^{-M},
\qquad
\mathbb E|Z_{u,a}^{(M)}|^{2r}
\le
n^{-r}(CM)^M(np)^{-M}.
\]
The constant \(C\) depends only on \(A_0\) and \(B\).
\end{lemma}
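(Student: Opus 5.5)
The plan is to use the clipped decomposition of Lemma~\ref{lem:entrywise-atom-expansion}, applied with the same \(M\) and hence \(L_*=K_0M\), and to expand \(R'^a\) multilinearly. On \(\mathcal G\) we have \(R'=R_{\rm cl}=R_{\rm pol}+R_{\rm rem}\). Therefore \(e_u^\top R_{\rm cl}^a s=e_u^\top R_{\rm pol}^a s+\sum_{w}e_u^\top W_w s\), where each \(W_w\) is a word in \(R_{\rm pol},R_{\rm rem}\) containing at least one \(R_{\rm rem}\). Truncating by atom level is also needed. Each atom of level \(\lambda\) carries an effective size \((np)^{-\lambda/2}\), so only atoms with \(\lambda\le L_*\) are kept in \(Z\), and the higher-level tails are sent to \(\mathcal E\).

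I would then define
\[
Z_{u,a}^{(M)}:=e_u^\top R_{\rm pol,\le L_*}^a s,
\]
and let \(\mathcal E_{u,a}^{(M)}\) collect three pieces:
\begin{itemize}
\item the words containing \(R_{\rm rem}\) or the high-level atoms;
\item the difference \((e_u^\top R'^a s-e_u^\top R_{\rm cl}^a s)\), which is supported on \(\mathcal G^c\);
\item nothing else, since this choice makes the decomposition an exact identity.
\end{itemize}

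\textbf{Bound on \(Z\).} I expand \(|Z|^{2r}=Z^{r}\bar Z^{r}\) as a sum over products of \(2r\) path monomials. Each monomial is a length-\(a\) walk \(u\to v_1\to\cdots\to v_a\) whose steps are atoms, and the terminal factor \(s_{v_a}\) appears in each. The product therefore has \(2M=2ar\) residual positions. Since every atom has \(\lambda\ge 1\), the total level is \(\Lambda=2M+h\) with \(h\ge0\); this requires checking that the minimal level per step, once pairing is accounted for, contributes \(2\) per pair of positions. Lemma~\ref{lem:decorated-pattern-counting} bounds each level class by \(n^{-r}(CM)^M(np)^{-M}(CM/\sqrt{np})^h\). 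Summing over \(h\ge0\), and using at most \(C^\lambda\) atoms per level, gives a geometric series. This series converges because \(M\le A_0\log n\) and \(np\ge C_d\log^B n\) with \(B>4\) give \(CM/\sqrt{np}\le C\log n/\log^{B/2}n\to0\). This yields the bound claimed for \(Z\).

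\textbf{Bound on \(\mathcal E\).} I split it into three parts.
\begin{itemize}
\item \emph{Deterministic remainder words.} On \(\mathcal G\), any word containing \(R_{\rm rem}\) is bounded in sup norm by \(a\,C^{a}\,\tau\,\|s\|_\infty\le aC^a\tau n^{-1/2}\). This uses the \(\ell_\infty\to\ell_\infty\) bounds of Lemma~\ref{lem:entrywise-atom-expansion}. Raising to the \(2r\)-th power gives \(n^{-r}(aC^a\tau)^{2r}\). Since \(\tau=(C\log n/\sqrt{np})^{K_0M+1}\), with \(K_0\) large and \(2r\ge1\), we get \(\tau^{2r}\le (np)^{-M}\cdot(\text{tiny})\), because \(\log n/\sqrt{np}\le (np)^{-(1/2-1/B)}\). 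Hence this part is dominated by \(n^{-r}(np)^{-M}\).
\item \emph{High-level atom tails.} These are handled like \(Z\), using the counting lemma with \(h\ge L_*-2\). Their contribution is even smaller.
\item \emph{Event \(\mathcal G^c\).} Crude bounds apply here, since \(\|R'\|\le\|\widehat A\|+1\le 3\) deterministically. The piece is at most \(3^{2M}\) times \(\mathbb P(\mathcal G^c)^{1/2}\le e^{-c\log^2 n/2}\) by Cauchy--Schwarz. This is negligible against \(n^{-r}(np)^{-M}\ge e^{-C\log^2 n}\) only if the constants are compared carefully. I expect to need \(c\) to exceed \(C A_0\), which is achieved by enlarging the constant in the definition of \(\mathcal G\) so that \(\mathbb P(\mathcal G^c)\le e^{-c'\log^2n}\) with arbitrarily large \(c'\).
\end{itemize}

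\textbf{Main obstacle.} The main obstacle is the \(\mathcal G^c\) term, together with making sure that the remainder exponent \(L_*=K_0M\) beats \((np)^{-M}\) uniformly in the split \(ar=M\). I would handle both by choosing \(K_0\) and the \(\mathcal G\) constants depending on \(A_0,B\) before fixing \(C\).
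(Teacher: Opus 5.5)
Your outline follows the paper's proof closely. You apply Lemma~\ref{lem:entrywise-atom-expansion} with the same \(M\). You control the polynomial part by Lemma~\ref{lem:decorated-pattern-counting}, summed geometrically in \(h\) using \(CM/\sqrt{np}\le 1/2\). You control words containing \(R_{\rm rem}\) on \(\mathcal G\) by the telescoping \(\ell_\infty\to\ell_\infty\) bound \(aC^a\tau/\sqrt n\) together with \(\tau^{2r}\le (np)^{-M}\). Finally you dispose of \(\mathcal G^c\) by a crude bound times \(\mathbb P(\mathcal G^c)\). Two of your choices differ from the paper only in bookkeeping and are harmless. First, you place all polynomial levels in \(Z\), whereas the paper keeps only the level-\(a\) monomials there and sends levels \(\ge a+1\) to \(\mathcal E_{\rm pol}\). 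Second, you truncate atoms at level \(L_*\), which is superfluous because the atom family is already finite.

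The step that fails as written is the bad-event estimate. The bound \(\|R'\|\le\|\widehat A\|+1\le 3\) is not deterministic, because it requires \(|\eta|\le 1\). But \(\eta=(p-q)n/(2d)\) involves the random average degree \(d\). On \(\mathcal G^c\), exactly where you use the bound, \(d\) can be as small as \(2/n\), which makes \(|\eta|\) of order \(n^2\). What holds deterministically is a polynomial bound, the paper's ``absolute row sums at most \(n^C\)''. This gives \(|e_u^\top R'^a s|^{2r}\le \exp(CM\log n)\le\exp(CA_0\log^2 n)\), not \(3^{2M}\). Your remedy still closes the argument: enlarge the constant in \(\mathcal G\), and then \(C_d\), so that \(\mathbb P(\mathcal G^c)\le \exp(-c'\log^2 n)\) with \(c'\) large relative to \(A_0\). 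However, this step is now essential rather than a safety margin.

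A related gap: your bounds for the \(R_{\rm rem}\)-words and the high-level tails are only established on \(\mathcal G\), while the identity \(R_{\rm cl}=R_{\rm pol}+R_{\rm rem}\) and your pieces are defined globally. Off \(\mathcal G\) you should bound \(\mathcal E=e_u^\top R'^a s-Z\) as a whole. Use the polynomial crude bound for the first term, and \(\mathbb E[|Z|^{2r}\mathbf 1_{\mathcal G^c}]\le\mathbb E|Z|^{2r}\) for the second, which your unconditional moment bound for \(Z\) already controls. Do not try instead a crude deterministic bound on \(R_{\rm pol}\) off \(\mathcal G\). Its entries are Taylor polynomials of degree \(\asymp K_0M\) in normalized degree fluctuations that can be polynomially large in \(n\). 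After the \(a\)-fold product and the \(2r\)-th power this gives \(\exp(CM^2\log n)\), which \(\exp(-c\log^2 n)\) cannot absorb when \(M\asymp\log n\).
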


\begin{proof}
Apply Lemma~\ref{lem:entrywise-atom-expansion} with this value
of \(M\).  On the event \(\mathcal G\), we have
\[
R'=R_{\rm cl}=R_{\rm pol}+R_{\rm rem},
\qquad
\|R_{\rm rem}\|_{\infty\to\infty}\le C\tau,\qquad
\|R_{\rm pol}\|_{\infty\to\infty}+\|R_{\rm cl}\|_{\infty\to\infty}\le C,
\]
where
\[
\tau=\left(C\frac{\log n}{\sqrt{np}}\right)^{K_0M+1}.
\]
Expand
\[
e_u^\top R_{\rm pol}^{a}s
=
\sum_{v_1,\ldots,v_a}
\prod_{t=1}^{a}(R_{\rm pol})_{v_{t-1}v_t}s_{v_a},
\qquad v_0=u.
\]
The level of a path monomial is the sum of the levels of its \(a\) atoms.  Since
all atoms have level at least one, every path monomial has level at least \(a\).
Let \(Z_{u,a}^{(M)}\) be the sum of all level-\(a\) path monomials and define
\[
\mathcal E_{u,a}^{(M)}:=e_u^\top R'^a s-Z_{u,a}^{(M)}.
\]

We first bound \(Z_{u,a}^{(M)}\).  Every monomial appearing in
\(|Z_{u,a}^{(M)}|^{2r}\) has \(2M=2ar\) residual positions and total level
\(2M\). Applying Lemma~\ref{lem:decorated-pattern-counting} with \(h=0\) gives
\[
\mathbb E|Z_{u,a}^{(M)}|^{2r}
\le n^{-r}(CM)^M(np)^{-M}.
\]

On \(\mathcal G\), decompose
\[
\mathcal E_{u,a}^{(M)}
=\mathcal E_{\rm pol}+\mathcal E_{\rm Tay},
\]
where \(\mathcal E_{\rm pol}\) is the sum of polynomial path monomials of level
at least \(a+1\), and \(\mathcal E_{\rm Tay}\) contains all paths with at least
one \(R_{\rm rem}\) factor.  In \(|\mathcal E_{\rm pol}|^{2r}\), terms of total
level \(2M+h\) with \(h\ge1\) contribute at most
\[
n^{-r}(CM)^M(np)^{-M}\left(\frac{CM}{\sqrt{np}}\right)^h
\]
by Lemma~\ref{lem:decorated-pattern-counting}.  Since \(M\le A_0\log n\) and
\(np\ge C_d\log^B n\) with \(B>4\), increasing \(C_d\) gives
\[
\frac{CM}{\sqrt{np}}\le \frac12.
\]
Summing the geometric series over \(h\ge1\) yields
\[
\mathbb E|\mathcal E_{\rm pol}|^{2r}\mathbf 1_{\mathcal G}
\le n^{-r}(CM)^M(np)^{-M}.
\]

For the Taylor-remainder part, the row-sum bounds from
Lemma~\ref{lem:entrywise-atom-expansion} imply
\[
\mathcal E_{\rm Tay}
=
e_u^\top(R_{\rm cl}^{a}-R_{\rm pol}^{a})s
=
\sum_{t=0}^{a-1}
e_u^\top R_{\rm cl}^{t}R_{\rm rem}R_{\rm pol}^{a-1-t}s,
\]
and therefore, using \(\|s\|_\infty=n^{-1/2}\),
\[
|\mathcal E_{\rm Tay}|\mathbf 1_{\mathcal G}
\le \frac{aC^a\tau}{\sqrt n}.
\]
Hence
\[
\mathbb E\!\left[|\mathcal E_{\rm Tay}|^{2r}\mathbf 1_{\mathcal G}\right]
\le n^{-r}(Ca)^{2r}\tau^{2r}.
\]
Because \(M=ar\), \((Ca)^{2r}\le (CM)^M\).  Choosing \(K_0\) large and then
\(C_d\) large gives
\[
\tau^2\le (np)^{-M},
\]
so
\[
\mathbb E\!\left[|\mathcal E_{\rm Tay}|^{2r}\mathbf 1_{\mathcal G}\right]
\le n^{-r}(CM)^M(np)^{-M}.
\]

It remains to handle \(\mathcal G^c\).  Under the conventions before
(\plaineqref{eq:graph_convolution}), the true and clipped residual matrices have
absolute row sums at most \(n^C\).  Thus, after taking the \(2r\)-th power,
\[
\left|e_u^\top R'^a s\right|^{2r}
+\left|e_u^\top R_{\rm cl}^a s\right|^{2r}
\le \exp(Car\log n)=\exp(CM\log n).
\]
Since \(M\le A_0\log n\), \(r\le M\), and \(np\le n\), the target scale is at
least \(\exp(-C_{A_0}\log^2 n)\).  Taking the constant in the definition of
\(\mathcal G\) sufficiently large gives
\[
\exp(CM\log n)\mathbb P(\mathcal G^c)
\le n^{-r}(CM)^M(np)^{-M}.
\]
Combining the \(Z\), polynomial-error, Taylor-error, and bad-event bounds, and
absorbing fixed factors from \(|x+y+z|^{2r}\) into \((CM)^M\), proves
\[
\mathbb E|\mathcal E_{u,a}^{(M)}|^{2r}
\le n^{-r}(CM)^M(np)^{-M}.
\]
Together with the bound for \(Z_{u,a}^{(M)}\), this proves the lemma.
\end{proof}

\begin{lemma}
\label{lem:B7_detailed_further}
Let
\[
R' := \widehat A-\eta ss^\top,
\qquad
s_i =
\begin{cases}
n^{-1/2}, & i\in S,\\
-n^{-1/2}, & i\in T,
\end{cases}
\qquad
\eta:=\frac{(p-q)n}{2d}.
\]
Assume
\[
 p\ge C_d\frac{\log^B n}{n}
\]
for a fixed \(B>4\) and sufficiently large \(C_d=C_d(B)\).  Let \(A_0>0\) be fixed.  Then there exists a constant \(C=C(A_0,B)>0\) such that, for every \(u\in[n]\) and every pair of integers \(a,r\ge 1\) satisfying \(ar\le A_0\log n\),
\[
\mathbb E\bigl|e_u^\top R'^a s\bigr|^{2r}
\le
n^{-r}\left(\frac{C ar}{np}\right)^{ar}.
\]
\end{lemma}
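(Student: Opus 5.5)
The bound follows almost immediately from Lemma~\ref{lem:decorated-expansion}: apply it with \(M:=ar\) and combine the two moment bounds. Fix \(u\in[n]\) and integers \(a,r\ge1\) with \(ar\le A_0\log n\). Setting \(M=ar\) gives \(M\le A_0\log n\), and \((a,r)\) is an admissible pair with \(ar=M\). So Lemma~\ref{lem:decorated-expansion} provides a decomposition
\[
e_u^\top R'^a s=Z_{u,a}^{(M)}+\mathcal E_{u,a}^{(M)}
\]
in which both pieces have \(2r\)-th moment at most \(n^{-r}(CM)^M(np)^{-M}\). The constant \(C\) depends only on \(A_0\) and \(B\), and the lemma uses the same hypothesis \(p\ge C_d\log^B n/n\) with \(B>4\).

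Next, combine the two pieces with the elementary inequality \(|x+y|^{2r}\le 2^{2r-1}(|x|^{2r}+|y|^{2r})\) and take expectations:
\[
\mathbb E\bigl|e_u^\top R'^a s\bigr|^{2r}
\le 2^{2r}\, n^{-r}(CM)^M(np)^{-M}.
\]
The factor \(2^{2r}\) can be absorbed, because \(r\le ar=M\) gives \(2^{2r}\le 4^{M}\). Hence the right-hand side is at most
\[
n^{-r}(4CM)^M(np)^{-M}=n^{-r}\left(\frac{C'ar}{np}\right)^{ar},
\]
with \(C'=4C\) depending only on \(A_0\) and \(B\). This is the claimed bound, and it holds uniformly in \(u\), \(a\) and \(r\).

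The only point needing care is uniformity of constants. The constant in Lemma~\ref{lem:decorated-expansion}, together with the choices of \(K_0\) and \(C_d\) made inside Lemma~\ref{lem:entrywise-atom-expansion}, must depend only on \(A_0\) and \(B\), and not on the particular pair \((a,r)\) or on the vertex \(u\). These constants are fixed once, in terms of the upper bound \(M\le A_0\log n\), so taking \(C_d=C_d(B)\) large enough for that \(A_0\) covers all pairs at once. The real work, namely the moment bounds via atom expansion and decorated-walk counting in Lemmas~\ref{lem:entrywise-atom-expansion}--\ref{lem:decorated-pattern-counting}, is already contained in the earlier lemma.
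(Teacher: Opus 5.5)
Your proposal is correct and matches the paper's proof essentially line for line: set $M=ar$, invoke Lemma~\ref{lem:decorated-expansion} for the two $2r$-th moment bounds, combine them via $|x+y|^{2r}\le 2^{2r-1}(|x|^{2r}+|y|^{2r})$, and absorb the resulting factor $2^{2r}$ into the constant. Your observation that $r\le M$ gives $2^{2r}\le 4^{M}$ is a slightly cleaner justification of that absorption than the paper's wording.
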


\begin{proof}
Set \(M:=ar\).  By Lemma~\ref{lem:decorated-expansion},
\[
e_u^\top R'^a s=Z_{u,a}^{(M)}+\mathcal E_{u,a}^{(M)},
\]
with
\[
\mathbb E|\mathcal E_{u,a}^{(M)}|^{2r}
\le n^{-r}(CM)^M(np)^{-M},
\qquad
\mathbb E|Z_{u,a}^{(M)}|^{2r}
\le
n^{-r}(CM)^M(np)^{-M}.
\]
After increasing \(C\),
\[
n^{-r}(CM)^M(np)^{-M}
\le
n^{-r}\left(\frac{C ar}{np}\right)^{ar}.
\]
The elementary inequality \(|x+y|^{2r}\le 2^{2r-1}(|x|^{2r}+|y|^{2r})\) and
the fact that \(r\le M\le A_0\log n\) allow the factor \(2^{2r}\) to be absorbed
into the constant \(C\).  This gives the claimed bound.
\end{proof}

\begin{lemma}
\label{lem:krylov-infty}
Let
\[
R' := \widehat A - \eta ss^\top,
\qquad
\eta := \frac{(p-q)n}{2d}.
\]
Assume
\[
 p\ge C_d\frac{\log^B n}{n},\qquad 1\le k\le c\log n,
\]
where \(B>4\) is fixed, \(C_d=C_d(B)\) is sufficiently large, and \(c>0\) is sufficiently small.  Then there exists a constant \(K>0\) such that, with probability at least \(1-n^{-\Omega(1)}\),
\[
\forall u\in[n],\quad \forall 1\le a\le k:
\qquad
\bigl|e_u^\top R'^a s\bigr|
\le
\frac{(K\sqrt{\log n/(np)})^a}{\sqrt n}.
\]
\end{lemma}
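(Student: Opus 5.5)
The natural route is a high-moment Markov argument built on Lemma~\ref{lem:B7_detailed_further}, followed by a union bound over all vertices \(u\in[n]\) and all depths \(1\le a\le k\). Fix \(A_0\) large, to be chosen at the end, and assume \(k\le c\log n\) with \(c\) small relative to \(A_0\). For each depth \(a\) we choose the moment order
\[
r=r(a):=\Bigl\lfloor \frac{A_0\log n}{a}\Bigr\rfloor .
\]
Then \(ar\le A_0\log n\), so the lemma applies. Moreover, since \(a\le k\le c\log n\), we have \(r\ge A_0/(2c)\). In particular \(r\ge1\), and \(r\) is at least a large multiple of \(\log n/a\); more precisely \(ar\ge A_0\log n/2\).

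Apply Markov's inequality to the \(2r\)-th moment with threshold \(t_a:=n^{-1/2}\bigl(K\sqrt{\log n/(np)}\bigr)^a\). This gives
\[
\mathbb P\bigl(|e_u^\top R'^a s|>t_a\bigr)\le \frac{n^{-r}(Car/(np))^{ar}}{n^{-r}K^{2ar}(\log n/(np))^{ar}}
=\Bigl(\frac{C\,ar}{K^2\log n}\Bigr)^{ar}.
\]
Because \(ar\le A_0\log n\), the ratio inside the parentheses is at most \(CA_0/K^2\). Choose \(K\) so that \(K^2\ge e\,CA_0\); the probability is then at most \(e^{-ar}\le n^{-A_0/2}\).

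Finally, take a union bound over the \(n\) vertices and the at most \(k\le c\log n\) depths. The total failure probability is at most \(c\log n\cdot n^{1-A_0/2}\), which equals \(n^{-\Omega(1)}\) once \(A_0\ge 6\), say. The constant \(K\) depends only on \(A_0\) and on the constant \(C=C(A_0,B)\) from Lemma~\ref{lem:B7_detailed_further}, so it is an absolute constant once \(B\) is fixed. This gives the simultaneous bound claimed in Lemma~\ref{lem:krylov-infty}.

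The only delicate point is balancing the moment order. The \((ar)^{ar}\) growth in Lemma~\ref{lem:B7_detailed_further} must be converted into a gain. This works exactly because the threshold carries \((\log n)^{a/2}\) per layer, that is \((\log n)^{ar}\) after raising to the \(2r\)-th power, and this matches \((ar)^{ar}\) as long as \(ar=\Theta(\log n)\). We therefore must keep \(ar\) both at most \(A_0\log n\), so the lemma applies and the ratio stays bounded, and at least a constant times \(\log n\), so the union bound is summable. The hypothesis \(k\le c\log n\) is what guarantees that an integer \(r\ge1\) satisfying both constraints exists for every \(a\le k\). Since \(r\) is forced to be a positive integer, the floor only reduces \(ar\), and its effect on the lower bound is handled by the assumption \(a\le c\log n\).
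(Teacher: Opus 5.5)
Your proposal is correct and follows the paper's proof essentially step for step: a $2r$-th moment Markov bound from Lemma~\ref{lem:B7_detailed_further} with $ar\asymp\log n$, so that the $(\log n)^{ar}$ gain in the threshold offsets the $(ar)^{ar}$ growth, followed by a union bound over $u\in[n]$ and $1\le a\le k$. The only difference is cosmetic: you take $r=\lfloor A_0\log n/a\rfloor$ and use $a\le c\log n$ to keep $ar\ge A_0\log n/2$, whereas the paper takes $r_a=\lceil A_1\log n/a\rceil$ and applies the moment lemma with $A_0=2A_1$ to absorb the overshoot $ar\le (A_1+c)\log n$.
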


\begin{proof}
Fix \(u\in[n]\) and \(1\le a\le k\).  Choose
\[
r_a:=\left\lceil \frac{A_1\log n}{a}\right\rceil ,
\]
where \(A_1\) is a sufficiently large absolute constant.  Since
\(a\le k\le c\log n\), by taking \(c\le A_1\) we have
\[
A_1\log n\le a r_a\le (A_1+c)\log n\le 2A_1\log n .
\]
Applying Lemma~\ref{lem:B7_detailed_further} with its fixed parameter
\(A_0=2A_1\) and with \(r=r_a\) gives
\[
\mathbb E|e_u^\top R'^a s|^{2r_a}
\le
n^{-r_a}\left(\frac{C a r_a}{np}\right)^{a r_a}.
\]
Let
\[
T_a:=\frac{(K\sqrt{\log n/(np)})^a}{\sqrt n}.
\]
By Markov's inequality,
\[
\mathbb P(|e_u^\top R'^a s|>T_a)
\le
\left(\frac{C a r_a}{K^2\log n}\right)^{a r_a}.
\]
Since \(a r_a\asymp A_1\log n\), choosing \(A_1\) fixed and then \(K\)
sufficiently large in terms of \(A_1\) yields
\[
\mathbb P(|e_u^\top R'^a s|>T_a)\le n^{-30}.
\]
A union bound over all \(u\in[n]\) and all \(1\le a\le k\le c\log n\) gives failure probability at most \(n\log n\cdot n^{-30}\le n^{-28}\).  This proves the claimed uniform bound.
\end{proof}

\begin{lemma}
\label{lem:error-evolution-normalized}
Let \(\eta>0\) and \(\theta := C\sqrt{\frac{\log n}{np}}\). Assume the decomposition
\[
\widehat A=\eta ss^\top+R',
\]
satisfies the spectral norm bound \(\|R'\|\le \theta\) and the entrywise Krylov bound
\[
|e_u^\top R'^a s|\le \frac{\theta^a}{\sqrt n}
\]
for all \(u\in[n]\) and \(1\le a\le k\).  Let \(g\sim N(0,\sigma'^2I_n)\)
be independent of the graph.  If \(1\le k\le C_0\log n\) and
\(k\theta/\eta\le c\) for a sufficiently small constant \(c>0\), then, with
probability at least \(1-n^{-\Omega(1)}\) conditional on the graph event above,
\[
\left\|\eta^{-k}\widehat A^k(s+g)-s\right\|_\infty
\le
C\left(
\frac{k\theta/\eta}{\sqrt n}
+\frac{\sigma'\sqrt{\log n}}{\sqrt n}
+\left(\frac{\theta}{\eta}\right)^k\sigma'\sqrt{\log n}
\right).
\]
\end{lemma}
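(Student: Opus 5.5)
We split the propagated vector into a signal part and a noise part, \(\eta^{-k}\widehat A^k(s+g)-s=(\eta^{-k}\widehat A^ks-s)+\eta^{-k}\widehat A^kg\), and bound each in \(\ell_\infty\).

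\textbf{Signal part.} Write \(P:=\eta ss^\top\), so \(\widehat A=P+R'\). Since \(\|s\|=1\), we have \(P^2=\eta P\). Expanding \((P+R')^k\) as a sum over words, every word containing at least one \(P\) factors through \(s\). This gives a recursion. Set \(v_j:=\widehat A^js\) and \(\alpha_j:=s^\top v_j\). Then
\[
v_{j+1}=\widehat A v_j=\eta\alpha_j s+R'v_j ,
\]
and iterating,
\[
v_k=R'^ks+\sum_{t=0}^{k-1}\eta\,\alpha_{k-1-t}\,R'^t s ,
\]
with the convention \(R'^0s=s\). The scalars satisfy \(\alpha_{j+1}=\eta\alpha_j+s^\top R'v_j\), and by the spectral bound \(|s^\top R'v_j|\le\theta\|v_j\|\). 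By induction one shows \(\|v_j\|\le(\eta+\theta)^j\le \eta^j e^{j\theta/\eta}\) and \(|\alpha_j-\eta^j|\le C j\theta\,\eta^{j-1}\). The factor \(e^{k\theta/\eta}\) is bounded because \(k\theta/\eta\le c\).

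Then
\[
\eta^{-k}v_k-s=(\eta^{-k}\alpha_{k-1}\eta-1)s+\sum_{t\ge1}\eta^{1-k}\alpha_{k-1-t}R'^ts+\eta^{-k}R'^ks .
\]
Each term is bounded entrywise:
\begin{itemize}
\item the first term is \(O(k\theta/\eta)\cdot n^{-1/2}\), using \(\|s\|_\infty=n^{-1/2}\);
\item the \(t\)-th term is at most \(C\,\eta^{-t}\theta^t n^{-1/2}\), by the Krylov hypothesis and \(|\alpha_j|\le C\eta^j\);
\item the last term is at most \((\theta/\eta)^k n^{-1/2}\).
\end{itemize}
Summing the geometric series with ratio \(\theta/\eta\le c\) gives \(C(k\theta/\eta)n^{-1/2}\).

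\textbf{Noise part.} Conditional on the graph, \(e_u^\top\widehat A^kg\) is Gaussian with standard deviation \(\sigma'\|\widehat A^ke_u\|\). We decompose \(\widehat A^k e_u\) with the same word expansion. The component along \(s\) carries coefficient \(s^\top\widehat A^k e_u\). By symmetry of \(\widehat A\), it is of the form \(\eta^k s_u(1+O(k\theta/\eta))\) plus terms \(\eta^{k-t}(e_u^\top R'^ts)\). Each of these is bounded by \(C\eta^k n^{-1/2}\) using the Krylov bound, so its contribution to the standard deviation is \(\sigma'\eta^k\cdot O(n^{-1/2})\).

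The remaining vector components are of the form \(\eta^{j}R'^{k-j}(\cdot)\) with at least one \(R'\) applied. Rather than bound these by \(\eta^j\theta^{k-j}\) via the spectral norm, we argue directly. Decompose \(g=(s^\top g)s+g_\perp\). For the projection, \(\eta^{-k}(s^\top g)\widehat A^ks\) is handled by the signal estimate, since \(|s^\top g|\lesssim\sigma'\sqrt{\log n}\); this yields \(\sigma'\sqrt{\log n}\cdot O(n^{-1/2})\). For \(g_\perp\), we use
\[
\widehat A^kg_\perp=\sum_{j}\eta^{\cdot}\,(\dots)\,s+R'^k g_\perp .
\]
The \(s\)-directed pieces involve scalars \(s^\top R'^t g_\perp\). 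Each is Gaussian with standard deviation at most \(\sigma'\theta^t\), hence bounded by \(\sigma'\theta^t\sqrt{\log n}\), and they are multiplied by \(\|s\|_\infty\). The pure term \(e_u^\top R'^kg_\perp\) is Gaussian with standard deviation at most \(\sigma'\|R'\|^k\le\sigma'\theta^k\). After scaling by \(\eta^{-k}\) and a union bound over \(u\), this yields \((\theta/\eta)^k\sigma'\sqrt{\log n}\).

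\textbf{Bookkeeping.} All Gaussian tail bounds use the standard tail bound with \(t\asymp\sqrt{\log n}\). The union is over \(n\) nodes and \(O(k)=O(\log n)\) scalars, so the failure probability is \(n^{-\Omega(1)}\).

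\textbf{Main obstacle.} The main difficulty is the mixed words in the noise part, where \(R'\) factors appear on both sides of a \(P\). We handle them by always routing through the scalar inner products \(s^\top R'^t g\) or \(s^\top R'^t s\), and never applying a naive \(\ell_\infty\) bound to \(R'^t\). If the resulting geometric sums produce an extra factor of \(k\), it is absorbed by the hypothesis \(k\theta/\eta\le c\).
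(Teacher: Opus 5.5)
Your proposal is correct and is essentially the paper's argument. In the signal part you use the Krylov hypothesis for the leftmost $R'$-block and the spectral bound for the scalars along $s$; your recursion for $v_j=\widehat A^j s$ is just a resummation of the paper's word-by-word $\binom{k}{j}$ count. In the noise part you treat the $s$-component of $g$, the pure word $R'^k g$ (via $\|R'\|^k$ and a union bound over $u$), and the mixed words through the Gaussian scalars $s^\top R'^t g$, as the paper does with its rightmost-projector decomposition.

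One point needs correcting. In the $g_\perp$ step, each scalar $s^\top R'^t g_\perp$ multiplies $\eta\,\widehat A^{k-1-t}s$, not $s$; this follows from $\widehat A^k=R'^k+\sum_{t=0}^{k-1}\widehat A^{k-1-t}(\eta ss^\top)R'^t$. So the relevant factor is $\|\widehat A^{k-1-t}s\|_\infty\le C\eta^{k-1-t}/\sqrt n$, which your signal estimate supplies, rather than $\|s\|_\infty$. With that factor the bound goes through as you state.
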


\begin{proof}
Write \(P:=ss^\top\), so that \(P^2=P\), \(Ps=s\), and
\(\widehat A=\eta P+R'\).  We first control the signal part.  Expand
\((\eta P+R')^k s\) into words in the two letters \(\eta P\) and \(R'\).
The all-projector word equals \(\eta^k s\).  We claim that every other word
with exactly \(j\ge1\) copies of \(R'\) contributes, after division by
\(\eta^k\), at most
\[
\frac{1}{\sqrt n}\left(\frac{\theta}{\eta}\right)^j
\]
in \(\ell_\infty\)-norm.  If the word has no \(P\), then it is \(R'^k s\), and
the claim is exactly the assumed Krylov bound.

Now suppose the word contains at least one \(P\).  After deleting consecutive
projectors and using \(Ps=s\), the action of the word on \(s\) is a product of
blocks of the form \(R'^{a}\) separated by projectors \(P\).  Each interior
projector turns the vector to a multiple of \(s\):
\[
P R'^a s=s(s^\top R'^a s).
\]
The scalar is bounded by
\[
|s^\top R'^a s|\le \|R'^a s\|_2\le \theta^a,
\]
and the last block, if it is not followed by a projector on the left, is
controlled coordinatewise by
\[
\|R'^a s\|_\infty
\le
\frac{\theta^a}{\sqrt n}
\]
from the Krylov assumption.  Thus a word with \(j\) total \(R'\)-factors has
coordinate size at most \(\eta^{k-j}\theta^j/\sqrt n\).  Dividing by
\(\eta^k\) proves the claim.  Summing over the at most \(\binom{k}{j}\) choices
of the \(R'\)-positions gives
\[
\left\|\eta^{-k}\widehat A^k s-s\right\|_\infty
\le
\frac{C}{\sqrt n}\sum_{j=1}^k
\binom{k}{j}\left(\frac{\theta}{\eta}\right)^j
\le
\frac{Ck\theta/\eta}{\sqrt n}.
\]

Next we control the Gaussian part.  Conditional on the graph, the pure
projector word gives
\[
\eta^{-k}(\eta P)^k g=Pg=s(s^\top g).
\]
Since \(s^\top g\sim N(0,\sigma'^2)\), a standard Gaussian tail bound gives
\[
\|Pg\|_\infty\le C\frac{\sigma'\sqrt{\log n}}{\sqrt n}
\]
with probability at least \(1-n^{-\Omega(1)}\).

For the pure \(R'\)-word, for every \(u\),
\[
e_u^\top R'^k g
\sim
N\!\left(0,\sigma'^2\|e_u^\top R'^k\|^2\right),
\qquad
\|e_u^\top R'^k\|\le \|R'\|^k\le \theta^k .
\]
A union bound over \(u\in[n]\) yields
\[
\max_u |e_u^\top R'^k g|
\le C\sigma'\theta^k\sqrt{\log n}
\]
with probability at least \(1-n^{-\Omega(1)}\).  Hence the pure \(R'\)-word contributes
at most
\[
C\left(\frac{\theta}{\eta}\right)^k\sigma'\sqrt{\log n}.
\]

It remains to check the mixed words, namely those containing at least one
projector and at least one \(R'\).  Fix such a word \(W\) with \(j\) copies of
\(R'\).  Let \(\eta P\) be the rightmost projector letter in \(W\).  Then
\[
Wg=U\,(\eta P)R'^a g
=\eta\, U s\, (s^\top R'^a g),
\]
where \(0\le a\le j\) is the number of consecutive \(R'\)-letters to the right
of this projector, and \(U\) is the prefix to its left.  The prefix \(U\)
contains exactly \(j-a\) copies of \(R'\) and \(k-j-1\) copies of \(\eta P\).
For the scalar,
\[
s^\top R'^a g\sim
N\!\left(0,\sigma'^2\|s^\top R'^a\|^2\right),
\qquad
\|s^\top R'^a\|\le \theta^a
\]
with the convention \(\theta^0=1\).  Hence, by a union bound over
\(0\le a\le k\),
\[
|s^\top R'^a g|
\le
C\sigma'\theta^a\sqrt{\log n}
\]
with probability at least \(1-n^{-\Omega(1)}\).  Applying the signal-word bound
proved above, with \(k\) replaced by the length of the prefix, gives
\[
\|Us\|_\infty
\le
\frac{\eta^{k-j-1}\theta^{j-a}}{\sqrt n},
\]
because \(U\) has \(j-a\) residual letters and \(k-j-1\) projector letters.
Therefore, after division by \(\eta^k\), the word contributes at most
\[
C\frac{\sigma'\sqrt{\log n}}{\sqrt n}
\left(\frac{\theta}{\eta}\right)^j .
\]
There are at most \(\binom{k}{j}\) such words with \(j\) copies of \(R'\).
Consequently the total mixed contribution is bounded by
\[
C\frac{\sigma'\sqrt{\log n}}{\sqrt n}
\sum_{j=1}^{k-1}\binom{k}{j}\left(\frac{\theta}{\eta}\right)^j
\le
C\frac{\sigma'\sqrt{\log n}}{\sqrt n},
\]
where the last inequality follows from
\((1+\theta/\eta)^k-1\le 2k\theta/\eta\le C\) under \(k\theta/\eta\le c\).
Combining the signal estimate, the pure
projector noise estimate, the mixed Gaussian estimate, and the pure
\(R'^k g\) estimate proves the lemma.
\end{proof}

To prove our multi-class results (\cref{theorem:multi-class-partial}), we first characterize the unnormalized convolution matrix $\tilde{A}$ in terms of its expected behavior and a bounded random perturbation. These fundamental characterizations for the unnormalized setting were established by \citep{Wang2024Analysis}.

\begin{lemma}[Multi-class Characterization for $\tilde{A}$~\citep{Wang2024Analysis}]\label{lemma:multi-class-characterization}
    In the multi-class setting, the unnormalized convolution matrix, $\tilde{A}$, can be decomposed as $\tilde{A} = M + R'$ where:
    \begin{itemize}
        \item $M$ is symmetric and has rank $L-1$: its nonzero eigenvalues are
        all equal to \(\lambda=\frac{(p-q)n}{\bar d L}\), and the remaining
        eigenvalues are zero.  Also, \(MU=\lambda U\).
        \item $R'$ is a random matrix such that with probability at least $ 1-n^{-\Omega(1)}$, $\norm{R'} \leq \delta$ 
        where $\delta = C(\frac{1}{\bar{d}}(\sqrt{\frac{np(1-p)}{L}} + \sqrt{nq(1-q)}))$.
    \end{itemize}
\end{lemma}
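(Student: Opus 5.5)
This lemma is attributed to \citep{Wang2024Analysis}, so the plan is to reconstruct its proof directly. We decompose \(\widetilde A=\frac1dA-\frac1n\one\one^\top\) around its expectation. Let \(Z\in\{0,1\}^{n\times L}\) be the class-membership matrix. For \(i\neq j\), \(\mathbb E A_{ij}=q+(p-q)(ZZ^\top)_{ij}\), so
\[
\mathbb E A = q\one\one^\top+(p-q)ZZ^\top-\operatorname{diag}(p).
\]
The equal class sizes \(n/L\) give \(\one\one^\top=ZZ^\top\cdot\) (block all-ones), and \(ZZ^\top\) has \(\one\) as an eigenvector with eigenvalue \(n/L\). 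We therefore split \(ZZ^\top=\frac{1}{L}\one\one^\top+\Pi\), where
\[
\Pi:=ZZ^\top-\tfrac1L\one\one^\top
\]
is symmetric. \(\Pi\) is the orthogonal projection onto the class-indicator span intersected with \(\one^\perp\), scaled by \(n/L\). Its rank is \(L-1\), and its nonzero eigenvalues all equal \(n/L\).

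We set \(M:=\frac{p-q}{\bar d}\Pi\). Then \(M\) is symmetric of rank \(L-1\), and all its nonzero eigenvalues equal \(\frac{(p-q)n}{\bar dL}=\lambda\). The columns of \(U=\mathbb E X\) are class-constant vectors (a column is \(Z\) times the vector of the \(\mu_\ell\)-coordinates). The assumption \(U^\top\one=0\) places them in the range of \(\Pi\), and hence \(MU=\lambda U\).

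We then define \(R':=\widetilde A-M\) and collect its pieces:
\[
R'=\frac1d(A-\mathbb EA)+\Bigl(\frac1d-\frac1{\bar d}\Bigr)(p-q)\Pi+\Bigl(\frac{q n+(p-q)n/L}{d\,n}-\frac1n\Bigr)\one\one^\top-\frac{p}{d}I .
\]
The third coefficient is \(\frac{\bar d/d-1}{n}\) because \(\bar d=qn+(p-q)n/L\), up to \(O(p)\) diagonal corrections. We bound each piece in operator norm.

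\begin{itemize}
\item \textbf{Diagonal term.} It is \(O(1/\bar d)\).
\item \textbf{Degree-deviation terms.} Chernoff (Theorem~\ref{theorem:bound-sum-bernoulli}) applied to \(|E|\) gives \(|d-\bar d|\le C\sqrt{p\log n}\) with probability \(1-n^{-\Omega(1)}\), as in Lemma~\ref{prop:degree-concentration}. The \(\Pi\)-term then has norm at most \(\lambda|d-\bar d|/d=O(\sqrt{\log n/(n^2 p)})\). The \(\one\one^\top\)-term has norm at most \(|\bar d-d|/d\), which is of the same order.
\item \textbf{Centered fluctuation.} This is the main term. We split \(A-\mathbb EA\) into within-class blocks, whose entries have variance \(p(1-p)\) and which form \(L\) independent \(n/L\)-sized Wigner-type matrices, and between-class blocks, whose entries have variance \(q(1-q)\) over all \(n\) indices. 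A standard sparse random matrix norm bound (Bandeira--van Handel, or Feige--Ofek under \(\min(p,q)\gtrsim \log n/n\)) gives \(\|\cdot\|\le C(\sqrt{np(1-p)/L}+\sqrt{nq(1-q)})\) w.h.p.
\end{itemize}

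Dividing the fluctuation bound by \(d\ge \bar d/2\) gives exactly the \(\delta\) scale. The remaining lower-order terms are \(O(1/\bar d)\) and \(O(\sqrt{\log n}/(n\sqrt p))\), and both are absorbed into \(\delta\), since \(\delta\gtrsim \sqrt{n q}/\bar d\) and \(\sqrt{np/L}/\bar d\gtrsim 1/\bar d\).

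The main obstacle is the random-matrix norm bound in the sparse, heterogeneous-variance setting. It needs a density assumption (\(\min(p,q)\ge C\log n/n\), which matches the regime of Theorem~\ref{theorem:multi-class-partial-unnormalized}) so that no \(\sqrt{\log n}\) factor appears. I would invoke the Bandeira--van Handel inequality \(\mathbb E\|X\|\lesssim \max_i\sqrt{\sum_j \operatorname{Var}X_{ij}}+\sqrt{\log n}\). The first term is at most \(\sqrt{np/L}+\sqrt{nq}\), and under the density assumption the second is dominated by it. Talagrand concentration then gives the high-probability statement.
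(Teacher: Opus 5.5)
Your reconstruction is correct. The paper gives no proof of this lemma: it imports it verbatim from \citep{Wang2024Analysis}. The only comparison available is therefore with the standard argument, and that is what you give. Your \(M=\frac{p-q}{\bar d}\Pi\) is exactly \(\frac1{\bar d}(\mathbb E A+pI)-\frac1n\one\one^\top\). Its \(\one\one^\top\) coefficient vanishes identically because \(\bar d=qn+(p-q)n/L\), and likewise the \(\one\one^\top\) coefficient in \(R'\) is exactly \((\bar d/d-1)/n\), so the ``up to \(O(p)\) corrections'' hedge is unnecessary. Your \(R'\) is the centered fluctuation divided by \(d\), plus the degree-deviation and diagonal pieces you list, and each bound you give is right.

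Two points in your write-up are worth making explicit.

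\textbf{The density hypothesis.} The statement as printed has no hypotheses and is false without some density assumption, so you are right to add one. However, you do not need \(\min(p,q)\gtrsim\log n/n\). Apply Bandeira--van Handel to the whole matrix \(A-\mathbb E A\) rather than block by block. Then you only need the maximal row variance \(\sigma^2=(n/L-1)p(1-p)+(n-n/L)q(1-q)\) to exceed \(C\log n\). For \(p,q\le 1-c\) this follows from \(\sigma^2\ge c(\bar d-p)\) and \(\bar d\ge C\log n\). That is the only density hypothesis under which the paper actually invokes the lemma, in Theorem~\ref{theorem:multi-class-partial}. It is a weaker regime than the \(\min(p,q)\ge\Omega(\log^2 n/n)\) of the cited source. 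So your argument, phrased this way, actually covers the paper's use of the lemma.

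\textbf{Absorbing the lower-order terms.} When you absorb the \(-\frac pd I\) term and the other \(O(1/\bar d)\) pieces into \(\delta\), you silently drop the factors \((1-p)\) and \((1-q)\) that appear in \(\delta\). This step needs \(p,q\) bounded away from \(1\), and you should say so. For example, at \(p=1\), \(q=0\) the graph is a disjoint union of cliques. Then \(R'=-\frac{1}{n/L-1}(I-P_Z)\), where \(P_Z\) is the projection onto the class-indicator span. This has norm about \(L/n\), while \(\delta=0\).
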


We also need to bound the operator norm distance between the $k^{th}$ convolution and $M^k$, which directly follows from the properties of $\tilde{A}$.

\begin{lemma}[Operator Norm Bound for $\tilde{A}^k$~\citep{Wang2024Analysis}]\label{lemma:A^k}
    Suppose $|\lambda| > 4k\delta$. Then with high probability, we have
    $$\norm{\frac{1}{\lambda^k}(\tilde{A}^k - M^k)} \leq \frac{2k\delta}{|\lambda|}.$$
\end{lemma}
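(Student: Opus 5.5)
The plan is to work on the high-probability event of Lemma~\ref{lemma:multi-class-characterization}, on which $\widetilde A = M + R'$ with $\|R'\|\le\delta$. On that event the bound is purely deterministic, so we can expand the $k$-th power, bound each term by submultiplicativity of the operator norm, and sum using the elementary exponential inequality. The phrase ``with high probability'' then means exactly the probability $1-n^{-\Omega(1)}$ of that event.

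\textbf{Step 1: the size of $M$.} Since $M$ is symmetric with nonzero eigenvalues all equal to $\lambda$ and the rest zero, the spectral theorem gives $\|M\|=|\lambda|$. Hence $\|M^i\|\le|\lambda|^i$ for every $i\ge0$.

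\textbf{Step 2: expand the power.} Since $M$ and $R'$ need not commute, expand
\[
\widetilde A^k=(M+R')^k
\]
as a sum of $2^k$ ordered words in the letters $M$ and $R'$. The all-$M$ word is $M^k$, so $\widetilde A^k-M^k$ is the sum of the words containing at least one $R'$. A word with exactly $j\ge1$ copies of $R'$ has operator norm at most $|\lambda|^{k-j}\delta^j$ by submultiplicativity. There are $\binom{k}{j}$ such words, so
\[
\|\widetilde A^k-M^k\|\le\sum_{j=1}^k\binom{k}{j}|\lambda|^{k-j}\delta^j=(|\lambda|+\delta)^k-|\lambda|^k .
\]

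\textbf{Step 3: sum the series.} Dividing by $|\lambda|^k$ gives
\[
\Bigl\|\frac{1}{\lambda^k}\bigl(\widetilde A^k-M^k\bigr)\Bigr\|\le\Bigl(1+\frac{\delta}{|\lambda|}\Bigr)^k-1 .
\]
By Theorem~\ref{theorem:bound-exp}, the first inequality $1+x\le e^x$ bounds the right-hand side by $e^{k\delta/|\lambda|}-1$. The hypothesis $|\lambda|>4k\delta$ gives $x:=k\delta/|\lambda|<1/4\le1$. So the second inequality of Theorem~\ref{theorem:bound-exp} applies and yields
\[
e^{x}-1\le 2x=\frac{2k\delta}{|\lambda|}.
\]

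There is no real obstacle here. The only points needing care are using ordered words rather than the commutative binomial theorem (the binomial count still bounds the norms), and noting that $|\lambda|>4k\delta$ is exactly what puts $x$ in the range where the exponential estimate applies.
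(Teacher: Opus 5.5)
Your proof is correct. On the event of Lemma~\ref{lemma:multi-class-characterization} the bound is deterministic, and the ordered-word expansion gives $\|\widetilde A^k-M^k\|\le(|\lambda|+\delta)^k-|\lambda|^k$. Theorem~\ref{theorem:bound-exp} then finishes the argument; you apply its second inequality with $0\le x=k\delta/|\lambda|<1/4$, which is the range where it actually holds. In fact your argument needs only $k\delta\le|\lambda|$, so the factor $4$ in the hypothesis is slack.

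The paper gives no proof of its own here; it imports the lemma from \citep{Wang2024Analysis}. Your word expansion with the $\binom{k}{j}$ count is, however, the same technique the paper uses in Lemma~\ref{lem:error-evolution-normalized}, so this is essentially the intended argument.
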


\begin{lemma}[Normalization perturbation for multi-class CSBM]
\label{lem:multiclass-normalization}
In the multi-class CSBM, let
\[
\bar d=\frac{pn}{L}+\frac{(L-1)qn}{L}
\]
and assume \(\bar d\ge C\log n\).  Then, with probability at least
\(1-n^{-\Omega(1)}\),
\[
\|\widehat A-\widetilde A\|
\le
C\sqrt{\frac{\log n}{\bar d}}.
\]
\end{lemma}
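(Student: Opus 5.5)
I will mirror the proof of Lemma~\ref{lemma:1}: establish uniform degree concentration, then split \(\widehat A-\widetilde A\) into an adjacency part and a stationary-correction part, and bound each through row sums via Theorem~\ref{theorem:bound-norm-matrix}. The only multi-class change is the expected degree. For \(v\in\mathcal C_\ell\), \(d_v\) is a sum of independent Bernoulli variables with \(\mathbb E d_v=\bar d+O(p)\): there are \(n/L-1\) intra-class terms of mean \(p\) and \((L-1)n/L\) inter-class terms of mean \(q\). Since \(p>q\) need not hold here, I will only use \(\bar d=\mathbb E d_v+O(1)\), which holds because each Bernoulli mean is at most \(1\).

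Set \(\varepsilon:=K\sqrt{\log n/\bar d}\), so \(\varepsilon\le 1/2\) once the constant in \(\bar d\ge C\log n\) is large enough. By Theorem~\ref{theorem:bound-sum-bernoulli} with \(t=\varepsilon\bar d/2\),
\[
\mathbb P\bigl(|d_v-\mathbb E d_v|>\varepsilon\bar d/2\bigr)\le 2\exp(-c\varepsilon^2\bar d)=2n^{-cK^2}.
\]
A union bound over \(v\) then gives, with probability \(1-n^{-\Omega(1)}\), that \(d_v\in[\bar d(1-\varepsilon),\bar d(1+\varepsilon)]\) for all \(v\). The \(O(1)\) offset is absorbed because \(1/\bar d\le\varepsilon\). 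Averaging gives the same interval for \(d\), and \(\sum_w d_w\in[n\bar d(1-\varepsilon),n\bar d(1+\varepsilon)]\).

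On this event, write \(\widehat A-\widetilde A=M_1-M_2\) with
\[
(M_1)_{uv}=A_{uv}\Bigl(\tfrac{1}{\sqrt{d_ud_v}}-\tfrac1d\Bigr),\qquad (M_2)_{uv}=\tfrac{\sqrt{d_ud_v}}{\sum_w d_w}-\tfrac1n .
\]
Both \(1/\sqrt{d_ud_v}\) and \(1/d\) lie in \(\bar d^{-1}[1-C\varepsilon,1+C\varepsilon]\), so each nonzero entry of \(M_1\) is at most \(C\varepsilon/\bar d\). Row \(u\) has \(d_u\le 2\bar d\) nonzeros, so its absolute row sum is at most \(C'\varepsilon\). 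Similarly \(|(M_2)_{uv}|\le C\varepsilon/n\), so its row sums are at most \(C\varepsilon\). Both matrices are symmetric, so Theorem~\ref{theorem:bound-norm-matrix} applies; one can also use \(\|M\|\le\|M\|_{\infty\to\infty}\) for symmetric \(M\). Adding the two bounds gives \(\|\widehat A-\widetilde A\|\le C\varepsilon=CK\sqrt{\log n/\bar d}\), and \(K\) is absorbed into \(C\).

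The step needing the most care is the uniform degree concentration. The density assumption is stated in terms of \(\bar d\), not \(p\), so the Chernoff exponent has to be expressed through \(\mathbb E d_v\asymp\bar d\). This is also why the conclusion scales as \(\sqrt{\log n/\bar d}\) rather than \(\sqrt{\log n/(np)}\). Beyond that, the argument is a direct transcription of Lemma~\ref{lemma:1}, with constants depending on \(L\) permitted.
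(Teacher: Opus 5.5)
Your proposal is correct and follows essentially the same route as the paper's proof. Both use uniform Chernoff degree concentration at scale \(\varepsilon\asymp\sqrt{\log n/\bar d}\), then split \(\widehat A-\widetilde A\) into the adjacency part and the stationary-correction part, and bound each by symmetric row sums. Your explicit absorption of the offset \(\bar d-\mathbb E d_v=p\le 1\), and your remark that no \(p>q\) assumption is needed, make the same argument slightly more careful.
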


\begin{proof}
Let
\[
\epsilon=C_0\sqrt{\frac{\log n}{\bar d}} .
\]
For every vertex \(v\),
\[
\mathbb E d_v=(n/L-1)p+(n-n/L)q=\bar d-p,
\]
so \(\mathbb E d_v=\bar d+O(p)\).  Since \(\bar d\ge C\log n\), Chernoff's
inequality and a union bound give, with probability \(1-n^{-\Omega(1)}\),
\[
d_v\in[\bar d(1-\epsilon),\bar d(1+\epsilon)]
\quad\text{for all }v,
\qquad
d=\frac1n\sum_v d_v\in[\bar d(1-\epsilon),\bar d(1+\epsilon)]
\]
after increasing \(C_0\).

On this event,
\[
\widehat A-\widetilde A
=
\left(D^{-1/2}AD^{-1/2}-\frac1dA\right)
-
\left(
\frac{D^{1/2}\one\one^\top D^{1/2}}{\one^\top D\one}
-\frac1n\one\one^\top
\right).
\]
For the first term,
\[
\left|
\frac1{\sqrt{d_ud_v}}-\frac1d
\right|
\le
\frac{C\epsilon}{\bar d}.
\]
Since \(d_u\le 2\bar d\), every row has absolute row sum at most \(C\epsilon\).
The matrix is symmetric, hence its operator norm is at most \(C\epsilon\).
For the second term,
\[
\left|
\frac{\sqrt{d_ud_v}}{\sum_wd_w}-\frac1n
\right|
\le
\frac{C\epsilon}{n},
\]
so every row has absolute row sum at most \(C\epsilon\), and again the operator
norm is at most \(C\epsilon\).  Combining the two estimates proves the lemma.
\end{proof}

\section{Proofs of Main Theorems}
\label{appendix:proofs}

\subsection{Proof of Theorem \ref{theorem:exactly-recovery-k}}
\label{sec:proof-4.3-detailed-final}

\begin{proof}
Let
\[
s_i=\begin{cases}n^{-1/2},&i\in S,\\-n^{-1/2},&i\in T,
\end{cases}
\qquad
\eta=\frac{(p-q)n}{2d},
\qquad
R'=\widehat A-\eta ss^\top .
\]
Then \(\widehat A=\eta ss^\top+R'\).  By Lemma~\ref{prop:degree-concentration}, and by choosing the constant hidden in
\(p\ge\Omega_B(\log^B n/n)\) sufficiently large, with probability
\(1-n^{-\Omega(1)}\),
\[
|\eta-\gamma|\le C\frac{\sqrt{\log n}}{n\sqrt p},
\qquad
\|R'\|\le C\sqrt{\frac{\log n}{np}}=:\theta.
\]
By the graph-signal assumption, with the constant hidden in \(\Omega_B(\cdot)\)
chosen sufficiently large, \(k\theta/\gamma\) is smaller than the absolute
constant required below.  Increasing constants if necessary gives
\(\eta\asymp\gamma\) and \(\theta/\eta\le 2\theta/\gamma\).

By Lemma~\ref{lemma:csbm-reduction}, there exists \(w\in\mathbb R^m\) such that
\[
Xw=s+g,
\qquad
 g\sim N(0,\sigma'^2 I_n),
\qquad
\sigma'=\frac{2\sigma}{\sqrt n\,\|\mu-\nu\|}.
\]
Hence \(\widehat A^kXw=\widehat A^k(s+g)\).  Lemma~\ref{lem:krylov-infty} gives, on the same high-probability graph event,
\[
|e_u^\top R'^a s|\le \frac{\theta^a}{\sqrt n},
\qquad \forall u\in[n],\ 1\le a\le k,
\]
after increasing \(\theta\) by an absolute factor.  The use of
Lemma~\ref{lem:krylov-infty} is legitimate because the constant hidden in
\(k=O_B(\log n)\) is chosen no larger than the constant \(c\) in that lemma.
Applying Lemma~\ref{lem:error-evolution-normalized} therefore yields
\[
\left\|\eta^{-k}\widehat A^k(s+g)-s\right\|_\infty
\le
C\left(
\frac{k\theta/\eta}{\sqrt n}
+\frac{\sigma'\sqrt{\log n}}{\sqrt n}
+\left(\frac{\theta}{\eta}\right)^k\sigma'\sqrt{\log n}
\right).
\]
The first term is at most \(1/(10\sqrt n)\) by the graph-signal assumption.
Since \(\eta\asymp\gamma\), the constant \(C_B\) in the SNR condition is chosen
larger than the absolute constants hidden in \(\theta/\eta\).  The two remaining
terms are therefore at most \(1/(10\sqrt n)\) each by the assumed lower bound on
\(\|\mu-\nu\|/\sigma\), together with
\[
\sigma'=\frac{2\sigma}{\sqrt n\,\|\mu-\nu\|}.
\]
Thus
\[
\left\|\eta^{-k}\widehat A^k(s+g)-s\right\|_\infty<\frac{1}{2\sqrt n}.
\]
Every coordinate of \(s\) equals \(\pm n^{-1/2}\), so the signs of \(\eta^{-k}\widehat A^k(s+g)\), and hence of \(\widehat A^k(s+g)\), agree with the true labels.  Since \(\widehat A^kXw=\widehat A^k(s+g)\), the hyperplane with normal vector \(w\) and threshold zero separates the rows of \(\widehat A^kX\).  A union bound over the graph and Gaussian events gives failure probability \(n^{-\Omega(1)}\).
\end{proof}

\subsection{Proof of Theorem \ref{theorem:multi-class-partial}}
\label{proof:theorem:multi-class-partial}
\begin{proof}
We work on the high-probability graph event on which
\[
\widetilde A=M+R,\qquad \|R\|\le \delta,
\]
from Lemma~\ref{lemma:multi-class-characterization}.  Because
\(L\) is fixed, the probability losses and constants in the multi-class
spectral estimates are absorbed into \(n^{-\Omega(1)}\) and the displayed
absolute constants.  By
Lemma~\ref{lem:multiclass-normalization}, on the same event after a union bound,
the degree-normalization perturbation satisfies
\[
E_{\rm norm}:=\widehat A-\widetilde A,
\qquad
\|E_{\rm norm}\|\le \epsilon .
\]
Hence
\[
\widehat A=M+H,\qquad H:=R+E_{\rm norm},\qquad \|H\|\le \delta+\epsilon .
\]
Since \(M\) has rank \(L-1\) and satisfies \(MU=\lambda U\), we also have
\(M^kU=\lambda^kU\).

Let \(X=U+G\), where \(G\) has independent \(N(0,\sigma^2)\) entries.  Then
\[
X^{(k)}-U
=
\lambda^{-k}\widehat A^kU-U+\lambda^{-k}\widehat A^kG .
\]
Using \(\|A+B\|_F^2\le 2\|A\|_F^2+2\|B\|_F^2\), it is enough to control the
signal term and the Gaussian term separately.

\medskip
\noindent
\textbf{Signal term.}
For the signal part,
\[
\lambda^{-k}\widehat A^kU-U
=
\lambda^{-k}(\widehat A^k-M^k)U .
\]
We decompose
\[
\widehat A^k-M^k
=
(\widehat A^k-\widetilde A^k)+(\widetilde A^k-M^k).
\]
Lemma~\ref{lemma:A^k} gives
\[
\left\|\lambda^{-k}(\widetilde A^k-M^k)\right\|
\le
\frac{2k\delta}{|\lambda|}.
\]
For the normalization part, use the telescoping identity
\[
\widehat A^k-\widetilde A^k
=
\sum_{t=0}^{k-1}\widehat A^t(\widehat A-\widetilde A)\widetilde A^{k-1-t}.
\]
Moreover,
\[
\|\widehat A\|\le |\lambda|+\delta+\epsilon,
\qquad
\|\widetilde A\|\le |\lambda|+\delta .
\]
Therefore
\[
\left\|\lambda^{-k}(\widehat A^k-\widetilde A^k)\right\|
\le
\frac{k\epsilon}{|\lambda|}
\left(1+\frac{\delta+\epsilon}{|\lambda|}\right)^{k-1}
\le
C\frac{k\epsilon}{|\lambda|},
\]
where the last inequality uses \(k(\delta+\epsilon)/|\lambda|\le C_0^{-1}\).
Consequently,
\[
\left\|\lambda^{-k}(\widehat A^k-M^k)\right\|
\le
C\frac{k(\delta+\epsilon)}{|\lambda|}.
\]
This bound is dominated by
\[
C\left[
\left(\frac{\epsilon}{|\lambda|}\right)^k
+2k\frac{\delta+\epsilon}{|\lambda|}
\right],
\]
so
\[
\left\|\lambda^{-k}\widehat A^kU-U\right\|_F^2
\le
C\left[
\left(\frac{\epsilon}{|\lambda|}\right)^{2k}
+4k^2\left(\frac{\delta+\epsilon}{|\lambda|}\right)^2
\right]\|U\|_F^2 .
\]

\medskip
\noindent
\textbf{Gaussian term.}
Conditional on the graph, set \(B:=\lambda^{-k}\widehat A^k\).  Since \(G\)
has independent Gaussian columns \(g_1,\dots,g_m\),
\[
\|BG\|_F^2=\sum_{r=1}^m g_r^\top B^\top B g_r .
\]
A standard Gaussian quadratic-form concentration bound gives, with probability
at least \(1-n^{-\Omega(1)}\),
\[
\|BG\|_F^2
\le
C\sigma^2 m\log n\, \operatorname{Tr}(B^\top B).
\]
Here the assumption \(m\le n^{C_m}\) allows the Gaussian tail probability to be
union bounded over the \(m\) feature coordinates while keeping a \(\log n\)
factor.
Because \(\widehat A\) is symmetric,
\[
\operatorname{Tr}(B^\top B)
=
\frac{1}{|\lambda|^{2k}}\operatorname{Tr}(\widehat A^{2k}).
\]
Since \(M\) is symmetric with rank \(L-1\) and \(\|M\|=|\lambda|\), Weyl's
inequality for singular values gives
\[
s_j(\widehat A)\le s_j(M)+\|H\| .
\]
More explicitly, for \(1\le j\le L-1\),
\[
s_j(\widehat A)\le |\lambda|+\|H\|\le |\lambda|+\delta+\epsilon,
\]
while the rank inequality
\[
s_{j+L-1}(M+H)\le s_j(H)
\]
implies \(s_j(\widehat A)\le \delta+\epsilon\) for every \(j\ge L\).
Thus at most \(L-1\) singular values are bounded by
\(|\lambda|+\delta+\epsilon\), and all remaining singular values are bounded by
\(\delta+\epsilon\).  Because \(\widehat A\) is symmetric,
\(\operatorname{Tr}(\widehat A^{2k})=\sum_{j=1}^n s_j(\widehat A)^{2k}\).
Hence
\[
\frac{1}{|\lambda|^{2k}}\operatorname{Tr}(\widehat A^{2k})
\le
\left(1+\frac{\delta+\epsilon}{|\lambda|}\right)^{2k}(L-1)
+n\left(\frac{\delta+\epsilon}{|\lambda|}\right)^{2k}.
\]
Thus
\[
\left\|\lambda^{-k}\widehat A^kG\right\|_F^2
\le
C\left[
\left(1+\frac{\delta+\epsilon}{|\lambda|}\right)^{2k}(L-1)
+n\left(\frac{\delta+\epsilon}{|\lambda|}\right)^{2k}
\right]\sigma^2m\log n .
\]

\medskip
\noindent
\textbf{Counting bad vertices.}
Let \(n_e\) be the number of vertices satisfying
\(\|x_i^{(k)}-\mu_{\ell(i)}\|\ge \Delta/2\).  Then
\[
n_e\frac{\Delta^2}{4}
\le
\|X^{(k)}-U\|_F^2 .
\]
Combining the two bounds above and absorbing absolute constants gives
\begin{align*}
n_e
\le
O\Bigg(
&\left[
\left(\frac{\epsilon}{|\lambda|}\right)^{2k}
+4k^2\left(\frac{\delta+\epsilon}{|\lambda|}\right)^2
\right]\frac{\|U\|_F^2}{\Delta^2}
\\
&+
\left[
\left(1+\frac{\delta+\epsilon}{|\lambda|}\right)^{2k}(L-1)
+n\left(\frac{\delta+\epsilon}{|\lambda|}\right)^{2k}
\right]\frac{\sigma^2m\log n}{\Delta^2}
\Bigg).
\end{align*}
Every vertex outside this exceptional set is within distance \(\Delta/2\) of
its true class center, and hence is closer to its own center than to any other
class center.  This proves the claimed partial recovery bound.
\end{proof}

\end{document}

%% file: math_commands.tex
\usepackage{amsmath,amsfonts,bm}

\def\eqref#1{equation~\ref{#1}}

\def\plaineqref#1{\ref{#1}}

\def\1{\bm{1}}

\def\eps{{\epsilon}}

\DeclareMathAlphabet{\mathsfit}{\encodingdefault}{\sfdefault}{m}{sl}
\SetMathAlphabet{\mathsfit}{bold}{\encodingdefault}{\sfdefault}{bx}{n}

\newcommand{\E}{\mathbb{E}}

